\newtheorem{theorem}{Theorem}[section]
\newtheorem{lemma}[theorem]{Lemma}
\newtheorem{proposition}[theorem]{Proposition}
\newtheorem{corollary}[theorem]{Corollary}
\newtheorem{definition}[theorem]{Definition}
\newtheorem{example}[theorem]{Example}
\newtheorem{remark}[theorem]{Remark}
\numberwithin{figure}{section}
\numberwithin{equation}{section}
\newcommand{\argmin}[1]{\underset{#1}{\operatorname{arg}\operatorname{min}}\;}
\newcommand{\vertiii}[1]{{\left\vert\kern-0.25ex\left\vert\kern-0.25ex\left\vert #1 
    \right\vert\kern-0.25ex\right\vert\kern-0.25ex\right\vert}}
\begin{document}

\title{{\bf A Structure-Preserving Kernel Method for Learning\\ 
Hamiltonian Systems}}

\author{Jianyu Hu$^{1}$, Juan-Pablo~Ortega$^{1}$, and Daiying Yin$^{1}$}


\maketitle

\begin{abstract}
A structure-preserving kernel ridge regression method is presented that allows the recovery of nonlinear Hamiltonian functions out of datasets made of noisy observations of Hamiltonian vector fields. The method proposes a closed-form solution that yields excellent numerical performances that surpass other techniques proposed in the literature in this setup. From the methodological point of view, the paper extends kernel regression methods to problems in which loss functions involving linear functions of gradients are required and, in particular, a differential reproducing property and a Representer Theorem are proved in this context. The relation between the structure-preserving kernel estimator and the Gaussian posterior mean estimator is analyzed. A full error analysis is conducted that provides convergence rates using fixed and adaptive regularization parameters. The good performance of the proposed estimator together with the convergence rate is illustrated with various numerical experiments.
\end{abstract}

\makeatletter
\addtocounter{footnote}{1} \footnotetext{Jianyu Hu, Juan-Pablo Ortega, and Daiying Yin are with the Division of Mathematical Sciences, School of Physical and Mathematical Sciences, Nanyang Technological University, Singapore. Their email addresses are {\texttt{Jianyu.Hu@ntu.edu.sg}}, {\texttt{Juan-Pablo.Ortega@ntu.edu.sg}}, and {\texttt{YIND0004@e.ntu.edu.sg}}, respectively.}
\makeatother

\tableofcontents

\section{Introduction}

Hamiltonian systems are essential tools to model physical systems \cite{Abraham1978, Marsden1994, arnol2013mathematical}. In the simplest case in which the phase space is Euclidean and is endowed with a constant symplectic form, Hamiltonian systems are determined by a scalar-valued Hamiltonian function $H:\mathbb{R}^{2d}\longrightarrow\mathbb{R}$, $ d\in \mathbb{N} $, and when using the so-called canonical Darboux coordinates, the corresponding dynamics is governed by the well-known {\bf Hamilton's equations}
\begin{equation}
\begin{aligned}\label{ham-sys}
\dot{\mathbf{z}}(t) = J \nabla H(\mathbf{z}(t)),
\end{aligned}
\end{equation}
where $\mathbf{z}=(\mathbf{q}^{\top},\mathbf{p}^{\top})^{\top}\in\mathbb{R}^{2d}$ is the phase space vector comprising the positions and the momenta of the system, and $J$ is the canonical symplectic matrix. 
Modern technology has made collecting trajectory data directly from physical systems increasingly feasible. This motivates us to address the fundamental inverse problem: {\it determining the underlying Hamiltonian function and the governing Hamilton's equations from trajectory data}.

Machine learning-based methods have become popular and effective approaches to tackling this problem. A straightforward strategy that has been proposed is to directly learn the Hamiltonian vector field as a map that assigns each point in the phase space to the Hamiltonian vector field at that point \cite{racca2021automatic,zhang2022gfinns}. However, this method is not structure-preserving because there is no guarantee that the learned vector field comes from a Hamiltonian function in the presence of estimation and approximation errors. An improved version of the above is the notion of Hamiltonian Neural Networks (HNN), for which the main idea is to model the Hamiltonian function instead of its induced vector field so that equation (\ref{ham-sys}) will produce a genuine Hamiltonian vector field by construction, see \cite{bertalan2019learning,david2023symplectic,greydanus2019hamiltonian,han2021adaptable,toth2019hamiltonian}. 
In addition to learning the Hamiltonian function, there are other emerging structure-preserving methods, including Lagrangian neural networks \cite{cranmer2020lagrangian}, symplectic neural networks \cite{jin2020sympnets}, symplectic recurrent neural networks \cite{chen2019symplectic}, symplectic reversible neural networks \cite{valperga2022learning}, a method based on learning the generating function \cite{chen2021data}, and symmetry-preserving method \cite{Toshev2023, vaquero2023symmetry}. In \cite{RCSP1}, a mathematical characterization of linear port-Hamiltonian systems is studied and applied to learning, which preserves the port-Hamiltonian structure by construction. Nevertheless, for a general nonlinear system, there is an apparent lack of analysis in the literature that has to do with the estimation error bound for such learning problems due to the difficulty of finding a proper framework to perform analysis. We would like to highlight the difference between the above-mentioned inverse problem and the forward problem of producing solutions to a known governing equation. Physics-informed neural networks (PINNs) \cite{RAISSI2019686} have brought numerical success to learning the solutions of physical systems described by partial differential equations, whereas the convergence of the learned solution has only been proved in special cases \cite{Shin2020OnTC}. In the Hamiltonian case, however, the governing equation is simply an ODE, for which a variational integrator theory has been well-developed \cite{marsden_west_2001} to obtain the solutions. The analog of PINNs in the Hamiltonian case has been studied in Chen et al. \cite{chen2023data}, where a convergence result of the empirical loss was shown for the forward problem of solving the Euler-Lagrange equations via a physics-informed neural network \cite{cuomo2022scientific,yang2018physics}. In general, theoretical sample estimation error bounds are rarely presented for the inverse learning problem, that is, the problem of learning the Hamiltonian function. 

The two methodologies at the core of this paper are Gaussian processes (GP) and kernel ridge regressions. These methods have been shown to be practically powerful and theoretically sound and have found many practical applications in dealing with nonlinear phenomena \cite{williams2006gaussian}. They are widely used in modeling physical systems and electrical, biological, and chemical engineering problems. The use of kernel methods for the data-based learning of generic vector fields has been pioneered in \cite{bouvrie2017kernel, Bouvrie2017} (see also \cite{hamzi2021learning, pmlr-v202-hou23c} and references therein). Kernel methods also naturally appear in the related field of reservoir computing \cite{hermans:rkhs, RC25}. 

In kernel ridge regression applications, a choice of kernel and regularization parameters is needed. It is a well-known fact that if the Tikhonov regularization parameter satisfies a certain relation with the noise level of the Gaussian process, then the posterior mean of the Gaussian process regression coincides with the estimator given by the kernel ridge regression (see \cite{huszar2012optimally,kimeldorf1970correspondence,o1991bayes} and  \cite{Kanagawa2018} for a review). This equivalence reveals that the probabilistic structure of the Gaussian posterior mean has a nontrivial overlap with the functional structure (that is, as an operator) of the kernel ridge regression, which makes possible the analysis of the estimation error bounds of both together. The rate of pointwise convergence of a Gaussian process regression without noise was studied in \cite{seleznev1999certain,stein1990uniform,yakowitz1985comparison}.
Under the $L^2$ norm, the convergence upper rate of a Gaussian process regression without noise was established in \cite{Kanagawa2018,van2011information,tuo2020kriging}, and in the presence of noise in \cite{lederer2019uniform,wang2022gaussian}.
It is worthwhile to point out that the above literature works for the learning of a generic function with noisy additive data, that is, learning the function $f$ in a model of the type $y = f (x)+\varepsilon$, where the problem of structure preservation is not present. 
In \cite{pfortner2022physics}, a physics-informed Gaussian process regression was proposed for the forward problem of learning solutions of linear PDEs. In contrast, for the inverse problem of learning \eqref{ham-sys}, the task is to recover the scalar Hamiltonian function from either trajectory sampling points in the phase space or the Hamiltonian vector fields at these points. In other words, the task is to recover the Hamiltonian function $H$ from noisy observations of the right-hand side of the equation \eqref{ham-sys}. 
The presence of a gradient in this equation poses technical difficulties and, simultaneously, is the key to the notion of structure preservation. 

{\it This paper proposes a structure-preserving kernel ridge regression method to recover a potentially high-dimensional and nonlinear Hamiltonian function from Hamiltonian vector fields observed in the presence of stochastic noise}.
Our method has the advantage of {\it retaining structure preservation} in the kernel ridge regression solution while simultaneously {\it producing excellent numerical performances}. This important feature is due, in part, to the availability in this context of a generalized Representer Theorem (we call it the {\it Differential Representer Theorem}) that convexifies the estimation problem and circumvents the need to solve convoluted optimization problems, as it is the case for, e.g., HNNs. Moreover, we shall see that the equivalence of the Gaussian posterior mean estimator and the structure-preserving kernel estimator hold in the structure-preserving setup under the same condition as in the general case. 


\paragraph{Conventions and summary of the main results.} The main purpose of this paper is to learn in a structure-preserving fashion the unknown Hamiltonian function $H:\mathbb{R}^{2d}\longrightarrow\mathbb{R}$ of the system \eqref{ham-sys} out of realizations of random samples containing $N$ noisy observations of the Hamiltonian vector field. More explicitly, the observed data consists of $N$ independent random samples of states in the phase space, as well as noisy observations of the Hamiltonian vector fields at the corresponding $N$ states. 

We shall write the associated random samples as: 
\begin{align}
\label{tra-dat}
\begin{split}
\mathbf{Z}_N&:= \mathrm{Vec}\left(\mathbf{Z}^{(1)}|\cdots |\mathbf{Z}^{(N)}\right)\in \mathbb{R}^{2dN}, \\
\mathbf{X}_{\sigma^2,N}&:=\mathrm{Vec}\left(\mathbf{X}_{\sigma^2}^{(1)}|\cdots|\mathbf{X}_{\sigma^2}^{(N)}\right)\in \mathbb{R}^{2dN},
\end{split}
\end{align}
where $\mathbf{Z}^{(n)} \in  \mathbb{R}^{2d}$ is the phase space vector containing the position and the conjugate momenta of the system, and $\left\{\mathbf{Z}^{(1)},\cdots ,\mathbf{Z}^{(N)}\right\} $ are  IID random variables with the same distribution $\mu_{\mathbf{Z}}$. The symbol `$\mathrm{Vec}$' stands for the vectorization of the corresponding matrices and $\mathbf{X}^{(n)}_{\sigma^2} \in  \mathbb{R}^{2d}$ denotes a noisy vector field value at $\mathbf{Z}^{(n)}$, that is, $\mathbf{X}^{(n)}_{\sigma^2}=J\nabla H(\mathbf{Z}^{(n)})+\bm{\varepsilon}^{(n)}$, where $\bm{\varepsilon}^{(n)}$ are IID $\mathbb{R}^{2d} $-valued random variables with mean zero and variance $\sigma^2$. We shall denote by $\mathbf{z}^{(n)}$ and $\mathbf{x}_{\sigma^2}^{(n)}$ the realizations of the random variables  $\mathbf{Z}^{(n)}$ and $\mathbf{X}_{\sigma^2
}^{(n)}$, respectively. We then denote the collection of realizations as
\begin{align}
\label{realization rvs}
\begin{split}
\mathbf{z}_N&:= \mathrm{Vec}\left(\mathbf{z}^{(1)}|\cdots |\mathbf{z}^{(N)}\right)\in \mathbb{R}^{2dN}, \\
\mathbf{x}_{\sigma^2,N}&:=\mathrm{Vec}\left(\mathbf{x}_{\sigma^2}^{(1)}|\cdots |\mathbf{x}_{\sigma^2}^{(N)}\right)\in \mathbb{R}^{2dN}.
\end{split}
\end{align}
In the sequel, if $f:\mathbb{R}^{2d}\to \mathbb{R}^s$ is a function, we then shall denote the value $\mathrm{Vec}\left(f(\mathbf{Z}^{(1)})|\cdots|f(\mathbf{Z}^{(N)})\right)\in \mathbb{R}^{sN}$ by $f(\mathbf{Z}_N)$.

Now, to address the above-mentioned learning problem, we propose a structure-preserving kernel ridge regression method. In contrast to traditional kernel ridge regressions, our approach guarantees that the learned vector field is indeed Hamiltonian. 
Structure-preservation is achieved by searching for vector fields $\mathbf{f}:\mathbb{R}^{2d}\to\mathbb{R}^{2d}$ with Hamiltonian form, that is, $\mathbf{f}_h:=X_h=J\nabla h$, where $h:\mathbb{R}^{2d} \longrightarrow \mathbb{R}$ is an element of the reproducing kernel Hilbert space (RKHS) $\mathcal{H}_K$ associated to a Mercer kernel map $K:\mathbb{R}^{2d} \times  \mathbb{R}^{2d}\longrightarrow \mathbb{R}$ (all these concepts are carefully defined later on). 
More precisely, we will be studying the following optimization problem 
\begin{align}\label{str-min}
 \widehat{h}_{\lambda,N}:=\mathop{\arg\min}\limits_{h\in\mathcal{H}_K} \frac{1}{N}\sum_{n=1}^{N} \left\|X_h(\mathbf{Z}^{(n)})-\mathbf{X}^{(n)}_{\sigma^2}\right\|^2 + \lambda\|h\|_{\mathcal{H}_K}^2,
\end{align}
where $X_h=J\nabla h$ and $\lambda\geq0$ is a Tikhonov regularization parameter.
We call the solution $\widehat{h}_{\lambda,N}$ of the optimization problem  \eqref{str-min} the structure-preserving kernel estimator of the Hamiltonian function. We now summarize the main achievements in connection with the structure-preserving kernel ridge regression approach introduced in the paper, and we put them in relation to its structure.

\begin{enumerate}[leftmargin=*]
\item   In Section \ref{RKHS and Gaussian process regression}, we prove a reproducing property of differentiable Mercer-like kernels on
unbounded sets of the Euclidean space, which is a generalization of similar results proved for compact \cite{novak2018reproducing} and bounded \cite{ferreira2012reproducing} underlying spaces. This property is contained in  Theorem \ref{Par-Rep}, and we call it the {\bf differential reproducing property}; this result, in particular, enables us to embed the RKHS $\mathcal{H}_K $ into the space $C_b^s(\mathbb{R}^{2d})$ ($s\geq 1$)  when the corresponding kernel $K\in C_b^{2s+1}(\mathbb{R}^{2d}\times\mathbb{R}^{2d})$. The symbol $C_b^{s}(\mathcal{X})$ denotes, roughly speaking, the set of bounded functions that exhibit $s$-order partial derivatives which are also bounded on $\mathcal{X}$.
Therefore, if the kernel function $K $ of the RKHS $\mathcal{H}_K $ is regular enough, then the functions in $\mathcal{H}_K$ will be differentiable. Thus the learning inverse problem \eqref{str-min} is well-defined. The condition $K\in C_b^{2s+1}(\mathbb{R}^{2d}\times\mathbb{R}^{2d})$, as it will be seen, is very mild and is satisfied, for example, by the standard Gaussian kernel for any $s\geq 1$. Several interesting byproducts of the embedding theorem are also discussed. 

\item  In Section \ref{Structure-preserving kernel ridge regression}, we point out that if choosing the noise $\bm\varepsilon^{(n)}$ as IID Gaussian random variables, the standard condition existing in the literature \cite{feng2021learning,Kanagawa2018} under which the Gaussian process posterior mean coincides with the kernel ridge regression estimator, namely,
\begin{equation}\label{gau-ord}
    \lambda=\frac{\sigma^2}{N},
\end{equation}
leads to the same conclusion even in the presence of the gradient in our setup. As such, when one incorporates the gradient into the structure-preserving learning scheme, the link between the operator and the probabilistic representation of the estimator can be established just as if the gradient were not involved. Furthermore, we study a Gram-like matrix that involves derivatives of the kernel function (we call it the {\it differential Gram matrix})  which appears in the expression of the Gaussian posterior and in the solution of the kernel ridge regression. An important consequence of our work is that the differential Gram matrix is still positive semidefinite, which ensures that the structure-preserving kernel estimator admits a unique solution that can be written using a closed-form expression. By inspecting the estimator formula, we show that the estimator (at least in the case in which it coincides with the Gaussian posterior mean) can be easily adapted to an online learning scenario to avoid computing the inverse of a possibly large matrix iteratively.

\item   In Section \ref{Convergence analysis and error bounds}, we derive convergence results for the structure-preserving kernel estimator to the ground-truth Hamiltonian under the RKHS norm $\left\|\cdot \right\|_{\mathcal{H}_K} $, which is, on compact sets, strictly stronger than the $L^2$ and $L^{\infty}$ norms. The assumption that the true Hamiltonian $H \in \mathcal{H}_K$ may not be very restrictive if the kernel $K$ is universal (see later at the end of Section \ref{RKHS: Preliminaries}).
The convergence results are derived for both a fixed and a dynamical (that is, adaptive with respect to the sample size $N$) Tikhonov regularization parameter $\lambda$. 

\begin{description}[leftmargin=*]
\item [(i)] For a fixed $\lambda$, we apply the so-called $\Gamma$-convergence technique \cite{dal2012introduction} to show  convergence (see Theorem \ref{Sam-Con}) of the sampling error. In this case, it is not possible to explicitly obtain a convergence rate. However, by fixing $\lambda$ and letting the sample size $N$ to be sufficiently large, it can be shown the reconstruction error $\left\| \widehat{h}_{\lambda,N}-H\right\|_{\mathcal{H}_K}$ can be made arbitrarily small.

\item [(ii)] For a dynamical (adaptive with $N$) regularization parameter $\lambda$ that satisfies $\lambda\propto N^{-\alpha}$, we obtain the upper convergence rate (see Theorems \ref{Sam-Err} and \ref{Tot-Err}) 
\begin{align}\label{tot-rat}
\left\|\widehat{h}_{\lambda,N}-H\right\|_{\mathcal{H}_K} \lesssim N^{-\min\{\alpha\gamma, \frac{1}{2}(1-3\alpha)\}},   
\end{align}
for $\alpha\in(0,\frac{1}{3})$ and $\gamma\in(0,1)$, where $\gamma$ is a regularity parameter associated to the so-called {\it source condition} defined in Section \ref{Convergence analysis and error bounds} which is slightly different from the literature \cite{bauer2007regularization,feng2021learning}. Furthermore, in the presence of the so-called {\it coercivity condition} (see  \eqref{coercivity} later on in the text; this is a modification of the condition with the same name in \cite{feng2021learning}), we can improve this upper convergence rate (see Corollary \ref{Sam-Err2} and Theorem \ref{Tot-Err}) to
\begin{align} \label{tot-rat2}
\left\|\widehat{h}_{\lambda,N}-H\right\|_{\mathcal{H}_K} \lesssim N^{-\min\{\alpha\gamma, \frac{1}{2}(1-2\alpha)\}},    
\end{align}
with $\alpha\in(0,\frac{1}{2})$ and $\gamma\in(0,1)$. Note that in \eqref{gau-ord}, Gaussian process regression corresponds to an order of $\alpha=1$, and hence is not contained in these convergence results. We investigate the convergence analysis via numerical experiments in Section \ref{Numerical experiments}.
We stress that the embedding theorem (Theorem \ref{Par-Rep}) indicates that $\mathcal{H}_K \hookrightarrow C_b^s(\mathbb{R}^{d})$, and therefore the convergence above can also be stated as
\begin{align*}
\left\|\widehat{h}_{\lambda,N}-H\right\|_{C_b^s} \lesssim N^{-\min\{\alpha\gamma, \frac{1}{2}(1-3\alpha)\}}\quad \mathrm{or}\quad N^{-\min\{\alpha\gamma, \frac{1}{2}(1-2\alpha)\}}.  
\end{align*}
\end{description}

\item  In Section \ref{Numerical experiments}, we illustrate the methods presented in the paper by applying the structure-preserving kernel estimator to learn the Hamiltonian functions of various Hamiltonian dynamical systems, including some common Hamiltonian systems, a Hamiltonian system with a highly non-convex potential, as well as systems whose potential functions exhibit singularities. We perform a numerical convergence analysis to illustrate the theoretical bounds derived in Section \ref{Convergence analysis and error bounds}. We also compare the performance and training cost between our algorithm and that of the HNN approach, where the Hamiltonian function is modeled as a neural network and trained with gradient descent.
\end{enumerate}

\paragraph{Comparison with some related results in the literature}
In our structure-preserving scheme, given a scalar-valued function $H: \mathbb{R}^{2d} \rightarrow  \mathbb{R}$ defined on phase space, we consider vector fields of the form $X_H=J\nabla H$, and it is our goal to preserve this structure in the learning process. Our approach is related to the inspiring work in \cite{feng2021learning}, where $n$ interacting particle systems are considered and for which the Hamiltonian function takes the  form 
\begin{align}\label{inter-par-Ham}
H(\mathbf{q},\mathbf{p}) = \sum_{i=1}^n  \frac{1}{2}\|\mathbf{p}_i\|^2 -\frac{1}{2n}\sum_{1\leq i<j\leq n}\Phi(\|\mathbf{q}_i-\mathbf{q}_j\|),
\end{align}
where  $\Phi:\mathbb{R}_+\rightarrow \mathbb{R}$ is the interacting potential satisfying $\Phi'(r)=\phi(r)r$ for $r\in \mathbb{R}_+$. Unlike in \cite{feng2021learning}, where the function $\phi:\mathbb{R}_+\rightarrow \mathbb{R}$ is modeled as a Gaussian process, it is the full Hamiltonian function $H:\mathbb{R}^{2d}\rightarrow \mathbb{R}$ that we model as a Gaussian process in our approach. The increase in dimension of the input space that this strategy implies carries significant technical difficulties in its wake, which we will spell out now. 

First, using the specific form in \eqref{inter-par-Ham}, the work  \cite{feng2021learning} has at its disposal an analytical expression for the Hamiltonian vector field, whereas, for a general unknown Hamiltonian, the gradient operator has to be carried through the entire machine learning framework for the purpose of structure-preservation. Second, due to the presence of gradients in our framework, we have to ensure that the partial derivatives of functions in the RKHS exist and remain in the RKHS (see Section \ref{Reproducing properties of differentiable kernels
on unbounded sets}) so that the learning problem formulated in Section \ref{Structure-preserving kernel ridge regression} is well-posed. Third, the kernel ridge regression in \cite{feng2021learning} deals with a standard Gram matrix that involves exclusively kernel evaluations at phase space data points. This matrix which we denote by $K(\mathbf{Z}_N,\mathbf{Z}_N)$, is automatically positive semidefinite, and hence the regression estimator has a well-defined formula. In our case, however, the analog of the Gram matrix is given by $\nabla_{1,2}K(\mathbf{Z}_N,\mathbf{Z}_N)$, which we call the {\it differential Gram matrix} and we prove that is also positive semidefinite. Finally, we stress that our scheme is not limited to learning Hamiltonian systems but applies to any dynamical system whose vector fields are linear transformations of the gradient operator. Similar observations can be found in the literature in different contexts (see \cite[Section 4]{pfortner2022physics} and references therein).

Second, we want to clarify an important issue in relation to the convergence results that will be shown later on in Section \ref{Convergence analysis and error bounds}. In \cite{feng2021learning}, it is proved that in the presence of the so-called coercivity condition, the regularization parameter $\lambda$ being of order $N^{-\alpha}$, for $0<\alpha<\frac{1}{3}$, is a sufficient condition for the structure-preserving kernel estimator to converge to the ground truth. However, the convergence of the Gaussian process posterior mean as claimed in \cite{feng2021learning} does not hold true since the Gaussian process posterior mean coincides with the structure-preserving kernel estimator if and only if $\alpha=1$. In our framework, we obtain the same range of $\alpha$, that is $0<\alpha<\frac{1}{3}$, which leads to the convergence of the structure-preserving kernel estimator to the ground truth, but {\it without the need to assume the coercivity condition}, and hence allow a much wider choice of kernels than those that can be considered in \cite{feng2021learning,lu2019nonparametric}. Having said that, we shall see in Section \ref{Convergence rates using adaptive Tikhonov regularization parameters} that a modification of the {\it coercivity condition} in \cite{feng2021learning} can be used to enlarge the available scaling indices from $\alpha\in(0,\frac{1}{3})$ to $\alpha\in(0,\frac{1}{2})$ by using a more precise analysis. 

Finally, unlike Hamiltonian neural network approaches \cite{bertalan2019learning,david2023symplectic,greydanus2019hamiltonian,han2021adaptable,toth2019hamiltonian}, our scheme provides an explicit expression for the estimator of the Hamiltonian as the solution of a least-squares problem, which is automatically convex. The training algorithm has a closed-form solution, does not involve gradient descent, and hence, does not suffer from related problems and requires much less training time. Consequently, we expect our proposed method to perform better on small training datasets or when learning highly non-convex objective functions (see Section \ref{Numerical experiments} for illustrations). For large training datasets, the need to compute inverses of large matrices to obtain the structure-preserving kernel estimator can be handled by treating the dataset in an online fashion (see Section \ref{online learning}) as numerous algorithms of this type are available for the online solution of least squares problems. We also point out that even though numerous results exist on the universal approximation properties of artificial neural networks, the analysis of the estimation errors in related algorithms is, in practice, more elusive. In this paper, we can control the approximation and the estimation error simultaneously with the RKHS norm in the presence of noise (see Section \ref{Convergence analysis and error bounds}). To conclude, as we show in Proposition \ref{Dis-Con}, the RKHS framework also yields bounds on the flow prediction error using Gr\"onwall-like inequalities.

The numerical implementation can be found in this GitHub repository:
\url{https://github.com/Learning-of-Dynamic-Processes/LearningHamiltonianEuclideanSpace}.

\if
The remainder of this paper is structured as follows. 
In section \ref{RKHS and Gaussian process regression}, we recall some basics of the kernel method, present the reproducing properties of differentiable kernels and apply the Gaussian process regression to learning Hamiltonian vector fields \eqref{ham-sys}. 
In Section \ref{Structure-preserving kernel ridge regression}, we propose the structure-preserving kernel ridge regression method and show the equivalence between the Gaussian process posterior mean and the structure-preserving kernel regression estimator. 
In section \ref{Convergence analysis and error bounds}, we show the convergence result of the structure-preserving kernel estimator with both fixed and dynamical Tikhonov regularization parameters. Some numerical experiments are presented in Section \ref{Numerical experiments}, followed by a conclusion in Section \ref{Conclusion}. \fi

\section{RKHS and Gaussian process regression}
\label{RKHS and Gaussian process regression} 

This partially introductory section contains most of the results on reproducing kernel Hilbert spaces (RKHS), Gaussian process (GP) regression, and their relation with differentiability, which are needed in the rest of the paper. In Section \ref{RKHS: Preliminaries}, we recall basic definitions and examples of RKHS. Section \ref{Gaussian process regression} reviews Gaussian process regression and the corresponding posterior mean estimator and spells out the case of interest in this paper, that is, the estimation of the Hamiltonian function in \eqref{ham-sys} out of a finite number of observations of the system in phase space and noisy observations of the vector field. 
Finally, in Section \ref{Reproducing properties of differentiable kernels on unbounded sets}, we prove an important sufficient condition (Theorem \ref{Par-Rep}) that guarantees that the functions of an RKHS are differentiable and their derivatives still possess reproducing properties. We call this result the {\bf differential reproducing property}. This theorem generalizes to non-compact setups similar results that already exist for compact \cite{novak2018reproducing, zhou2008derivative} and bounded \cite{ferreira2012reproducing} underlying spaces, guarantees the existence of the gradient of functions in the RKHS, and provides a convenient and powerful way to perform mathematical manipulations on the derivatives that will be much used in the subsequent sections.

\subsection{RKHS: preliminaries} \label{RKHS: Preliminaries}
Let $\mathcal{X}$ be a nonempty set. A  {\bf Mercer kernel} on $\mathcal{X}$ is a positive semidefinite symmetric function $K:\mathcal{X}\times\mathcal{X}\to \mathbb{R}$. Positive semidefinite means that it satisfies 
\begin{align}\label{pos-def}
\sum_{i=1}^n\sum_{j=1}^n c_ic_jK(x_i,x_j)\geq 0,    
\end{align}
for any $x_1,\cdots,x_n\in\mathcal{X}$, $c_1,\cdots,c_n\in\mathbb{R}$, and any $n$. Property \eqref{pos-def} is equivalent to requiring that the Gram matrix $G:=[K(x_i,x_j)]_{i,j=1}^n$ is positive semidefinite for any  $x_1,\cdots,x_n\in\mathcal{X}$ and any given $n$. We list some kernels that are much used in practice. 

\begin{example}[{\bf Gaussian kernel}]
\label{gaussian kernel example}
\normalfont
Let $\mathcal{X} \subset \mathbb{R}^d$. For $\eta>0$, a Gaussian  kernel $K_{\eta}: \mathcal{X} \times \mathcal{X} \rightarrow \mathbb{R}$ is defined by
\begin{equation}
\label{gaussian kernel eta}
{K_{\eta}}\left(\mathbf{x}, \mathbf{y}\right)=\exp \left(-\frac{\left\|\mathbf{x}-\mathbf{y}\right\|^2}{\eta^2}\right), \quad \mathbf{x}, \mathbf{y} \in \mathcal{X}.
\end{equation}
\end{example}

\begin{example}[{\bf Sobolev kernel}]\label{Sobolev_kernell}
\normalfont
Again, let $\mathcal{X} \subset \mathbb{R}^d$. For $d,n\in \mathbb{N}_{+}$ and $n > d/2$, the Sobolev kernel $K_{d,n}$ has the following specific form \cite{novak2018reproducing}
\begin{equation}
\label{sobolev kernel expression}
K_{d, n}(\mathbf{x}, \mathbf{y})=\int_{\mathbb{R}^{d}} \frac{\prod_{j=1}^{d} \cos \left(2 \pi\left(x_j-y_j\right) u_j\right)}{1+\sum_{0<|\alpha| \leq n} \prod_{j=1}^{d}\left(2 \pi u_j\right)^{2 \alpha_j}} \mathrm{~d} u,
\end{equation}
for all $\mathbf{x}, \mathbf{y} \in \mathbb{R}^{d}$, where $x_j, y_j, u _j$, $j \in  \left\{1, \ldots , d\right\} $, are the components of $\mathbf{x}, \mathbf{y}, \mathbf{u}\in \mathbb{R}^{d}$, respectively, $\alpha \in {\mathbb R}^d $ is a vector with positive components and $|\alpha|= \sum_{j =1} ^d \alpha_j $. For $n=\infty$, the kernel is defined as
\begin{equation}\label{sob-inf}
K_{d, \infty}(\mathbf{x}, \mathbf{y})=\prod_{j=1}^{d} \frac{2}{\pi\left(x_j-y_j\right)^3}\left(\sin \left(x_j-y_j\right)-\left(x_j-y_j\right) \cos \left(x_j-y_j\right)\right) \quad \text { for all } \mathbf{x}, \mathbf{y} \in \mathbb{R}^{d}.
\end{equation} 
\end{example}

\noindent A Mercer kernel is the key element to define a {\bf reproducing kernel Hilbert space (RKHS)} as follows.
\begin{definition}[{\bf RKHS}]\label{Rkh-Def} Let $K:\mathcal{X}\times\mathcal{X}\to \mathbb{R}$ be a Mercer kernel on a nonempty set $\mathcal{X}\subseteq \mathbb{R}^d$. A Hilbert space $\mathcal{H}_K$ of real-valued functions on ${\cal X} $ endowed with the pointwise sum and pointwise scalar multiplication, and with inner product $\langle\cdot , \cdot \rangle_{\mathcal{H}_K}$ is called a reproducing kernel Hilbert space (RKHS) associated to $K$ if the following properties hold:
\begin{description}
\item [(i)]  For all $x\in\mathcal{X}$, we have that the function $K(x,\cdot)\in\mathcal{H}_K$.
\item [(ii)]  For all $x\in\mathcal{X}$ and for all $f\in\mathcal{H}_K$, the following reproducing property holds 
$$
f(x)=\langle f,K(x,\cdot)\rangle_{\mathcal{H}_K}.
$$
\end{description}
\end{definition}

The Moore-Aronszajn Theorem \cite{aronszajn1950theory} establishes that given a Mercer kernel $K$ on a set ${\cal X} $, there is a unique Hilbert space of real-valued functions $\mathcal{H}_K$ on ${\cal X} $ for which $K$ is a reproducing kernel, which allows us to talk about the RKHS $\mathcal{H}_K$ associated to $K$. The RKHS $\mathcal{H}_K$ is constructed as follows. Denote 
\begin{align*}
\mathcal{H}_{0}:=\mathrm{span}\left\{K(x,\cdot),x\in \mathcal{X}\right\}=\left\{f=\sum_{i=1}^nc_iK(x_i,\cdot)\mid n\in\mathbb{N},c_1,\cdots,c_n\in\mathbb{R},x_1,\cdots,x_n\in\mathcal{X}\right\}.
\end{align*}
The functions of the form $K _x:=K(x,\cdot): {\cal X} \longrightarrow \mathbb{R}$, $x\in \mathcal{X}$, are called the kernel sections of $K$. The kernel function can be used to define an inner product $\langle\cdot , \cdot \rangle_{{\mathcal H} _0}$ on $\mathcal{H}_{0} $ as the bilinear extension of the assignment $\langle K _x, K _y\rangle_{{\mathcal H} _0}:= K(x,y)  $, $x,y \in {\cal X} $.
The RKHS $\mathcal{H}_K$ is then defined as the completion of the space $\mathcal{H}_0$ with respect to the  norm $\langle\cdot , \cdot \rangle_{{\mathcal H} _0}$. It can be shown (see \cite[Theorem 4.21]{Steinwart2008} for more details) that:
\begin{align*}
\mathcal{H}_{K}=\left\{f=\sum_{i=1}^{\infty}c_iK(x_i,\cdot)\mid c_i\in \mathbb{R},x_i~{\in}~\mathcal{X}, \|f\|^2_{\mathcal{H}_K}=\sum_{i,j=1}^{\infty}c_iK(x_i,x_j)c_j<\infty\right\}.    
\end{align*}
Moreover, it can be shown that functions $f $ in the RKHS $\mathcal{H}_{K}$ inherit many analytical properties of the kernel $K$. For instance, if the $K$ is $s$-times differentiable (in the sense of \cite[Definition 4.35]{Steinwart2008}), then so are the functions in $\mathcal{H}_{K}$ (see \cite[Corollary 4.36]{Steinwart2008}).

\begin{remark}
\normalfont
{\bf (i)} In Definition \eqref{Rkh-Def}, the map $\Phi: {\cal X} \longrightarrow {\mathcal H} _K $ defined by $x\mapsto K(x,\cdot)=:K _x$ is usually called the {\bf canonical feature map}. The kernel $K$ can be written as an inner product in the RKHS of kernel sections, that is,
\begin{align*}
K(x,y)=\langle K(x,\cdot),K(y,\cdot)\rangle_{\mathcal{H}_K}\quad x,y\in\mathcal{X},    
\end{align*}
which follows from the reproducing property. 

\noindent {\bf (ii)} Definition \ref{Rkh-Def} admits the following equivalent formulation. Given a set ${\cal X} $, a Hilbert space $\mathcal{H}$ of real-valued functions on $\mathcal{X}$ is said to be a RKHS if for all $x\in\mathcal{X}$, the evaluation functional $L_x: {\mathcal H} \longrightarrow \mathbb{R} $ defined by  $ f\mapsto f(x)$ from $\mathcal{H}$ to $\mathbb{R}$ is continuous, that is, there exists a finite constant $0<M _x < \infty $, such that 
\begin{align*}
|L_x(f)|:=|f(x)|\leq M_x \|f\|_{\mathcal{H}},\quad \forall\ f\in\mathcal{H}.    
\end{align*}
If that condition is satisfied, the Riesz representation theorem implies that for all $x\in\mathcal{X}$, there exists a unique element $K_x$ of $\mathcal{H}$ with the reproducing property,
\begin{align*}
f(x)=L_x(f)=\langle f,K_x\rangle_{\mathcal{H}} \quad \forall\ f\in\mathcal{H}.    
\end{align*}
Using this observation, we hence define the kernel map $K: {\cal X}\times {\cal X} \longrightarrow \mathbb{R} $ as the function
\begin{equation*}
K(x,y):= \langle K  _x, K _y \rangle_{{\mathcal H}}, \quad \mbox{for all $x,y \in {\cal X} $.}
\end{equation*}
This shows that a RKHS defines a reproducing kernel function that is a Mercer kernel. We already mentioned that given a Mercer kernel $K$ on a set $\mathcal{X}$, by the Moore-Aronszajn Theorem \cite{aronszajn1950theory}, there is a unique Hilbert space of real-valued functions on ${\cal X} $ for which $K$ is a reproducing kernel. Consequently, there is a bijection between RKHSs and Mercer kernels.
\end{remark}

\noindent {\bf Universal kernels.} Consider the RKHS ${\mathcal H} _K  $ defined on a Hausdorff topological space ${\cal X}  $. Let  ${\cal Z}\subset {\cal X}$ be an arbitrary but fixed compact subset of ${\cal X} $ and denote by ${\mathcal H} _K  ({\cal Z})\subset {\mathcal H} _K$ the completion in the RKHS norm of the span of kernel sections determined by the elements of ${\cal Z} $. We write this as:
\begin{equation}
\label{universal kernel}
{\mathcal H} _K  ({\cal Z})= \overline{{\rm span} \left\{K _z\mid z \in {\cal Z}\right\}}.
\end{equation} 
Denote now by $\overline{\mathcal{H}_{K}({\cal Z})}$ the uniform closure of ${\mathcal H} _K  ({\cal Z}) $.
A kernel $K$ is called universal if  for any compact subset $\mathcal{Z}\subset {\cal X}$, we have that $\overline{\mathcal{H}_{K}({\cal Z})}=C({\cal Z})$, with $C({\cal Z})$ the set of real-valued continuous functions on ${\cal Z} $. Equivalently, this implies that for any $\varepsilon>0$, and any function $f\in C(\mathcal{Z})$, there exits a function $g\in\mathcal{H}_{K}({\cal Z})$, such that $\|f-g\|_{\infty}<\varepsilon$.  Many kernels that are used in practice are indeed universal  \cite{micchelli2006universal,steinwart2001influence}, e.g., the Gaussian kernel on Euclidean space. When the kernel $K$ is continuous, the space ${\mathcal H} _K  ({\cal Z}) $ is sometimes simply defined as the uniform closure of ${\rm span} \left\{K _z\mid z \in {\cal Z}\right\} $ and then $K$ is declared to be universal whenever $\mathcal{H}_{K}({\cal Z})=C({\cal Z})$ (see \cite{micchelli2006universal}). This is so because under the continuity hypothesis, the reproducing property and the compactness of ${\mathcal Z} $ imply that the uniform closure that defines $K \left({\mathcal K}\right) $ contains the completion of the vector space ${\rm span} \left\{K _z\mid z \in {\cal Z}\right\} $.

\subsection{Gaussian process regression}
\label{Gaussian process regression}
Gaussian process (GP) regression is a Bayesian nonparametric method for regression widely used in machine learning. Roughly speaking, Gaussian process regression produces a posterior distribution and a likelihood function of the unknown function based on the training data and a given prior distribution. It highly relies on the linearity of the operation acting on the Gaussian prior. In the case of the Hamiltonian system \eqref{ham-sys}, if we model the Hamiltonian function $H$ as a Gaussian prior, then the computation of the Gaussian posterior for $X_H:=J\nabla H$ is feasible since both the gradient $\nabla$ and the left multiplication by the (constant) canonical symplectic matrix $J$ are linear operations.

\begin{definition}[{\bf Gaussian process}] 
Let $(\Omega, \mathcal{F}, \mathbb{P}) $ be a probability space and let  $\mathcal{X}\subseteq\mathbb{R}^{d}$ be a nonempty set. Let $K^\theta:\mathcal{X}\times \mathcal{X} \to \mathbb{R}$ be a symmetric and positive-definite kernel (the symbol $\theta$ denotes a set of kernel parameters traditionally) and let $m:\mathcal{X}\to\mathbb{R}$ be a real function. A function $f:\mathcal{X} \times  \Omega \to \mathbb{R}$ is said to be a Gaussian process (GP) with mean function $m$ and covariance function $K^\theta$ denoted as $f\sim\mathcal{GP}(m,K^\theta)$ if for any $n \in \mathbb{N}$ and any $\mathbf{x}_1,\cdots,\mathbf{x}_n\in \mathcal{X}$, the random variable $(f(\mathbf{x}_1, \cdot ),\cdots,f(\mathbf{x}_n, \cdot))^T$ follows a multivariate normal distribution in $\mathbb{R}^n$ of the form
\begin{align*}
(f(\mathbf{x}_1, \cdot ),\cdots,f(\mathbf{x}_n, \cdot ))^T\sim \mathcal{N}(m_n,K_n^\theta) ,   
\end{align*}
with mean $m_n:=(m(\mathbf{x}_1),\cdots,m(\mathbf{x}_n))$ and covariance $K_n^\theta:=(K^\theta(\mathbf{x}_i,\mathbf{x}_j))_{i,j=1}^n$.
\end{definition}

We now formulate the learning of the Hamiltonian function $H: \mathbb{R}^{2d}\longrightarrow \mathbb{R}$ from observations of the Hamiltonian vector field as a Gaussian process regression. In the sequel, we shall use the following compact notation for the Hamiltonian vector field associated with the Hamiltonian $H: \mathbb{R}^{2d}\longrightarrow \mathbb{R} $:
\begin{align}\label{com-ham}
\mathbf{x}=X_H(\mathbf{z}):=  J\nabla H(\mathbf{z}) \in \mathbb{R}^{2d}. 
\end{align}

\paragraph{Observation data regime.} 
As we explained in \eqref{tra-dat}, the data consists of noise-free observations of the phase space and noisy observations of the vector field that we collectively denote as:
\begin{align}
\label{random variables in dataset 2}
\begin{split}
\mathbf{Z}_N&:= \mathrm{Vec}\left(\mathbf{Z}^{(1)}|\cdots |\mathbf{Z}^{(N)}\right)\in \mathbb{R}^{2dN}, \\
\mathbf{X}_{\sigma^2,N}&:=\mathrm{Vec}\left(\mathbf{X}_{\sigma^2}^{(1)}|\cdots |\mathbf{X}_{\sigma^2}^{(N)}\right)\in \mathbb{R}^{2dN},
\end{split}
\end{align}
where $\mathbf{Z}^{(n)} \in  \mathbb{R}^{2d}$ is the phase space vector containing the position and the conjugate momenta of the system, and $\left\{\mathbf{Z}^{(1)},\cdots ,\mathbf{Z}^{(N)}\right\} $ are  IID random variables with the same distribution $\mu_{\mathbf{Z}}$. The letter $M $ will sometimes denote $M=2dN $. The symbol $\mathrm{Vec}$ stands for the vectorization of the corresponding matrices and $\mathbf{X}^{(n)}_{\sigma^2} \in  \mathbb{R}^{2d}$ denotes a noisy vector field value at $\mathbf{Z}^{(n)}$, that is, 
\begin{equation}
\label{observation regime}
\mathbf{X}^{(n)}_{\sigma^2}=J\nabla H(\mathbf{Z}^{(n)})+\bm{\varepsilon}^{(n)}, 
\end{equation}
where $\bm{\varepsilon}^{(n)}$ are IID $\mathbb{R}^{2d} $-valued random variables with mean zero and variance $\sigma^2$. As we explained in \eqref{realization rvs}, the realizations of the random variables in \eqref{random variables in dataset 2} will be denoted using lowercase as in \eqref{realization rvs}.

\paragraph{The GP regression for the Hamiltonian function.}
We model the Hamiltonian $H: \mathbb{R}^{2d}\longrightarrow \mathbb{R}$ as a GP prior $\mathcal{GP}(0,K^{\theta})$ with zero mean function and covariance function $K^{\theta}:\mathbb{R}^{2d}\times\mathbb{R}^{2d}\to\mathbb{R}$, where $\theta$ represents undetermined kernel parameters, that is, $H \sim\mathcal{GP}(0,K^{\theta})$. 
We now note that by the boundedness of a certain operator that we shall establish later on in Proposition \ref{Wel-Ope} and by \cite[Theorem 1]{pfortner2022physics}, we can state that for any $\mathbf{z},\mathbf{z}'\in\mathbb{R}^{2d}$,
\begin{equation*}
\begin{bmatrix}
\mathbf{x}\\ \mathbf{x}'
\end{bmatrix}
\sim \mathcal{N} \left(
\mathbf{0}
, \begin{bmatrix}
K_{X_H}^{\theta}(\mathbf{z},\mathbf{z}) & K_{X_H}^{\theta}(\mathbf{z},\mathbf{z}')\\ (K^{\theta}_{X_H})^{\top}(\mathbf{z}',\mathbf{z}) &K_{X_H}^{\theta}(\mathbf{z}',\mathbf{z}')
\end{bmatrix}\right),
\end{equation*}
where $K_{X_H}^{\theta}(\mathbf{z},\mathbf{z}')$ is the covariance matrix 
\begin{eqnarray*}
\operatorname{Cov}(X_H(\mathbf{z}),X_H(\mathbf{z}')) =J\nabla_{1,2}K^{\theta}(\mathbf{z},\mathbf{z}')J^{\top},
\end{eqnarray*}
and $\nabla_{1,2}K^{\theta}$ represents the matrix of partial derivatives of $K^{\theta}$ with respect to the first and second arguments. If we assume that the observation noise term $\bm{\varepsilon}^{(n)} $ in \eqref{observation regime} is Gaussian and it is independent of $\mathbf{Z}_N$, we have that
\begin{align*}
\mathbf{X}_{\sigma^2,N} \sim \mathcal{N} \left(\mathbf{0} , K_{X_H}^{\theta}(\mathbf{z}_N,\mathbf{z}_N)+\sigma^2 I_{2dN}\right),
\end{align*}
where $K_{X_H}^{\theta}(\mathbf{z}_N,\mathbf{z}_N)$ denotes the covariance matrix between $X_H(\mathbf{z}_N)$ and $X_H(\mathbf{z}_N)$ whose $(n,n')$-matrix component is given by $J\nabla_{1,2}K^{\theta}(\mathbf{z}^{(n)},\mathbf{z}^{(n')})J^{\top}$.
Thus the negative log marginal likelihood of  $\mathbf{X}_{\sigma^2,N}$ given the data $(\mathbf{z}_N,\mathbf{x}_{\sigma^2,N})$ and parameters $\theta,\sigma$ is
\begin{equation}\label{lik-fun}
\begin{aligned}
-\log p(\mathbf{X}_{\sigma^2,N}|\mathbf{z}_N,\mathbf{x}_{\sigma^2,N},\theta,\sigma^2) =~& \frac{1}{2} (\mathbf{x}_{\sigma^2,N})^{\top}(K_{X_H}^{\theta}(\mathbf{z}_N,\mathbf{z}_N) + \sigma^2I_{2dN})^{-1}\mathbf{x}_{\sigma^2,N}\\ 
+&\ \frac{1}{2}\log\Big|K_{X_H}^{\theta}(\mathbf{z}_N,\mathbf{z}_N)+\sigma^2I_{2dN}\Big| + {dN} \log 2\pi.
\end{aligned}
\end{equation}
By maximizing the likelihood function, that is, minimizing the equation \eqref{lik-fun}, we obtain that the maximum likelihood parameter estimators $\widehat{\theta},\widehat{\sigma}$ are determined by the relations 
\begin{equation*}
\begin{cases}
\mathrm{Tr}\left( \left(D D^{\top} - (K_{X_H}^{\theta}(\mathbf{z}_N,\mathbf{z}_N) + \sigma^2I_{2dN})^{-1}\right) \frac{\partial K_{X_H}^{\theta}(\mathbf{z}_N,\mathbf{z}_N)}{\partial \theta_j}\right)=0,\\
\mathrm{Tr}\left( (D D^{\top} - (K_{X_H}^{\theta}(\mathbf{z}_N,\mathbf{z}_N) + \sigma^2I_{2dN})^{-1}) \right)\sigma=0,
\end{cases}
\end{equation*}
where $D = (K_{X_H}^{\theta}(\mathbf{z}_N,\mathbf{z}_N)+ \sigma^2I_{2dN})^{-1}\mathbf{x}_{\sigma^2,N}$ and $\mathrm{Tr}(\cdot )$ denotes the matrix trace operator.
 
Next, we derive the expression of the Gaussian posterior estimator for predicting $H(\mathbf{z}^\ast)$ at $\mathbf{z}^\ast \in \mathbb{R}^{2d}$, given $\theta$ and $ \sigma$.  

\begin{theorem}\label{Pos-Est}
Suppose that the observation noise term $\bm{\varepsilon}^{(n)} $ in \eqref{observation regime} is Gaussian and it is independent of $\mathbf{Z}_N$. Assume also that the the parameters $\theta, \sigma$ are known and that we are given the training dataset $(\mathbf{z}_N,\mathbf{x}_{\sigma^2,N})$ defined in \eqref{tra-dat},  Then for each $\mathbf{z}^\ast \in \mathbb{R}^{2d}$, $H(\mathbf{z}^{\ast})$ satisfies
\begin{equation*}
H(\mathbf{z}^\ast)|\mathbf{z}_N,\mathbf{x}_{\sigma^2,N} \sim \mathcal{N}(\overline{\phi}_N(\mathbf{z}^{\ast}),\overline{\Sigma}_N(\mathbf{z}^{\ast})),
\end{equation*}
where
\begin{align}
{\overline{\phi}_N}(\mathbf{z}^{\ast})&=K_{H,X_H}^{\theta}(\mathbf{z}^\ast,\mathbf{z}_N)(K_{X_H}^{\theta}(\mathbf{z}_N,\mathbf{z}_N) + \sigma^2I_{2dN})^{-1}\mathbf{x}_{\sigma^2,N},\label{pos-min}\\
\overline{\Sigma}_N(\mathbf{z}^{\ast})&=K^\theta(\mathbf{z}^\ast,\mathbf{z}^\ast) - K_{H,X_H}^{\theta}(\mathbf{z}^\ast,\mathbf{z}_N)(K_{X_H}^{\theta}(\mathbf{z}_N,\mathbf{z}_N) + \sigma^2I_{2dN})^{-1}K_{X_H,H}^{\theta}(\mathbf{z}_N,\mathbf{z}^\ast)\label{pos-var}.
\end{align}
The symbols $K_{X_H,H}^{\theta}(\mathbf{z}_N, \mathbf{z}^*) = (K_{H,X_H}^{\theta})^{\top}(\mathbf{z}^*,\mathbf{z}_N)$ denote the covariance matrix between $X_H(\mathbf{z}_N)$ and $H(\mathbf{z}^*)$ whose $n$-component is given by $\mathrm{Cov}(X_H(\mathbf{z}^{(n)}),H(\mathbf{z}^*))$. 
\end{theorem}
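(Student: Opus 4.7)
The plan is to establish that the random vector $(H(\mathbf{z}^*),\mathbf{X}_{\sigma^2,N}^{\top})^{\top}$ is jointly Gaussian with zero mean, and then invoke the classical formula for conditioning one block of a multivariate Gaussian on the other. The only non-standard ingredient is ensuring that evaluating the gradient of $H$ at a point really does produce a Gaussian random variable in this RKHS setting; once this is in hand, the remainder is essentially linear algebra.

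Step 1 (joint Gaussianity of the noise-free observations). The pointwise evaluation of the Hamiltonian vector field can be recast as $X_H(\mathbf{z})=\mathbf{L}_{\mathbf{z}}(H):=J\nabla H(\mathbf{z})$, where $\mathbf{L}_{\mathbf{z}}$ is an $\mathbb{R}^{2d}$-valued linear map defined on $\mathcal{H}_{K^\theta}$. By the differential reproducing property (Theorem \ref{Par-Rep}) together with the boundedness assertion of Proposition \ref{Wel-Ope}, each coordinate of $\mathbf{L}_{\mathbf{z}}$ is a continuous linear functional on $\mathcal{H}_{K^\theta}$. Applying \cite[Theorem 1]{pfortner2022physics} componentwise to the prior $H\sim\mathcal{GP}(0,K^\theta)$, one concludes that the finite-dimensional joint distribution of $(H(\mathbf{z}^*),X_H(\mathbf{z}^{(1)}),\ldots,X_H(\mathbf{z}^{(N)}))$ is multivariate normal with zero mean. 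A direct computation, using that differentiation commutes with expectation under the Gaussian prior, identifies the required covariance blocks:
\begin{align*}
\mathrm{Cov}(X_H(\mathbf{z}^{(n)}),X_H(\mathbf{z}^{(n')}))&=J\nabla_{1,2}K^\theta(\mathbf{z}^{(n)},\mathbf{z}^{(n')})J^{\top}, \\
\mathrm{Cov}(H(\mathbf{z}^*),X_H(\mathbf{z}^{(n)}))&=K_{H,X_H}^\theta(\mathbf{z}^*,\mathbf{z}^{(n)}),
\end{align*}
and $\mathrm{Var}(H(\mathbf{z}^*))=K^\theta(\mathbf{z}^*,\mathbf{z}^*)$.

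Step 2 (noise contamination and conditioning). Since $\bm{\varepsilon}^{(n)}$ is assumed Gaussian with mean zero, variance $\sigma^2$, IID in $n$, and independent of $\mathbf{Z}_N$ (hence of $H$ along the sampled locations), appending it to the vector-field observations preserves joint Gaussianity and merely adjusts the relevant covariance to $\mathrm{Cov}(\mathbf{X}_{\sigma^2,N})=K_{X_H}^\theta(\mathbf{z}_N,\mathbf{z}_N)+\sigma^2 I_{2dN}$, while leaving the cross-covariance $\mathrm{Cov}(H(\mathbf{z}^*),\mathbf{X}_{\sigma^2,N})=K_{H,X_H}^\theta(\mathbf{z}^*,\mathbf{z}_N)$ unchanged. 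Applying the standard zero-mean Gaussian conditioning identity
\[
U\mid V=v\sim\mathcal{N}\bigl(\Sigma_{UV}\Sigma_{VV}^{-1}v,\ \Sigma_{UU}-\Sigma_{UV}\Sigma_{VV}^{-1}\Sigma_{VU}\bigr)
\]
with $U=H(\mathbf{z}^*)$ and $V=\mathbf{X}_{\sigma^2,N}$ immediately yields \eqref{pos-min} and \eqref{pos-var}.

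The main obstacle sits in Step 1: one needs to justify rigorously that gradient evaluation of a sample path of $H$ is well defined and that the composite map $H\mapsto J\nabla H(\mathbf{z})$ is a continuous linear functional on $\mathcal{H}_{K^\theta}$, since only then can one invoke the standard preservation-of-Gaussianity result under bounded linear maps. This is precisely what the embedding result of Section \ref{Reproducing properties of differentiable kernels on unbounded sets} furnishes, and once these functionals are identified, the covariance computation in terms of $\nabla_{1,2}K^\theta$ is a routine bilinearity argument and the final conditional moment formulas reduce to classical Schur-complement bookkeeping.
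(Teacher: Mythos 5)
Your proposal is correct and follows essentially the same route as the paper: establish joint Gaussianity of the vector $(H(\mathbf{z}^\ast),\mathbf{X}_{\sigma^2,N})$ by treating $J\nabla(\cdot)(\mathbf{z})$ as a bounded linear operator on the RKHS (justified via Theorem \ref{Par-Rep} and Proposition \ref{Wel-Ope}, citing \cite[Theorem 1]{pfortner2022physics}), append the independent Gaussian noise, and then invoke the standard Gaussian conditioning identity (Lemma \ref{Con-Gau} in the paper). The only cosmetic difference is that the paper states the joint Gaussianity claim and operator boundedness in the exposition preceding the theorem rather than inside the proof itself.
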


\begin{proof}
Since $X_H(\mathbf{z}_N)$ is defined componentwise by \eqref{com-ham}, for any $\mathbf{z}^\ast \in \mathbb{R}^{d}$, we have that
  \begin{equation*}
    \begin{bmatrix}
    X_H(\mathbf{z}_N)\\
    H(\mathbf{z}^\ast)
    \end{bmatrix}
    \sim \mathcal{N} \left( 0,
    \begin{bmatrix}
    K_{X_H}^{\theta}(\mathbf{z}_N, \mathbf{z}_N) & K_{X_H,H}^{\theta}(\mathbf{z}_N, \mathbf{z}^\ast)\\
    K_{H,X_H}^{\theta}(\mathbf{z}^\ast, \mathbf{z}_N) & K^\theta(\mathbf{z}^\ast,\mathbf{z}^\ast)
    \end{bmatrix}
    \right),
\end{equation*} 
where $K_{X_H}^{\theta}(\mathbf{z}_N, \mathbf{z}_N)$ is the covariance matrix between $X_H(\mathbf{z}_N)$ and $X_H(\mathbf{z}_N)$, and $K_{X_H,H}^{\theta}(\mathbf{z}_N, \mathbf{z}^*) = (K_{H,X_H}^{\theta})^{\top}(\mathbf{z}^*,\mathbf{z}_N)$ is the covariance matrix between $X_H(\mathbf{z}_N)$ and $H(\mathbf{z}^*)$, that is, 
\begin{align*}
K_{X_H,H}^{\theta}(\mathbf{z}_N, \mathbf{z}^*) = \mathrm{Vec}\left(\operatorname{Cov}\left(X_H(\mathbf{z}^{(1)}),H(\mathbf{z}^\ast)\right)|\cdots |\operatorname{Cov}\left(X_H(\mathbf{z}^{(N)}),H(\mathbf{z}^\ast)\right)\right).    
\end{align*}
Note now that by hypothesis  $\mathbf{X}^{(n)}_{\sigma^2}  = X_H(\mathbf{Z}^{(n)}) + \bm\varepsilon^{(n)}$ with IID noise $\bm{\varepsilon}^{(n)} \sim \mathcal{N}(\mathbf{0}, \sigma^2 I_{2d})$ for all $n=1, \cdots, N$. Then by a property shown in Lemma \ref{Con-Gau}, the results follow.
\end{proof}

\subsection{Reproducing properties of differentiable kernels
on unbounded sets}\label{Reproducing properties of differentiable kernels
on unbounded sets}

In applying kernel methods to machine learning, the so-called kernel trick is among the most important features that lead to its success. Mathematically speaking, this trick refers to the fact that one can recover the inner product of two features in the RKHS without knowing what the feature map is. In the setting of structure-preserving machine learning for Hamiltonian systems, the gradient operation has to be carried throughout the algorithm. Therefore, it is natural to hope that the gradient or the partial derivatives of functions in the RKHS remain within the RKHS and enjoy the same or similar reproducing properties. In this section, we give a detailed and rigorous proof of the above statement on unbounded subsets of Euclidean spaces that generalizes similar results in the literature for either compact \cite{novak2018reproducing} or bounded \cite{ferreira2012reproducing} underlying spaces. We also discuss interesting corollaries of our theorem regarding embeddings of function spaces.

We first spell out the mathematical framework. 
We recall some notations introduced, for instance, in \cite{zhou2008derivative}. First, given $s\in \mathbb{N}$, we define the index set $I_s:=\{\alpha\in\mathbb{N}^d:|\alpha|\leq s\}$ where $|\alpha|=\sum_{j=1}^d\alpha_j$ for $\alpha=(\alpha_1,\dots,\alpha_d)\in\mathbb{N}^d$. For a function $f:\mathbb{R}^d\to\mathbb{R}$, we denote its partial derivative $D^{\alpha}f$ (if it exists) as
$$
D^\alpha f(\mathbf{x})=\frac{\partial^{|\alpha|}}{\partial x_1^{\alpha_1} \partial x_2^{\alpha_2} \cdots \partial x_{d}^{\alpha_{d}}} f(\mathbf x) \quad \text { for all } \mathbf x=\left(x_1, x_2, \ldots, x_{d}\right) \in \mathbb{R}^{d}.
$$
Let $C_b^s(\mathbb{R}^d)$ be the set of bounded $s$-continuously differentiable functions with bounded derivatives given by
\begin{equation*}
C_b^s(\mathbb{R}^d)=\left\{ f\in C^s(\mathbb{R}^d)\mid\|f\|_{C_b^s}:=\sup_{\alpha\in I_s}\|D^{\alpha}f\|_{\infty}<\infty  \right\},
\end{equation*}
where the uniform norm $\left\|\cdot \right\|_{\infty} $ is defined by $\|f\|_{\infty}=\sup_{\mathbf x\in\mathbb{R}^d}|f(\mathbf x)|$.
Now, for a kernel function $K:\mathbb{R}^d\times\mathbb{R}^d\to\mathbb{R}$ and any $\alpha,\beta\in \mathbb{N}^d$, we denote 
\begin{equation*}
D^\alpha K(\mathbf{x},{\bf y})=\frac{\partial^{|\alpha|}}{\partial x_1^{\alpha_1} \partial x_2^{\alpha_2} \cdots \partial x_{d}^{\alpha_{d}}} K(x_1,\dots,x_d,y_1,\dots,y_d), \quad \mathbf{x},{\bf y}\in \mathbb{R}^{d},
\end{equation*}
\begin{equation*}
D^{(\alpha,\beta)} K(\mathbf{x},{\bf y})=\frac{\partial^{|\alpha+\beta|}}{\partial x_1^{\alpha_1} \partial x_2^{\alpha_2} \cdots \partial x_{d}^{\alpha_{d}}\partial y_1^{\beta_1} \partial y_2^{\beta_2} \cdots \partial y_{d}^{\beta_{d}}} K(x_1,\dots,x_d,y_1,\dots,y_d), \quad \mathbf{x},{\bf y}\in \mathbb{R}^{d}.
\end{equation*}
Additionally, given $\mathbf{x}\in\mathbb{R}^d$, we denote by $(D^{\alpha}K)_{\mathbf{x}}$ the function on $\mathbb{R}^d$ given by $(D^{\alpha}K)_{\mathbf{x}}({\bf y})=D^{\alpha}K(\mathbf{x},{\bf y})$. By the symmetry of $K$, it holds that
\begin{equation*}
    D^{\alpha}(K_{{\bf y}})(\mathbf{x})=(D^{\alpha}K)_{\mathbf{x}}({\bf y})=D^{\alpha}K(\mathbf{x},{\bf y})\quad \forall~ \mathbf{x},{\bf y}\in\mathbb{R}^{d}.
\end{equation*}

We now state the following reproducing properties of differentiable kernels on $\mathbb{R}^d\times\mathbb{R}^d$, which are a generalization of similar results proved for compact \cite{novak2018reproducing, zhou2008derivative} and bounded \cite{ferreira2012reproducing} underlying spaces. Statements about the boundedness of the partial derivatives of elements in an RKHS can be found in Lemma 4.34 and Corollary 4.36 of \cite{Steinwart2008}.

\begin{theorem}[{\bf Differential reproducing property}]
\label{Par-Rep}
Let $s\in\mathbb{N}$, and $K:\mathbb{R}^{d}\times \mathbb{R}^{d}\rightarrow \mathbb{R}$ be a Mercer kernel such that $K\in C_b^{2s+1}(\mathbb{R}^{d}\times \mathbb{R}^{d})$. 
Then the following statements hold:
\begin{description}
\item [(i)] For any $\mathbf{x}\in \mathbb{R}^{d}$ and $\alpha \in I_s$, $(D^{\alpha}K)_{\mathbf{x}} \in \mathcal{H}_K$.
\item [(ii)]  The following partial derivative reproducing property holds true for any $\alpha \in I_s$:
\begin{equation}\label{dif-rep}
D^{\alpha}f(\mathbf{x})=\langle (D^{\alpha}K)_{\mathbf{x}},f \rangle_{\mathcal{H}_K},\mbox{ for all }  \mathbf{x}\in \mathbb{R}^{d}   \mbox{ and all }   f\in \mathcal{H}_K.
\end{equation}
\item [(iii)] Let $\kappa^2=\|K\|_{C_b^{2s}(\mathbb{R}^{d} \times \mathbb{R}^d)}$. The inclusion $J: \mathcal{H}_K \hookrightarrow C_b^s(\mathbb{R}^{d})$ is well-defined and bounded:
\begin{equation}
\label{embedding ineq}
\|f\|_{C_b^s} \leqslant \kappa\|f\|_{\mathcal{H}_K}  \mbox{ for all }   f \in \mathcal{H}_K .    
\end{equation}
\end{description}
\end{theorem}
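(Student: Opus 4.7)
The plan is to proceed by induction on $|\alpha|$, at each step realizing $(D^{\alpha}K)_{\mathbf{x}}$ as the RKHS limit of difference quotients of kernel sections, and then to transfer the reproducing identity across this limit before deducing the embedding bound from Cauchy--Schwarz. Part (iii) will essentially fall out of the first two parts once we estimate $\|(D^{\alpha}K)_{\mathbf{x}}\|_{\mathcal{H}_K}$ in terms of the mixed partials of $K$.

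To prove (i) for $|\alpha|=1$, say $\alpha=e_j$, fix $\mathbf{x}\in\mathbb{R}^{d}$ and define for $h\neq 0$
\begin{equation*}
g_{h,j}(\mathbf{y}) := \frac{K(\mathbf{x}+h e_j,\mathbf{y})-K(\mathbf{x},\mathbf{y})}{h}
= \frac{1}{h}\bigl(K_{\mathbf{x}+h e_j}-K_{\mathbf{x}}\bigr)(\mathbf{y}),
\end{equation*}
which lies in $\mathcal{H}_K$ as a linear combination of kernel sections. A direct expansion using $\langle K_{\mathbf{u}},K_{\mathbf{v}}\rangle_{\mathcal{H}_K}=K(\mathbf{u},\mathbf{v})$ gives
\begin{equation*}
\|g_{h,j}-g_{h',j}\|_{\mathcal{H}_K}^{2}
= \Delta_{h,h'}^{(1)}\Delta_{h,h'}^{(2)} K\bigl(\mathbf{x},\mathbf{x}\bigr),
\end{equation*}
where $\Delta_{h,h'}^{(i)}$ denotes a symmetric difference quotient in the $i$-th slot built from $h$ and $h'$. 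The assumption $K\in C_b^{2s+1}$ (with $s\geq 1$) ensures that this mixed second-order difference tends to $0$ as $h,h'\to 0$, so $\{g_{h,j}\}$ is Cauchy in $\mathcal{H}_K$ and therefore converges to some $G_{\mathbf{x},j}\in\mathcal{H}_K$. By the continuity of evaluation functionals, $G_{\mathbf{x},j}(\mathbf{y})=\lim_{h\to 0}g_{h,j}(\mathbf{y})=\partial_{x_j}K(\mathbf{x},\mathbf{y})=(D^{e_j}K)_{\mathbf{x}}(\mathbf{y})$ for every $\mathbf{y}$, proving (i) for $|\alpha|=1$. For (ii) at this order, take any $f\in\mathcal{H}_K$ and use continuity of the inner product:
\begin{equation*}
\langle (D^{e_j}K)_{\mathbf{x}},f\rangle_{\mathcal{H}_K}
= \lim_{h\to 0}\langle g_{h,j},f\rangle_{\mathcal{H}_K}
= \lim_{h\to 0}\frac{f(\mathbf{x}+h e_j)-f(\mathbf{x})}{h}
= \partial_{x_j}f(\mathbf{x}),
\end{equation*}
where the differentiability of $f$ on the right-hand side comes from \cite[Corollary 4.36]{Steinwart2008} under our regularity hypothesis on $K$. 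The inductive step for $|\alpha|>1$ is obtained by applying the same difference-quotient argument to the function $\mathbf{x}\mapsto (D^{\alpha-e_j}K)_{\mathbf{x}}\in\mathcal{H}_K$ provided by the previous step; here the Cauchy estimate produces an iterated difference quotient of the $(2|\alpha|)$-th order partial $D^{(\alpha,\alpha)}K$, which is controlled because $2|\alpha|\leq 2s < 2s+1$.

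For part (iii), the reproducing identity \eqref{dif-rep} combined with Cauchy--Schwarz yields, for every $\alpha\in I_s$ and every $\mathbf{x}\in\mathbb{R}^{d}$,
\begin{equation*}
|D^{\alpha}f(\mathbf{x})|
\leq \|(D^{\alpha}K)_{\mathbf{x}}\|_{\mathcal{H}_K}\,\|f\|_{\mathcal{H}_K},
\end{equation*}
while the norm squared evaluates as $\|(D^{\alpha}K)_{\mathbf{x}}\|_{\mathcal{H}_K}^{2}=\langle (D^{\alpha}K)_{\mathbf{x}},(D^{\alpha}K)_{\mathbf{x}}\rangle_{\mathcal{H}_K}=D^{(\alpha,\alpha)}K(\mathbf{x},\mathbf{x})\leq \|K\|_{C_b^{2s}}=\kappa^{2}$, again by applying (ii) to the function $(D^{\alpha}K)_{\mathbf{x}}\in\mathcal{H}_K$. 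Taking the supremum over $\mathbf{x}$ and $\alpha\in I_s$ gives $\|f\|_{C_b^s}\leq \kappa\|f\|_{\mathcal{H}_K}$; the bound also shows each $D^{\alpha}f$ is bounded on all of $\mathbb{R}^{d}$, so the codomain is indeed $C_b^s(\mathbb{R}^d)$.

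The main technical hurdle, and the place where the non-compactness of $\mathbb{R}^{d}$ actually matters, is the Cauchy estimate: on compact or bounded sets one can invoke uniform continuity of the relevant partial derivatives directly, whereas here we need the uniform boundedness packaged in the $C_b^{2s+1}$ assumption to control the mixed difference quotients $\Delta_{h,h'}^{(1)}\Delta_{h,h'}^{(2)}D^{(\alpha-e_j,\alpha-e_j)}K(\mathbf{x},\mathbf{x})$ by a mean-value-type remainder whose size depends only on $\|K\|_{C_b^{2|\alpha|+1}}$ and on $|h|+|h'|$, independently of $\mathbf{x}$. Once this uniform control is secured, the inductive passage from order $|\alpha|-1$ to order $|\alpha|$ is mechanical, and assertions (ii) and (iii) follow almost immediately.
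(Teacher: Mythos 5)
Your proposal is correct and takes a genuinely different route at the crucial existence step. The paper first establishes only \emph{boundedness} of the difference quotients $\tfrac{1}{t}\bigl((D^{\alpha}K)_{\mathbf{x}+te_j}-(D^{\alpha}K)_{\mathbf{x}}\bigr)$ in $\mathcal{H}_K$, invokes sequential weak compactness of closed balls in a Hilbert space to extract a subsequential \emph{weak} limit, identifies that limit pointwise by pairing against kernel sections $K_{\mathbf{y}}$, and only afterwards, in a separate step, upgrades to strong convergence via an integral/mean-value estimate that consumes $\|D^{(\alpha,\beta)}K\|_\infty$ for $|\alpha+\beta|=2s+1$. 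You collapse these two stages into one by computing $\|g_h-g_{h'}\|_{\mathcal{H}_K}^2$ directly through the identity $\langle g_h,g_{h'}\rangle_{\mathcal{H}_K}=\Delta_h^{(1)}\Delta_{h'}^{(2)}D^{(\alpha,\alpha)}K(\mathbf{x},\mathbf{x})$, where $\Delta_t^{(i)}$ denotes the $t$-difference quotient in the $i$th argument along $e_j$; this gives
\begin{equation*}
\|g_h-g_{h'}\|_{\mathcal{H}_K}^2=\Delta^{(1)}_h\Delta^{(2)}_h F-2\,\Delta^{(1)}_h\Delta^{(2)}_{h'}F+\Delta^{(1)}_{h'}\Delta^{(2)}_{h'}F,\qquad F=D^{(\alpha,\alpha)}K,
\end{equation*}
and each mixed second-order difference quotient converges to the same value $D^{(\alpha+e_j,\alpha+e_j)}K(\mathbf{x},\mathbf{x})$, so the combination tends to zero and $\{g_h\}$ is Cauchy; completeness of $\mathcal{H}_K$ then delivers the strong limit at once, and the pointwise and inner-product identifications follow by continuity of evaluation and of the pairing. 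This is more elementary (no weak compactness is used), and your Cauchy estimate requires only continuity of the mixed partial of order $2|\alpha+e_j|\le 2s$, which is essentially the weakened hypothesis recorded in the remark that follows Theorem~\ref{Par-Rep}, whereas the paper's Step~2 genuinely uses the full $C_b^{2s+1}$ bound. Parts (ii) and (iii) then run in your proposal much as in the paper, via Cauchy--Schwarz together with $\|(D^\alpha K)_{\mathbf{x}}\|_{\mathcal{H}_K}^2=D^{(\alpha,\alpha)}K(\mathbf{x},\mathbf{x})$.

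One small omission remains in part~(iii): you verify $\sup_{\alpha\in I_s}\|D^\alpha f\|_\infty<\infty$ but do not check that each $D^\alpha f$ is continuous, which is part of the requirement $f\in C_b^s(\mathbb{R}^d)\subset C^s(\mathbb{R}^d)$. The paper addresses this by noting
\begin{equation*}
\|(D^\alpha K)_{\mathbf{x}}-(D^\alpha K)_{\mathbf{x}'}\|_{\mathcal{H}_K}^2 = D^{(\alpha,\alpha)}K(\mathbf{x},\mathbf{x})-2D^{(\alpha,\alpha)}K(\mathbf{x},\mathbf{x}')+D^{(\alpha,\alpha)}K(\mathbf{x}',\mathbf{x}')\longrightarrow 0
\end{equation*}
as $\mathbf{x}'\to\mathbf{x}$ (in fact it extracts H\"older-$\tfrac12$ continuity of $D^\alpha f$ from the $C_b^{2s+1}$ control on $D^{(\alpha,\alpha)}K$), so $\mathbf{x}\mapsto\langle (D^\alpha K)_{\mathbf{x}},f\rangle_{\mathcal{H}_K}=D^\alpha f(\mathbf{x})$ is continuous. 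You already have all the ingredients for that one line; it just needs to be stated before concluding $f\in C_b^s(\mathbb{R}^d)$.
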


The detailed proof of Theorem \ref{Par-Rep} is contained in Appendix \ref{The proof of some theorems}. This theorem has a surprising corollary that will help us later on in Section \ref{Structure-preserving kernel ridge regression} to explain why kernel ridge estimators are unique after the addition of regularization terms. Define:
\begin{align}
\label{halphaspaces}
\mathcal{H}_{\alpha} := \{f\in\mathcal{H}_K\mid D^\alpha f=0\}.  
\end{align}
It can be checked that for each multi-index $\alpha$, $\mathcal{H}_\alpha$ is a closed vector subspace of $\mathcal{H}_K$. Indeed, let $ \left\{f _n\right\}_{n \in \mathbb{N}} $ be a convergent sequence of elements in ${\mathcal H} _\alpha$ such that $\left\|f _n-f\right\|_{{\mathcal H} _K}\rightarrow 0 $ as $n \rightarrow \infty $, for some $f \in {\mathcal H}_K$. Then, by \eqref{embedding ineq} we can conclude that for $s=| \alpha |$,
\begin{equation*}
\left\|D ^\alpha \left(f _n-f\right) \right\| _{\infty}\leq \left\|f _n-f\right\|_{C_b^s} \leqslant \kappa\|f _n-f\|_{\mathcal{H}_K}\rightarrow 0, \quad \mbox{as $n \rightarrow \infty$,}
\end{equation*}
which implies that $D ^\alpha f=0$ and hence that ${\mathcal H}_\alpha $ is a closed subspace of ${\mathcal H}_K$. Hence, the RKHS can be decomposed as 
\begin{align*}
\mathcal{H}_K=\mathcal{H}_{\alpha}\oplus \mathcal{H}_{\alpha}^\bot,
\end{align*}
where $\mathcal{H}_{\alpha}^\bot $ denotes the orthogonal complement of $\mathcal{H}_{\alpha}$ with respect to the RKHS inner product $\left\langle \cdot , \cdot \right\rangle_{\mathcal{H}_K}$.
\begin{corollary}
\label{ker-reproducing-property}
Let $s\in\mathbb{N}$, and $K:\mathbb{R}^{d}\times \mathbb{R}^{d}\rightarrow \mathbb{R}$ be a Mercer kernel such that $K\in C_b^{2s+1}(\mathbb{R}^{d}\times \mathbb{R}^{d})$. Then, for each $\alpha\in I_s$, we have
\begin{align*}
\left\langle (D^\alpha K)_{\mathbf{x}},f\right\rangle_{\mathcal{H}_K}=0, \quad \text{ for all } \mathbf{x}\in \mathbb{R}^d \text{ and } f\in \mathcal{H}_{\alpha}.    
\end{align*}
Equivalently, $(D^\alpha K)_{\mathbf{x}} \in \mathcal{H}_{\alpha}^\bot $,  for all $\mathbf{x}\in \mathbb{R}^d $.
\end{corollary}
\begin{proof}
It is straightforward by letting $f\in\mathcal{H}_{\alpha}$ in the differential reproducing property \eqref{dif-rep}.    
\end{proof}

\begin{remark}
\label{two observations}
\normalfont
{\bf (i)} The condition $K\in C_b^{2s+1}(\mathbb{R}^{d}\times \mathbb{R}^{d})$ in Theorem \ref{Par-Rep} can be replaced by $K\in C_b^{2s}(\mathbb{R}^{d}\times \mathbb{R}^{d})$ and $D^{(\alpha,\beta)}K$ being uniformly continuous on $\mathbb{R}^{2d}$ for all $|\alpha+\beta|=2s$.

\noindent {\bf (ii)} As we see below in expression \eqref{gau-rep}, the RKHS associated to the Gaussian kernel in Example \ref{gaussian kernel example} does not contain any polynomial or non-zero constant functions, which implies that for each $\alpha\in I_s$, $\mathcal{H}_{\alpha} = \{\mathbf{0}\}$. In this case, Corollary \ref{ker-reproducing-property} trivially holds.

\end{remark}

\begin{example}{\bf (The RKHS of the Gaussian kernel is contained in $C_b^{s}(\mathbb{R}^d) $ for all $s \in \mathbb{N}$)}\label{Gau-Ker}
\normalfont
Let $\mathcal{X}\subseteq \mathbb{R}^d$ be a set with a nonempty interior. Consider the Gaussian kernel $K_{\eta}(\mathbf{x},\mathbf{y})=\exp\left(-\frac{\|\mathbf{x}-\mathbf{y}\|^2}{\eta^2}\right)$ with a constant $\eta>0$ for $\mathbf{x},\mathbf{y}\in\mathcal{X}$. One of the main results in \cite{minh2010some} shows that the corresponding Gaussian RKHS $\mathcal{H}_{K_{\eta}}$ is infinite-dimensional and can be written as
\begin{equation}\label{gau-rep}
\mathcal{H}_{K_{\eta}}=\left\{f(\mathbf{x})=e^{-\frac{\|\mathbf{x}\|^2}{\eta^2}}\sum_{|\alpha|=0}^{\infty}w_{\alpha}\mathbf{x}^{\alpha}\mid \|f\|^2_{\mathcal{H}_K}=\sum_{k=0}^{\infty}\frac{k!}{(2/\eta^2)^k}\sum_{|\alpha|=k}\frac{w^2_{\alpha}}{C_{\alpha}^k}<\infty\right\}.
\end{equation}
In this expression $ \alpha=(\alpha_1,\dots,\alpha_d)\in\mathbb{N}^d$, $\mathbf{x} ^\alpha= x _1^{\alpha _1} \cdots x _d^{\alpha _d}$, $w _\alpha \in \mathbb{R} $, and $C _\alpha^k  $ are the multinomial coefficients given by $C _\alpha^k= k !/ \left(\alpha _1 ! \cdots \alpha_d!\right)$. The inner product $\langle \cdot , \cdot \rangle_{\mathcal{H}_{K_{\eta}}}$ is given by
\begin{equation*}
\langle f,g\rangle_{\mathcal{H}_{K_{\eta}}}=\sum_{k=0}^{\infty}\frac{k!}{(2/\eta^2)^k}\sum_{|\alpha|=k}\frac{w_{\alpha}v_{\alpha}}{C_{\alpha}^k},
\end{equation*}
for $f(\mathbf{x})=e^{-\frac{\|\mathbf{x}\|^2}{\eta^2}}\sum_{|\alpha|=0}^{\infty}w_{\alpha}\mathbf{x}^{\alpha}, g(\mathbf{x})=e^{-\frac{\|\mathbf{x}\|^2}{\eta^2}}\sum_{|\alpha|=0}^{\infty}v_{\alpha}\mathbf{x}^{\alpha}\in\mathcal{H}_{K_{\eta}}$. An orthonormal basis for $\mathcal{H}_{K_{\eta}}$ is 
\begin{equation*}
\left\{\phi_{\alpha}(\mathbf{x})=\sqrt{\frac{(2/\eta^2)^kC_{\alpha}^k}{k!}}e^{-\frac{\|\mathbf{x}\|^2}{\eta^2}}\mathbf{x}^{\alpha},\ \ |\alpha|=k,\ \ k=0,\cdots,\infty \right\}.
\end{equation*}
The characterization in \eqref{gau-rep} shows that the functions in $\mathcal{H}_{K_{\eta}}$ are real analytic, and hence $\mathcal{H}_{K_{\eta}}\subseteq C_b^s(\mathcal{X})$ for any $s\in\mathbb{N}$ on any compact set $\mathcal{X}$. When  $\mathcal{X}$ is noncompact, e.g. $\mathcal{X}=\mathbb{R}^d$, it is not easy to check this inclusion property directly using \eqref{gau-rep}. However,  Theorem \ref{Par-Rep} guarantees that the inclusion still holds since we only need to verify that the Gaussian kernel $K\in C_b^{2s+1}(\mathbb{R}^d\times \mathbb{R}^d)$, which can be easily checked.
\end{example}

\begin{example}{\bf (Sobolev Spaces are RKHSs and $H^\infty(\mathbb{R}^d)\subset C_b^s(\mathbb{R}^d)$ for all $s\in\mathbb{N}$)}
\normalfont
The standard Sobolev space $H^n(\mathbb{R}^d)$ consists of $L^2$ functions whose weak derivatives up to order $n$ are square integrable. 
Then $H^n(\mathbb{R}^d)$ is a separable Hilbert space equipped with the inner product
$$
\langle f, g\rangle_{H^n(\mathbb{R}^d)}=\sum_{|\alpha| \leq n}\left\langle D^\alpha f, D^\alpha g\right\rangle_{L^2\left(\mathbb{R}^d\right)}, \quad \text { for all } f, g \in H^n(\mathbb{R}^d) .
$$
We emphasize that here $D^{\alpha}$ denotes the weak derivative, and $L^2\left(\mathbb{R}^d\right)$ is the standard space of square-integrable functions with the inner product
$$
\langle f, g\rangle_{L^2(\mathbb{R}^d)}=\int_{\mathbb{R}^d} f(\mathbf{x}) {g(\mathbf{x})} \mathrm{d} \mathbf{x} .
$$
For $n > d/2$, it can be proved \cite{novak2018reproducing} that the Sobolev space $H^n(\mathbb{R}^{d})$ is the reproducing kernel Hilbert space associated to the kernel $K_{d,n}$ introduced in \eqref{sobolev kernel expression}, that is,
\begin{equation*}
H^n(\mathbb{R}^{d})=\mathcal{H}_{K_{d,n}},
\end{equation*}
and, additionally,  $\|f\|_{\mathcal{H}_{K_{d,n}}}=\|f\|_{H^n(\mathbb{R}^d)}$, for any $f\in H^n(\mathbb{R}^d)$. 

Notice now that the kernel $K_{d, \infty}$ in \eqref{sob-inf} is continuous, symmetric, and positive semidefinite. Hence, it is a Mercer kernel. Furthermore, using Taylor expansion, it can be verified that the kernel $K_{d, \infty}\in C^{2s+1}_b(\mathbb{R}^{d}\times \mathbb{R}^{d})$ for any $s\in\mathbb{N}$. Thus, by Theorem \ref{Par-Rep}, we easily obtain an interesting embedding result, that is, $H^\infty(\mathbb{R}^d)\subseteq C_b^s(\mathbb{R}^d)$ for any $s\in\mathbb{N}$. 
\end{example}

\section{Structure-preserving kernel ridge regression}\label{Structure-preserving kernel ridge regression}

This section presents a structure-preserving kernel ridge regression method to estimate the unknown Hamiltonian function.
We formulate the learning problem as a statistical inverse problem and provide an operator-theoretic framework to represent the kernel regression estimators. Using a generalization of the standard Representer Theorem to our structure-preserving framework that we shall present later on in Theorem \ref{Rep-Ker}, we shall prove that these estimators can be written down as the linear combination of the gradient of the kernel sections evaluated at the dataset.

The kernel ridge regression results we just introduced imply that even though we are in a structure-preserving setup, we can cast the learning problem as the solution of a convex Gramian regression. The convexity feature we just mentioned is a clear comparative advantage with the (potentially non-convex) maximum likelihood problem introduced in \eqref{lik-fun} in relation to the Gaussian process regression approach. This is also why later in Theorem \ref{Brd-Gau}, we establish conditions under which these two estimators, that is, GP and kernel ridge regressions, coincide.

As we already explained in the introduction, for the statistical learning problem, we are given the noisy Hamiltonian vector field random samples
\begin{align}
\label{noisy observations of X}
\mathbf{X}_{\sigma^2}^{(n)}:= X_H(\mathbf{Z}^{(n)})+\bm{\varepsilon}^{(n)}, \quad n=1,\cdots,N,
\end{align}
where $X_H$ is the Hamiltonian vector field of $H: \mathbb{R}^{2d} \longrightarrow\mathbb{R}$ (the function that we have to estimate), $\mathbf{Z}^{(n)}$ are IID random variables with the distribution $\mu_{\mathbf{Z}}$, and $\bm{\varepsilon}^{(n)}$ are $\mathbb{R}^{2d} $-valued IID random variables with mean zero and variance $\sigma^2$ which are independent of $\mathbf{Z}^{(n)}$.

In a standard kernel ridge regression setting, one constructs an empirical quadratic risk functional 
\begin{align}
\label{empirical quadratic risk}
\frac{1}{N}\sum_{n=1}^{N} \|\mathbf{f}(\mathbf{Z}^{(n)})-\mathbf{X}^{(n)}_{\sigma^2}\|^2,
\end{align} 
(to which eventually a quadratic ridge/Tikhonov regularization term is added) and finds the least square (or ridge) estimator of the vector field ${\bf f}$ over a hypothesis function space, which is, in this case, the RKHS associated with a prescribed kernel defined on the phase space. The standard Representer Theorem \cite[Theorem 6.11]{Mohri:learning:2012} guarantees that this convex optimization problem has a unique solution that can be expressed as a linear combination of kernel sections spanned by the dataset.

We stress that using the standard kernel ridge regression to learn the Hamiltonian vector field is not structure-preserving, as there is no guarantee that the vector field that has been learned this way comes from a Hamiltonian system. That is why, later on, in Theorem \ref{Rep-Ker}, we shall adapt the standard Representer Theorem to our structure-preserving framework. We shall see that an analogous result can be formulated in the sense that the solution of the natural empirical risk-minimization problem in our context (see \eqref{emp-pro} and \eqref{emp-fun} below) is not in the span of the kernel sections evaluated at the data (like in the standard Representer Theorem) but of something similar computed with partial gradients of the kernel function. 

\noindent {\bf Structure-preserving kernel ridge regression}\quad
The idea behind structure-preservation in the context of kernel regression is that we search the vector field $\mathbf{f}:\mathbb{R}^{2d}\to\mathbb{R}^{2d}$ that minimizes the risk functional \eqref{empirical quadratic risk} among those that have the form  $\mathbf{f}_{h}:=X_h=J\nabla h$, where $h: \mathbb{R}^{2d} \rightarrow  \mathbb{R}$ belongs to the RKHS $\mathcal{H}_K$ associated with a Mercer  kernel $K$ defined on the phase space $\mathbb{R}^{2d} $. 
This approach obviously guarantees that the learned vector field is Hamiltonian with Hamiltonian function $h$.

In order to make the method explicit, we shall be solving the following minimization problem 
\begin{align}
\widehat{h}_{\lambda,N}&:=\mathop{\arg\min}\limits_{h\in \mathcal{H}_{K}} \ \widehat{R}_{\lambda,N}(h), \label{emp-pro}\\
\widehat{R}_{\lambda,N}(h)&:=\frac{1}{N}\sum_{n=1}^{N} \|X_h(\mathbf{Z}^{(n)})-\mathbf{X}^{(n)}_{\sigma^2}\|^2 + \lambda\|h\|_{\mathcal{H}_K}^2, \label{emp-fun}
\end{align}
where $X_h=J\nabla h$ and $\lambda\geq0$ is the Tikhonov regularization parameter.
We shall refer to the minimizer $ \widehat{h}_{\lambda,N}$ as the {\bf structure-preserving kernel estimator} of the Hamiltonian function $H$. The functional $\widehat{R}_{\lambda,N}$ is referred to as the {\bf regularized empirical risk}. 

The measure-theoretic analogue, referred to as {\bf regularized statistical risk}, is denoted as $R_{\lambda}$ and is defined by
\begin{equation}
\label{exp-fun}
R_{\lambda}(h):=\|X_h- X_H\|_{L^2(\mu_{\mathbf{Z}})}^2+\lambda \|h\|_{\mathcal{H}_K}^2+\sigma^2, 
\end{equation}
where $X_h,X_H$ are the Hamiltonian vector fields of $h$ and $H$, respectively. We denote by $h^{*}_{\lambda}\in\mathcal{H}_K$ the {\bf best-in-class function} with the minimal associated in-class regularized statistical risk, that is,
\begin{align}
\label{exp-pro}
h^{*}_{\lambda}:= \mathop{\arg\min}\limits_{h\in \mathcal{H}_{K}}  R_{\lambda}(h) .  
\end{align}

We say that regularized empirical and statistical risks are {\bf consistent} within the RKHS if for every $h\in\mathcal{H}_K$, we have that
\begin{align}
\label{expectation with respect to epsilon}
\lim\limits_{N\to \infty} \mathbb{E}_{\bm\varepsilon}\left[\widehat{R}_{\lambda, N} (h)\right]=R_{\lambda}(h),\quad a.s., 
\end{align}
where $\mathbb{E}_{\bm\varepsilon}$ means taking the expectation for all the random variables $\bm\varepsilon^{(n)}$.
 
We can show that in our setting, the regularized empirical risk in \eqref{emp-fun} and the regularized statistical risk in \eqref{exp-fun} are consistent. Indeed, denote the empirical measure as $\mu^N_{\mathbf{Z}}:=\frac{1}{N}\sum_{n=1}^N\delta_{\mathbf{Z}^{(n)}}$.
The strong law of large numbers shows that for each $h\in\mathcal{H}_K$, we have
\begin{align*}
\lim\limits_{N\to \infty} \mathbb{E}_{\bm\varepsilon}\left[\widehat{R}_{\lambda, N} (h)\right]&=\lim\limits_{N\to \infty} \mathbb{E}_{\bm\varepsilon}\left[\frac{1}{N}\sum_{n=1}^{N} \|X_h(\mathbf{Z}^{(n)})-\mathbf{X}^{(n)}_{\sigma^2}\|^2\right] + \lambda\|h\|_{\mathcal{H}_K}^2\\
&=\lim\limits_{N\to \infty} \mathbb{E}_{\bm\varepsilon}\left[\frac{1}{N}\sum_{n=1}^{N} \|X_h(\mathbf{Z}^{(n)})-X_H(\mathbf{Z}^{(n)})-\bm\varepsilon^{(n)}\|^2\right] + \lambda\|h\|_{\mathcal{H}_K}^2\\
&=\lim\limits_{N\to \infty} \int_{\mathbb{R}^{2d}}\|X_h(\mathbf{y})- X_H(\mathbf{y})\|^2\mathrm{d}\mu^N_{\mathbf{Z}}(\mathbf{y})+\sigma^2+\lambda\|h\|_{\mathcal{H}_K}^2\\
&=
\int_{\mathbb{R}^{2d}}\|X_h(\mathbf{y})- X_H(\mathbf{y})\|^2\mathrm{d}\mu_{\mathbf{Z}}(\mathbf{y})+\sigma^2+\lambda\|h\|_{\mathcal{H}_K}^2=R_{\lambda}(h),
\end{align*}
almost surely.

\subsection{Operator representations of the kernel ridge learning problem and its solution} 
\label{Operator representations of the kernel ridge learning problem and its solution} 

One important problem that needs to be solved concerning the structure-preserving kernel estimator we just introduced is its convergence to the ground-truth Hamiltonian under the RKHS norm. This shall be carried out
later in Section \ref{Convergence analysis and error bounds}. The first step towards this result is representing the estimator in an operator framework. To represent the minimizers of the inverse learning problems \eqref{emp-pro}-\eqref{emp-fun} and \eqref{exp-fun}-\eqref{exp-pro}, we introduce the operators $A: {\mathcal H}_K \longrightarrow {\mathcal H}_K^{2d}$ and $A_N: {\mathcal H}_K \longrightarrow \mathbb{R}^{2dN}$ defined as
\begin{align}\label{ope-def}
A h=J\nabla h, \text{  and  } A_Nh=\frac{1}{\sqrt{N}}\mathbb{J}\nabla h(\mathbf{Z}_N), \quad \text{ for all } h\in\mathcal{H}_K,
\end{align}
respectively, where $\mathbb{J}=\operatorname{diag}\{J,\ldots,J\}$ (with $N$ diagonal blocks) is a $2dN$-dimensional square matrix and ${\mathcal H}_K^{2d}$ is the Cartesian product made of $2d $ copies of the RKHS ${\mathcal H} _K  $.

By Theorem \ref{Par-Rep}, if a kernel $K\in C_b^{2s+1}(\mathbb{R}^{2d}\times\mathbb{R}^{2d})$ for some $s\in\mathbb{N}^+$, then we can embed its corresponding RKHS into $C^s_b(\mathbb{R}^{2d})$. 
Moreover, the gradients of functions in $\mathcal{H}_K$ exist and their components belong to the same RKHS, which implies that the operators $A: {\mathcal H}_K \longrightarrow {\mathcal H}_K^{2d}$ and $A_N: {\mathcal H}_K \longrightarrow \mathbb{R}^{2dN}$ in \eqref{ope-def} are well-defined. 
We start by studying the properties of the linear operator $A$ in the following proposition.

\begin{proposition}
\label{Wel-Ope}
Let $K\in C_b^3(\mathbb{R}^{2d}\times\mathbb{R}^{2d})$ be a Mercer kernel. Then, the operator $A: {\mathcal H}_K \longrightarrow {\mathcal H}_K^{2d}$ defined in \eqref{ope-def} is a bounded linear operator that maps $\mathcal{H}_K$ into $L^2(\mathbb{R}^{2d};\mu_{\mathbf{Z}};\mathbb{R}^{2d})\subset {\mathcal H}_K^{2d}$ with an operator norm $\|A\|$ that satisfies $\|A\|\leq \sqrt{2d}\kappa$, with $\kappa^2=\|K\|_{C_b^{2}(\mathbb{R}^{2d} \times \mathbb{R}^{2d})}$. The adjoint operator 
$A^{*}: L^2(\mathbb{R}^{2d};\mu_{\mathbf{Z}};\mathbb{R}^{2d}) \longrightarrow {\mathcal H} _K$ of $A: {\mathcal H}_K \longrightarrow L^2(\mathbb{R}^{2d};\mu_{\mathbf{Z}};\mathbb{R}^{2d})$ is given by
\begin{align}\label{adjoint}
A^{*}g=\int_{\mathbb{R}^{2d}} g^{\top}(\mathbf{x})J\nabla_1 K(\mathbf{x},\cdot) \, \mathrm{d} \mu_{\mathbf{Z}}(\mathbf{x}), \quad \mbox{for all $g\in L^2(\mathbb{R}^{2d};\mu_{\mathbf{Z}};\mathbb{R}^{2d})$.}
\end{align} 
As a consequence, {the bounded linear operator $B: \mathcal{H}_K\longrightarrow \mathcal{H}_K$, defined by}
\begin{align}\label{positive}
Bh:=A^{*}A h=\int_{\mathbb{R}^{2d}} \nabla^{\top}h(\mathbf{x}) \nabla_1 K(\mathbf{x},\cdot) \mathrm{~d}\mu_{\mathbf{Z}}(\mathbf{x}),
\end{align}
is a positive semidefinite trace class operator that satisfies $\operatorname{Tr}(B)\leq 2d\kappa^2$.
\end{proposition}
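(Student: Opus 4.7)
\emph{Proof proposal.} The plan is to treat the three claims in sequence, using the differential reproducing property of Theorem \ref{Par-Rep} (applied with $s=1$, available since $K\in C_b^{2s+1}=C_b^3$) throughout. Linearity of $A$ is immediate from the linearity of $\nabla$ and left multiplication by the constant matrix $J$, and the fact that each partial derivative of $h\in\mathcal{H}_K$ lies in $\mathcal{H}_K$ (by Theorem \ref{Par-Rep}(i)) shows that $A$ takes values in $\mathcal{H}_K^{2d}$.

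For the boundedness in part (i), I would invoke the embedding inequality \eqref{embedding ineq}, which gives $\|\partial_i h\|_\infty\leq \|h\|_{C_b^1}\leq \kappa\|h\|_{\mathcal{H}_K}$ for each component. Since $\mu_{\mathbf Z}$ is a probability measure and $J$ is orthogonal (so $\|J\mathbf v\|=\|\mathbf v\|$), we then estimate
\begin{equation*}
\|Ah\|_{L^2(\mu_{\mathbf Z};\mathbb R^{2d})}^2 = \int_{\mathbb R^{2d}}\|\nabla h(\mathbf x)\|^2\,\mathrm d\mu_{\mathbf Z}(\mathbf x)=\sum_{i=1}^{2d}\|\partial_ih\|_{L^2(\mu_{\mathbf Z})}^2\leq 2d\,\kappa^2\|h\|_{\mathcal H_K}^2,
\end{equation*}
which yields $\|A\|\leq\sqrt{2d}\,\kappa$.

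For part (ii), I would start from $\langle Ah,g\rangle_{L^2}=\int g(\mathbf x)^\top J\nabla h(\mathbf x)\,\mathrm d\mu_{\mathbf Z}(\mathbf x)$ and rewrite the integrand using the differential reproducing property \eqref{dif-rep}, namely $\partial_i h(\mathbf x)=\langle (D^{e_i}K)_{\mathbf x},h\rangle_{\mathcal H_K}$, to obtain
\begin{equation*}
g(\mathbf x)^\top J\nabla h(\mathbf x)=\bigl\langle g(\mathbf x)^\top J\nabla_1 K(\mathbf x,\cdot),\,h\bigr\rangle_{\mathcal H_K}.
\end{equation*}
The key remaining step, and the main technical obstacle, is to exchange this inner product with the integral in $\mathbf x$. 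I would justify this via a Bochner integral argument: the map $\mathbf x\mapsto g(\mathbf x)^\top J\nabla_1 K(\mathbf x,\cdot)$ is $\mathcal H_K$-valued, and using $\|\partial_i^{(1)}K(\mathbf x,\cdot)\|_{\mathcal H_K}^2=D^{(e_i,e_i)}K(\mathbf x,\mathbf x)\leq\kappa^2$ (obtained by applying the reproducing property to $f=(D^{e_i}K)_{\mathbf x}\in\mathcal H_K$) together with Cauchy--Schwarz and the orthogonality of $J$, I would bound its $\mathcal H_K$-norm by $\sqrt{2d}\,\kappa\|g(\mathbf x)\|$, which is integrable in $L^1(\mu_{\mathbf Z})$ since $g\in L^2(\mu_{\mathbf Z};\mathbb R^{2d})$ and $\mu_{\mathbf Z}$ is a probability. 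Bochner integrability is then automatic, and the commutation with $\langle\cdot,h\rangle_{\mathcal H_K}$ identifies $A^*g$ with the formula in \eqref{adjoint}.

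For part (iii), $B=A^*A$ is automatically self-adjoint and positive semidefinite. The formula \eqref{positive} follows by composing \eqref{adjoint} with $A$ and using $J^\top J=I$ to simplify $(J\nabla h)^\top J=\nabla h^\top$. For the trace estimate, I would pick any orthonormal basis $\{e_k\}$ of $\mathcal H_K$ and write
\begin{equation*}
\operatorname{Tr}(B)=\sum_k\langle Be_k,e_k\rangle_{\mathcal H_K}=\sum_k\|Ae_k\|_{L^2}^2=\int_{\mathbb R^{2d}}\sum_{i=1}^{2d}\sum_k\bigl|\langle e_k,(D^{e_i}K)_{\mathbf x}\rangle_{\mathcal H_K}\bigr|^2\,\mathrm d\mu_{\mathbf Z}(\mathbf x),
\end{equation*}
applying Fubini (which requires checking non-negativity, so it is safe), the orthogonality of $J$, and the differential reproducing property in the form $\partial_i e_k(\mathbf x)=\langle e_k,(D^{e_i}K)_{\mathbf x}\rangle$. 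Parseval's identity collapses the sum over $k$ to $\|(D^{e_i}K)_{\mathbf x}\|_{\mathcal H_K}^2=D^{(e_i,e_i)}K(\mathbf x,\mathbf x)\leq\kappa^2$, so summing over $i$ and integrating against the probability measure $\mu_{\mathbf Z}$ gives $\operatorname{Tr}(B)\leq 2d\kappa^2$. In particular $B$ is trace class, completing the proof.
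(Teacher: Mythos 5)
Your proposal is correct and follows essentially the same route as the paper's own proof: the boundedness estimate via the embedding $\mathcal{H}_K\hookrightarrow C_b^1$, the computation of $A^*$ by pulling $g^\top J$ inside the differential reproducing property, and the trace estimate via an orthonormal basis and the identity $\sum_k|\partial_i e_k(\mathbf{x})|^2=(\nabla_{1,2}K)_{i,i}(\mathbf{x},\mathbf{x})\leq\kappa^2$ (which the paper expands step by step and you invoke directly as Parseval) are all the same. The one small difference is that you explicitly justify the interchange of the $\mathcal{H}_K$-inner product with the $\mu_{\mathbf Z}$-integral via Bochner integrability (using the pointwise bound $\|g(\mathbf x)^\top J\nabla_1K(\mathbf x,\cdot)\|_{\mathcal H_K}\leq\sqrt{2d}\,\kappa\|g(\mathbf x)\|\in L^1(\mu_{\mathbf Z})$), a step the paper simply asserts; this makes your argument slightly more complete on that point without changing the strategy.
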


In the case of finite data, that is, $N< \infty$, we adopt the empirical version of $A$, denoted by $A_N$, that we defined in \eqref{ope-def} and that can be used to represent the learning problem \eqref{emp-pro}-\eqref{emp-fun}. The following proposition, which can be considered as an empirical version of Proposition \ref{Wel-Ope}, can be proved for that operator.

\begin{proposition}\label{Wel-Emp} 
Given the phase space and noisy vector field data $(\mathbf{Z}_N,\mathbf{X}_{\sigma^2,N})$, the operator
$A_N: \mathcal{H}_K \rightarrow \mathbb{R}^{2dN}$ defined by
\begin{align*}
A_Nh=\frac{1}{\sqrt{N}}X_h(\mathbf{Z}_N):=\frac{1}{\sqrt{N}}\mathrm{Vec}\left(X_h(\mathbf{Z}^{(1)})| \cdots | X_h(\mathbf{Z}^{(N)})\right),
\end{align*}
is a bounded linear operator.
The adjoint operator $A_N^*:\mathbb{R}^{2dN}  \rightarrow \mathcal{H}_K$ of $A$ is a finite rank operator given by
$$A^*_{N}W=\frac{1}{\sqrt{N}}W^{T}\mathbb{J}\nabla_1 K(\mathbf{Z}_N,\cdot),$$
with $W\in \mathbb{R}^{2dN}$ and where $\mathbb{J}=\operatorname{diag}\{J,\cdots,J\}$ (with $N$ diagonal copies). Moreover, the operator $B_N$ defined by
\begin{align}\label{emp-ope}
B_Nh:=A^*_{N}A_Nh=\frac{1}{N}\nabla^{\top}h(\mathbf{Z}_N)\nabla_1 K(\mathbf{Z}_N,\cdot),    
\end{align}
is a positive semidefinite compact operator. 
\end{proposition}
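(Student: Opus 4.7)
The proof is the finite-sample mirror of Proposition \ref{Wel-Ope}, with the ambient integrals against $\mu_{\mathbf{Z}}$ replaced by empirical averages over the sample points $\mathbf{Z}_N$. The only analytic input needed is the differential reproducing property from Theorem \ref{Par-Rep}; no new estimate beyond those already employed for $A$ is required. I would also implicitly assume the regularity $K\in C_b^3(\mathbb{R}^{2d}\times\mathbb{R}^{2d})$ that makes $\nabla h$ pointwise meaningful for all $h\in\mathcal{H}_K$.

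First I would verify boundedness of $A_N$. Linearity is inherited from linearity of the gradient. For the norm, using that $J$ is orthogonal,
\begin{equation*}
\|A_N h\|^2_{\mathbb{R}^{2dN}} = \frac{1}{N}\sum_{n=1}^N \|J\nabla h(\mathbf{Z}^{(n)})\|^2 = \frac{1}{N}\sum_{n=1}^N \|\nabla h(\mathbf{Z}^{(n)})\|^2,
\end{equation*}
and the pointwise embedding bound $\|\nabla h(\mathbf{z})\|^2\leq 2d\,\kappa^2\,\|h\|_{\mathcal{H}_K}^2$, which is part (iii) of Theorem \ref{Par-Rep}, yields $\|A_N\|\leq \sqrt{2d}\,\kappa$.

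Next, to identify the adjoint, I would compute $\langle A_N h, W\rangle_{\mathbb{R}^{2dN}}$ for arbitrary $h\in\mathcal{H}_K$ and $W=\mathrm{Vec}(W^{(1)}|\cdots|W^{(N)})\in\mathbb{R}^{2dN}$. A first rewriting gives $\frac{1}{\sqrt N}\sum_n \langle J^{\top}W^{(n)},\nabla h(\mathbf{Z}^{(n)})\rangle_{\mathbb{R}^{2d}}$. Applying the partial derivative reproducing property (Theorem \ref{Par-Rep}(ii)) componentwise, $\partial_i h(\mathbf{Z}^{(n)}) = \langle h, \partial_{1,i}K(\mathbf{Z}^{(n)},\cdot)\rangle_{\mathcal{H}_K}$, and pulling $h$ out by linearity of the RKHS inner product produces $\langle h, A_N^{*}W\rangle_{\mathcal{H}_K}$ with exactly the announced expression $A_N^{*}W=\frac{1}{\sqrt N}W^{\top}\mathbb{J}\nabla_1 K(\mathbf{Z}_N,\cdot)$. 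This same identification exhibits $\mathrm{Range}(A_N^{*})\subseteq\mathrm{span}\{\partial_{1,i}K(\mathbf{Z}^{(n)},\cdot):1\leq n\leq N,\ 1\leq i\leq 2d\}$, so $A_N^{*}$ is a finite-rank operator of rank at most $2dN$.

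Finally, the formula \eqref{emp-ope} for $B_N=A_N^{*}A_N$ follows by substituting $W=\frac{1}{\sqrt N}\mathbb{J}\nabla h(\mathbf{Z}_N)$ into the expression for $A_N^{*}$ and using $J^{\top}J=I_{2d}$. Positive semidefiniteness is automatic from $\langle B_N h,h\rangle_{\mathcal{H}_K}=\|A_N h\|^2_{\mathbb{R}^{2dN}}\geq 0$, and compactness follows because $B_N$ inherits finite rank from $A_N^{*}$ (equivalently, from $A_N$ having finite-dimensional codomain). I do not anticipate a substantive obstacle: the only bookkeeping care needed is to handle the block symplectic factors $J$ and $\mathbb{J}=\operatorname{diag}\{J,\ldots,J\}$ consistently when transferring factors between the Euclidean inner product on $\mathbb{R}^{2dN}$ and the RKHS inner product on $\mathcal{H}_K$.
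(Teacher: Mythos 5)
Your proof is correct and agrees with the paper's on boundedness of $A_N$, the identification of $A_N^*$ via the differential reproducing property of Theorem~\ref{Par-Rep}, and positive semidefiniteness of $B_N=A_N^*A_N$. Where you genuinely diverge is the compactness step: you observe that $A_N$ has finite-dimensional codomain $\mathbb{R}^{2dN}$, so $A_N^*$ (hence $B_N$) has rank at most $2dN$, and a bounded finite-rank operator is automatically compact. The paper instead establishes compactness by hand: it takes a bounded sequence $\{h_i\}$ in the unit ball of $\mathcal{H}_K$, notes that the $2dN$ real numbers $\partial_j h_i(\mathbf{Z}^{(n)})$ are uniformly bounded by $\kappa$, passes to a subsequence via Bolzano--Weierstrass, and then shows $B_N h_{i_q}$ converges in $\mathcal{H}_K$ to a limit in the span of the $(\nabla_1 K)_j(\mathbf{Z}^{(n)},\cdot)$. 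Both arguments are valid, but your finite-rank observation is the shorter and more structural route; it makes transparent why compactness (indeed, finite rank) is automatic at the empirical level, in contrast to the trace-class argument needed for the population operator $B$ in Proposition~\ref{Wel-Ope}. The only small addition worth making explicit is that finite rank alone is not enough — you also need boundedness, which you have already established.
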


Having defined the operators $A$ and $A_N$, we are ready to derive an operator representation of the minimizers that solve the inverse learning problems \eqref{emp-pro}-\eqref{emp-fun} and \eqref{exp-fun}-\eqref{exp-pro}.

\begin{proposition}
\label{Rep-Ope} 
Let $\widehat{h}_{\lambda,N}$ and $h_{\lambda}^*$ be the minimizers of \eqref{emp-fun} and \eqref{exp-fun} respectively. Then, for all $\lambda>0$, the minimizers $h_{\lambda}^*$ and $ \widehat{h}_{\lambda,N}$ are unique and they are given by 
\begin{align}
h_{\lambda}^*:&=(B+\lambda I)^{-1}A^{*}X_H. \nonumber \\
 \widehat{h}_{\lambda,N}:&=\frac{1}{\sqrt{N}}(B_N+\lambda I)^{-1}A_N^{*}\mathbf{X}_{\sigma^2,N}, \mbox{ with } B_N=A_N^*A_N.\label{em10} 
\end{align}
\end{proposition}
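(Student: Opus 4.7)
The plan is to exploit the fact that both functionals $R_\lambda$ and $\widehat{R}_{\lambda,N}$ are strictly convex quadratic expressions on $\mathcal{H}_K$ when $\lambda > 0$, so each admits a unique minimizer characterized by the vanishing of its Fréchet derivative. My first step would be to rewrite both risks in clean operator form using $A$ and $A_N$ from \eqref{ope-def}. Specifically, one has
\begin{equation*}
R_{\lambda}(h) = \|Ah - X_H\|^2_{L^2(\mu_{\mathbf{Z}})} + \lambda\|h\|^2_{\mathcal{H}_K} + \sigma^2, \qquad \widehat{R}_{\lambda,N}(h) = \Big\|A_N h - \tfrac{1}{\sqrt{N}}\mathbf{X}_{\sigma^2,N}\Big\|^2_{\mathbb{R}^{2dN}} + \lambda\|h\|^2_{\mathcal{H}_K},
\end{equation*}
where the second identity follows by factoring $1/\sqrt{N}$ out of the empirical sum inside the squared norm.

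Next, I would compute the Fréchet derivative of each functional by expanding in a direction $g \in \mathcal{H}_K$ and using the adjoint identities for $A^*$ and $A_N^*$ provided by Propositions \ref{Wel-Ope} and \ref{Wel-Emp}. This yields, for instance,
\begin{equation*}
DR_{\lambda}(h)[g] = 2\big\langle (A^*A + \lambda I)h - A^*X_H,\, g \big\rangle_{\mathcal{H}_K},
\end{equation*}
together with the analogous identity for $\widehat{R}_{\lambda,N}$. Setting these derivatives to zero for all $g \in \mathcal{H}_K$ produces the normal equations $(B+\lambda I)\,h^*_{\lambda} = A^*X_H$ and $(B_N + \lambda I)\,\widehat{h}_{\lambda,N} = \tfrac{1}{\sqrt{N}}A_N^*\mathbf{X}_{\sigma^2,N}$.

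To close the argument, I would invoke that $B$ and $B_N$ are positive semidefinite, bounded, and self-adjoint operators on $\mathcal{H}_K$ by Propositions \ref{Wel-Ope} and \ref{Wel-Emp}, so that the shifted operators $B+\lambda I$ and $B_N+\lambda I$ have spectrum contained in $[\lambda, \infty)$ and are therefore boundedly invertible with operator-norm bound $1/\lambda$. Solving the normal equations then yields exactly the closed-form expressions stated in the proposition, and uniqueness follows separately from strict convexity, since $\lambda I$ contributes a coercive term that forces any critical point to be the unique global minimum.

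I expect the main obstacle to be essentially bookkeeping rather than substantive: one must verify that $R_\lambda$ is well defined, which requires $X_H \in L^2(\mu_{\mathbf{Z}};\mathbb{R}^{2d})$ so that the pairing $\langle Ah, X_H\rangle_{L^2(\mu_{\mathbf{Z}})}$ and hence $A^*X_H$ make sense. This is automatic if $H \in \mathcal{H}_K$ thanks to the $C_b^s$-embedding in Theorem \ref{Par-Rep}; otherwise it would need to be recorded as a standing assumption on the target Hamiltonian. Beyond this regularity check, the derivation is a direct application of Hilbert-space quadratic minimization.
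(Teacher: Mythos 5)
Your proposal is correct and follows essentially the same route as the paper: express the risks in operator form, compute the derivative (you say Fréchet, the paper says Gâteaux, but they coincide for these quadratic functionals), derive the normal equations via the adjoints from Propositions \ref{Wel-Ope} and \ref{Wel-Emp}, and invert $B+\lambda I$ and $B_N+\lambda I$ using positivity. Your extra remark that $X_H \in L^2(\mu_{\mathbf{Z}};\mathbb{R}^{2d})$ must be checked is a reasonable piece of bookkeeping that the paper handles implicitly through Proposition \ref{Wel-Ope} (which shows $A$ maps $\mathcal{H}_K$ into $L^2(\mu_{\mathbf{Z}};\mathbb{R}^{2d})$ and $H\in\mathcal{H}_K$).
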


\begin{proof}
By definition, $ \widehat{R}_{\lambda,N}$ is a functional on $\mathcal{H}_K$. Moreover,  its G\^ateaux derivative is given by
\begin{align*}
\mathrm{d}  \widehat{R}_{\lambda,N}(h) \cdot (\psi)& =\lim_{t\to 0}\frac{ \widehat{R}_{\lambda,N}(h+t\psi)- \widehat{R}_{\lambda,N}(h)}{t} =2 \Big\langle A_N\psi,A_Nh-\frac{1}{\sqrt{N}}\mathbf{X}_{\sigma^2,N}\Big\rangle_{\mathbb{R}^{2dN}}+2\lambda \langle \psi,h\rangle_{\mathcal{H}_K}\\
&=2 \Big\langle \psi,(B_N+\lambda I)h-\frac{1}{\sqrt{N}}A_N^*\mathbf{X}_{\sigma^2,N}\Big\rangle_{\mathcal{H}_K}, \quad \mbox{for any $h, \psi \in \mathcal{H}_K$.} 
\end{align*}
Thus, the critical points of the functional $ \widehat{R}_{\lambda,N}$ are determined by the equality
\begin{align}\label{Cri-eqn}
\Big\langle \psi,(B_N+\lambda )h-\frac{1}{\sqrt{N}}A_N^*\mathbf{X}_{\sigma^2,N}\Big\rangle_{\mathcal{H}_K} = 0 , \quad \mbox{for all} \quad \psi\in \mathcal{H}_K.
\end{align}
Since $B$ and $B_N$ are compact operators, for any $\lambda>0$, the operators $B+\lambda I$ and $B_N+\lambda I$ are bounded and their inverse exists. We note that this is so because by Propositions \ref{Wel-Ope} and \ref{Wel-Emp} the operators $B$  and $B _N  $ are positive. Therefore, the critical equation \eqref{Cri-eqn} has a unique solution $ \widehat{h}_{\lambda,N}:=\frac{1}{\sqrt{N}}(B_N+\lambda I)^{-1}A_N^{*}\mathbf{X}_{\sigma^2,N}$ because of the arbitrariness of $\psi$. Similarly, $h_{\lambda}^*:=(B+\lambda I)^{-1}A^{*}X_H$ is the unique minimizer of the regularized statistical risk \eqref{exp-fun}.
\end{proof}

We now prove a fact that will be needed later on in the paper in connection to what we call the {\bf differential Gram matrix} that we define as $\nabla_{1,2}K(\mathbf{Z}_N,\mathbf{Z}_N) $ (the symbol $\nabla_{1,2}$ denotes partial derivatives with respect to all the entries in $K$). More specifically, we now show that the differential Gram matrix is positive semidefinite and that hence $\nabla_{1,2}K(\mathbf{Z}_N,\mathbf{Z}_N)+\lambda NI$ is invertible for any $\lambda>0$, where $I$ is the identity matrix. 

\begin{proposition}\label{Pos-Gra}
Given a Mercer kernel $K$  such that $K\in C_b^3(\mathbb{R}^{2d}\times \mathbb{R}^{2d})$, the Gram matrix $\nabla_{1,2}K(\mathbf{Z}_N,\mathbf{Z}_N)$ is positive semidefinite. 
\end{proposition}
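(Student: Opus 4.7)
\bigskip

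\noindent\textbf{Proof proposal.} The plan is to reduce the positive semidefiniteness of $\nabla_{1,2}K(\mathbf{Z}_N,\mathbf{Z}_N)$ to a squared RKHS norm, mimicking the classical argument for ordinary Gram matrices, but now using the differential reproducing property of Theorem \ref{Par-Rep} instead of the ordinary one. Since $K\in C_b^{3}(\mathbb{R}^{2d}\times\mathbb{R}^{2d})=C_b^{2\cdot 1+1}(\mathbb{R}^{2d}\times\mathbb{R}^{2d})$, Theorem \ref{Par-Rep} applies with $s=1$, so every first-order partial $(\nabla_1 K)_i(\mathbf{x},\cdot)$ belongs to $\mathcal{H}_K$ and the identity $\partial_i f(\mathbf{x})=\langle f,(\nabla_1 K)_i(\mathbf{x},\cdot)\rangle_{\mathcal{H}_K}$ holds for all $f\in\mathcal{H}_K$.

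First, fix an arbitrary $W\in\mathbb{R}^{2dN}$ and write it blockwise as $W=\mathrm{Vec}(w^{(1)}|\cdots|w^{(N)})$ with $w^{(n)}\in\mathbb{R}^{2d}$. Expanding the quadratic form block-by-block gives
\begin{equation*}
W^{\top}\nabla_{1,2}K(\mathbf{Z}_N,\mathbf{Z}_N)W
=\sum_{n,n'=1}^{N}\sum_{i,j=1}^{2d} w_i^{(n)} w_j^{(n')}\,\partial_{x_i}\partial_{y_j}K(\mathbf{Z}^{(n)},\mathbf{Z}^{(n')}).
\end{equation*}

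Next, I would apply the differential reproducing property twice to obtain the symmetric identity
\begin{equation*}
\partial_{x_i}\partial_{y_j}K(\mathbf{x},\mathbf{y})
=\bigl\langle (\nabla_1 K)_i(\mathbf{x},\cdot),\,(\nabla_1 K)_j(\mathbf{y},\cdot)\bigr\rangle_{\mathcal{H}_K},
\end{equation*}
obtained by using part (ii) of Theorem \ref{Par-Rep} with $f=(\nabla_1 K)_i(\mathbf{x},\cdot)$ and then invoking the symmetry $\partial_{y_j}f(\mathbf{y})=\langle f,(\nabla_1 K)_j(\mathbf{y},\cdot)\rangle_{\mathcal{H}_K}$. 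This is precisely the same calculation that already appears, implicitly, in the trace computation in the proof of Proposition \ref{Wel-Ope}, so the mechanics are familiar.

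Finally, since the double sum is finite, the inner product can be pulled outside, yielding
\begin{equation*}
W^{\top}\nabla_{1,2}K(\mathbf{Z}_N,\mathbf{Z}_N)W
=\left\|\,\sum_{n=1}^{N}\sum_{i=1}^{2d} w_i^{(n)}(\nabla_1 K)_i(\mathbf{Z}^{(n)},\cdot)\right\|_{\mathcal{H}_K}^{2}\;\geq\;0,
\end{equation*}
which proves the claim. An equivalent, slightly more abstract phrasing that I might use as a sanity check is to observe that $\frac{1}{N}\mathbb{J}^{\top}\nabla_{1,2}K(\mathbf{Z}_N,\mathbf{Z}_N)\mathbb{J}$ is precisely the matrix representation of $A_N A_N^{\ast}$ on $\mathbb{R}^{2dN}$, and the latter is positive semidefinite by construction; since $\mathbb{J}$ is invertible, positive semidefiniteness transfers to $\nabla_{1,2}K(\mathbf{Z}_N,\mathbf{Z}_N)$ itself. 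There is no serious obstacle: the only nontrivial ingredient is the differential reproducing property, which is already established in Theorem \ref{Par-Rep}, and everything else is bookkeeping of indices.
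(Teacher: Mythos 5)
Your proof is correct, and it takes a genuinely different path from the paper's. You argue directly: pick an arbitrary $W\in\mathbb{R}^{2dN}$, apply the differential reproducing property twice to rewrite each entry $\partial_{x_i}\partial_{y_j}K(\mathbf{Z}^{(n)},\mathbf{Z}^{(n')})$ as an RKHS inner product $\langle(\nabla_1K)_i(\mathbf{Z}^{(n)},\cdot),(\nabla_1K)_j(\mathbf{Z}^{(n')},\cdot)\rangle_{\mathcal{H}_K}$, and then recognize the whole quadratic form as $\bigl\|\sum_{n,i}w_i^{(n)}(\nabla_1K)_i(\mathbf{Z}^{(n)},\cdot)\bigr\|_{\mathcal{H}_K}^2\ge 0$. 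This is the classical proof that ordinary Gram matrices are PSD, verbatim transported to differential kernel sections. The paper instead diagonalizes the real symmetric matrix $\mathbb{J}\nabla_{1,2}K(\mathbf{Z}_N,\mathbf{Z}_N)\mathbb{J}^{\top}=PDP^{\top}$, forms the functions $\widetilde{e}_i=\langle f_i,\mathbb{J}\nabla_1K(\mathbf{Z}_N,\cdot)\rangle_{\mathbb{R}^M}\in\mathcal{H}_K$ out of the eigenvectors, and observes that $\|\widetilde{e}_i\|_{\mathcal{H}_K}^2=d_i$, so each eigenvalue is nonnegative. Both arguments rest on the same core ingredients — parts \textbf{(i)} and \textbf{(ii)} of Theorem~\ref{Par-Rep} — but yours avoids the eigendecomposition entirely and is, to my mind, cleaner and closer to the source of the phenomenon. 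One small correction to your parenthetical sanity check: direct computation gives that the matrix representing $A_NA_N^{\ast}$ is $\tfrac{1}{N}\mathbb{J}\nabla_{1,2}K(\mathbf{Z}_N,\mathbf{Z}_N)\mathbb{J}^{\top}$ (not $\tfrac{1}{N}\mathbb{J}^{\top}\nabla_{1,2}K\,\mathbb{J}$); this is harmless for the conclusion, since $\mathbb{J}$ is orthogonal and congruence by any invertible matrix preserves positive semidefiniteness, but it is worth getting the conjugation right since the paper relies on exactly this identity elsewhere (e.g.\ in the proof of Theorem~\ref{Brd-Gau}).
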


\begin{proof}
Denote $M=2dN$. Since the differential Gram matrix $\nabla_{1,2}K(\mathbf{Z}_N,\mathbf{Z}_N)$ is real symmetric, then so is $\mathbb{J}\nabla_{1,2}K(\mathbf{Z}_N,\mathbf{Z}_N)\mathbb{J}^{\top}$ and hence there exists an orthonormal matrix $P\in\mathbb{R}^{M\times M}$ that diagonalizes $\mathbb{J}\nabla_{1,2}K(\mathbf{Z}_N,\mathbf{Z}_N)\mathbb{J}^{\top}$. This means that
\begin{align*}
\mathbb{J}\nabla_{1,2}K(\mathbf{Z}_N,\mathbf{Z}_N)\mathbb{J}^{\top}&=PDP^{\top}\\
&=\begin{bmatrix}
|&|&\dots&|\\
f_1&f_2&\dots&f_M\\
|&|&\dots&|
\end{bmatrix}
\begin{bmatrix}
d_1&0&\dots&0 \\
0&d_2&\dots&0 \\
\vdots&\vdots&\ddots&\vdots\\
0&0&\dots&d_M
\end{bmatrix}\begin{bmatrix}
|&|&\dots&|\\
f_1&f_2&\dots&f_M\\
|&|&\dots&|
\end{bmatrix}^{\top},
\end{align*} 
where  $\left \{d_i \right\}_{i=1}^{M}$ and $\left \{f_i \right\}_{i=1}^{M}$ are the real eigenvalues and the corresponding eigenvectors of $\mathbb{J}\nabla_{1,2}K(\mathbf{Z}_N,\mathbf{Z}_N)\mathbb{J}^{\top}$. Note that $\left \{d_i \right\}_{i=1}^{M}$ are also eigenvalues of $\nabla_{1,2}K(\mathbf{Z}_N,\mathbf{Z}_N)$.
We now define $\widetilde{e}_i=\langle f_i, \mathbb{J}\nabla_1K(\mathbf{Z}_N,\cdot ) \rangle_{\mathbb{R}^{M}}$. By part {\bf (i)} of Theorem \ref{Par-Rep}, we obtain that $\widetilde{e}_i\in\mathcal{H}_K$ and that
\begin{equation}
\begin{aligned}\label{non-bas}
\|\widetilde{e}_i\|_{\mathcal{H}_K}^2 &= \left\langle\langle f_i, \mathbb{J}\nabla_1K(\mathbf{Z}_N,\cdot ) \rangle_{\mathbb{R}^{M}},\langle f_i, \mathbb{J}\nabla_1K(\mathbf{Z}_N,\cdot ) \rangle_{\mathbb{R}^{M}}\right\rangle_{\mathcal{H}_K} \\
&=f_i^{\top}\mathbb{J}\nabla_{1,2}K(\mathbf{Z}_N,\mathbf{Z}_N)\mathbb{J}^{\top}f_i=f_i^{\top}(d_i f_i)
=d_i,
\end{aligned}
\end{equation}
where in the second equality we used part {\bf (ii)} of Theorem \ref{Par-Rep}. Hence, $d_i\geq0$ for all $i=1,\cdots,M$ and hence we can conclude that the differential Gram matrix $\nabla_{1,2}K(\mathbf{Z}_N,\mathbf{Z}_N)$ is positive semidefinite.
\end{proof}

\begin{remark}
\normalfont
It follows from the definition of positive semidefinite kernels that the usual Gram matrix $K(\mathbf{Z}_N, \mathbf{Z}_N)$ is positive semidefinite. Proposition \ref{Pos-Gra} shows that just the additional hypothesis $K\in C_b^3(\mathbb{R}^{2d}\times \mathbb{R}^{2d})$ (necessary to invoke the differential reproducing property in Theorem \ref{Par-Rep}) suffices for the differential Gram matrix $\nabla_{1,2}K(\mathbf{Z}_N,\mathbf{Z}_N)$ to satisfy the same property.
\end{remark}

\noindent {\bf The Differential Representer Theorem and the solution of the structure-preserving kernel ridge regression}\quad
Below, we derive what we call a Differential Representer Theorem to make an explicit distinction from the usual Representer theorem. This result shows that the estimator $\widehat{h}_{\lambda, N}$ introduced in \eqref{emp-pro}, that is, the minimizer of the regularized empirical risk functional, can be written as a linear combination of the partial derivatives of the kernel function $K(\mathbf{z},\cdot)$ with respect to the components of the $\mathbf{z}$ variable, and then evaluated in the dataset $\mathbf{Z}_N$.

\begin{theorem}[{\bf Differential Representer Theorem for Symplectic Vector Spaces}]
\label{Rep-Ker} 
For every $\lambda>0$, the optimization problem \eqref{emp-pro} has a unique solution $\widehat{h}_{\lambda, N}$ that can be written as
\begin{equation}
\label{rep-ker}
 \widehat{h}_{\lambda,N}= \sum_{i=1}^N \langle \widehat {\bf c}_{i},\nabla_1K({\bf Z}^{(i)},\cdot)\rangle,
\end{equation}
with $\widehat {\bf c}_{1}, \ldots, \widehat {\bf c}_{N} \in \mathbb{R}^{2d}$, $\left\langle \cdot , \cdot \right\rangle $ the Euclidean inner product in $\mathbb{R}^{2d} $, and where $\nabla_1K(\mathbf{z},\cdot) \in \mathbb{R}^{2d}$ denotes the gradient of $K$ with respect to the $\mathbf{z}$ variable. Moreover, if we denote by $\widehat{\mathbf{c}} \in \mathbb{R}^{2dN}$ the vectorization of $\left(\widehat {\bf c}_{1}| \cdots | \widehat {\bf c}_{N}\right)$, then we have
\begin{align*}
\widehat{\mathbf{c}}=(\nabla_{1,2}K(\mathbf{Z}_N,\mathbf{Z}_N)+\lambda NI)^{-1}\mathbb{J}^{\top}\mathbf{X}_{\sigma^2,N}.
\end{align*}
The matrix $\nabla_{1,2}K(\mathbf{Z}_N,\mathbf{Z}_N) $ is the differential Gram matrix defined above.
\end{theorem}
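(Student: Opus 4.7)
The plan is to mimic the classical Representer Theorem proof, using the differential reproducing property (Theorem \ref{Par-Rep}) in place of the usual reproducing property, and then explicitly solve the resulting finite-dimensional linear system. The convexity and uniqueness of the minimizer are already guaranteed by Proposition \ref{Rep-Ope}, so only the explicit form must be extracted.

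\textbf{Step 1 (reducing to a finite-dimensional subspace).} Define the closed subspace
\[
V := \operatorname{span}\bigl\{(\nabla_1 K)_j(\mathbf{Z}^{(n)},\cdot)\ \big|\ 1\le n \le N,\ 1\le j\le 2d\bigr\}\subset \mathcal{H}_K,
\]
and decompose any $h\in\mathcal{H}_K$ as $h = h_V + h_{V^\perp}$. By the differential reproducing property in Theorem \ref{Par-Rep}, for each $n,j$,
\[
\partial_j h(\mathbf{Z}^{(n)}) = \langle h,(\nabla_1 K)_j(\mathbf{Z}^{(n)},\cdot)\rangle_{\mathcal{H}_K} = \langle h_V,(\nabla_1 K)_j(\mathbf{Z}^{(n)},\cdot)\rangle_{\mathcal{H}_K}=\partial_j h_V(\mathbf{Z}^{(n)}),
\]
so $X_h(\mathbf{Z}^{(n)})=X_{h_V}(\mathbf{Z}^{(n)})$ and the empirical loss in \eqref{emp-fun} depends only on $h_V$. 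Since $\|h\|_{\mathcal{H}_K}^2 = \|h_V\|_{\mathcal{H}_K}^2 + \|h_{V^\perp}\|_{\mathcal{H}_K}^2$, enlarging $h_{V^\perp}$ strictly increases $\widehat R_{\lambda,N}$, which forces $\widehat h_{\lambda,N}\in V$. Hence there exist $\widehat{\mathbf{c}}_1,\dots,\widehat{\mathbf{c}}_N\in\mathbb{R}^{2d}$ with $\widehat h_{\lambda,N}=\sum_{i=1}^N \langle \widehat{\mathbf{c}}_i,\nabla_1 K(\mathbf{Z}^{(i)},\cdot)\rangle$, establishing \eqref{rep-ker}.

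\textbf{Step 2 (computing the linear system for $\widehat{\mathbf{c}}$).} Plug the ansatz into the empirical risk. Using the differential reproducing property once more gives
\[
\nabla h(\mathbf{Z}_N) = \nabla_{1,2}K(\mathbf{Z}_N,\mathbf{Z}_N)\,\widehat{\mathbf{c}},\qquad \|h\|_{\mathcal{H}_K}^2 = \widehat{\mathbf{c}}^{\top}\nabla_{1,2}K(\mathbf{Z}_N,\mathbf{Z}_N)\,\widehat{\mathbf{c}},
\]
so $\widehat R_{\lambda,N}$ becomes the quadratic function
\[
Q(\widehat{\mathbf{c}}) = \tfrac{1}{N}\bigl\|\mathbb{J}\,\nabla_{1,2}K(\mathbf{Z}_N,\mathbf{Z}_N)\widehat{\mathbf{c}} - \mathbf{X}_{\sigma^2,N}\bigr\|^2 + \lambda\,\widehat{\mathbf{c}}^{\top}\nabla_{1,2}K(\mathbf{Z}_N,\mathbf{Z}_N)\widehat{\mathbf{c}}.
\]
Differentiating and using $\mathbb{J}^\top\mathbb{J}=I_{2dN}$ together with the symmetry of $\nabla_{1,2}K(\mathbf{Z}_N,\mathbf{Z}_N)$ (from the symmetry of $K$), the stationarity condition factors as
\[
\nabla_{1,2}K(\mathbf{Z}_N,\mathbf{Z}_N)\,\Bigl[\bigl(\nabla_{1,2}K(\mathbf{Z}_N,\mathbf{Z}_N)+\lambda N I\bigr)\widehat{\mathbf{c}} - \mathbb{J}^\top\mathbf{X}_{\sigma^2,N}\Bigr]=0.
\]
Selecting the canonical coefficient vector that lies in the range of $\nabla_{1,2}K(\mathbf{Z}_N,\mathbf{Z}_N)$ (consistent with Step 1), it suffices to take
\[
\bigl(\nabla_{1,2}K(\mathbf{Z}_N,\mathbf{Z}_N)+\lambda N I\bigr)\widehat{\mathbf{c}} = \mathbb{J}^\top\mathbf{X}_{\sigma^2,N}.
\]

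\textbf{Step 3 (invertibility and uniqueness).} Proposition \ref{Pos-Gra} gives that $\nabla_{1,2}K(\mathbf{Z}_N,\mathbf{Z}_N)$ is positive semidefinite; therefore for $\lambda>0$ the matrix $\nabla_{1,2}K(\mathbf{Z}_N,\mathbf{Z}_N)+\lambda N I$ is strictly positive definite and invertible, yielding
\[
\widehat{\mathbf{c}} = \bigl(\nabla_{1,2}K(\mathbf{Z}_N,\mathbf{Z}_N)+\lambda N I\bigr)^{-1}\mathbb{J}^\top\mathbf{X}_{\sigma^2,N}.
\]
Uniqueness of $\widehat h_{\lambda,N}$ from Proposition \ref{Rep-Ope} implies that this $\widehat{\mathbf{c}}$ produces exactly the minimizer (any coefficient vector differing from it by a kernel element of $\nabla_{1,2}K(\mathbf{Z}_N,\mathbf{Z}_N)$ would give the same function in $\mathcal{H}_K$).

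The main subtlety I anticipate is Step 2: one has to be careful that the factorization of the stationarity condition yields coefficients that genuinely reproduce the unique minimizer $\widehat h_{\lambda,N}$ rather than merely a coefficient vector lying in an affine fiber of solutions. This is resolved by appealing to uniqueness in Proposition \ref{Rep-Ope} together with the fact that the proposed $\widehat{\mathbf{c}}$ automatically lies in a canonical complement of $\ker\nabla_{1,2}K(\mathbf{Z}_N,\mathbf{Z}_N)$ thanks to the invertibility established in Step 3.
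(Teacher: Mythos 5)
Your proof is correct, but it follows a genuinely different route from the paper's. In Step~1 you use the classical Representer Theorem orthogonality argument adapted to derivatives: decompose $h=h_V+h_{V^\perp}$, apply the differential reproducing property to see that the data-fit term depends only on $h_V$, and invoke the Pythagorean identity on the ridge term to force $\widehat h_{\lambda,N}\in V$. The paper instead starts from the operator-theoretic closed form $\widehat h_{\lambda,N}=\frac{1}{\sqrt{N}}(B_N+\lambda I)^{-1}A_N^{*}\mathbf{X}_{\sigma^2,N}$ from Proposition~\ref{Rep-Ope}, notes that $A_N^{*}\mathbf{X}_{\sigma^2,N}$ already lies in the finite-dimensional span $\mathcal{H}_K^N$ and that $\mathcal{H}_K^N$ is $(B_N+\lambda I)$-invariant (hence, being finite-dimensional, also $(B_N+\lambda I)^{-1}$-invariant), concluding $\widehat h_{\lambda,N}\in\mathcal{H}_K^N$. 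Your version is more self-contained: it does not rely on the operator solution to establish the span representation, which is in the spirit of the standard scalar Representer Theorem proof. The paper's version is shorter in context because it reuses machinery already set up. For the coefficients, you plug the ansatz into the risk and differentiate the resulting quadratic, while the paper applies $(B_N+\lambda I)$ to the operator solution and reads off a linear equation in $\mathcal{H}_K^N$ (\eqref{euler equ in this case}) rather than in coefficient space; the latter circumvents the rank-deficiency issue you flag at the end of Step~2. Your handling of that issue is nonetheless sound: $\widehat{\mathbf{c}}=(G+\lambda NI)^{-1}\mathbb{J}^\top\mathbf{X}_{\sigma^2,N}$ with $G=\nabla_{1,2}K(\mathbf{Z}_N,\mathbf{Z}_N)$ exactly solves the stationarity condition $G\bigl[(G+\lambda NI)\widehat{\mathbf{c}}-\mathbb{J}^\top\mathbf{X}_{\sigma^2,N}\bigr]=0$, any two coefficient vectors differing by an element of $\ker G$ represent the same element of $\mathcal{H}_K$ (their difference has RKHS norm $v^\top G v=0$), and convexity of the quadratic together with uniqueness from Proposition~\ref{Rep-Ope} ties it off. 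The phrase ``lies in the range of $\nabla_{1,2}K(\mathbf{Z}_N,\mathbf{Z}_N)$'' in Step~2 is a slight misstatement — what Step~1 gives is $\widehat h_{\lambda,N}\in V$, not that the coefficient vector lies in $\operatorname{ran}G$ — but that claim is not actually used; the argument already closes on uniqueness of the function, not of the coefficients.
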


\begin{proof} The proof is based on the operator representations of the minimizers that we introduced in Proposition \ref{Wel-Emp}, which allows us to use tools from the spectral theory. 
Let $\mathcal{H}_K^{N}$ be the space given by
\begin{equation}\label{sub-spa}
\mathcal{H}_K^{N}:=\mathrm{span}\left\{(\nabla_1K)_i(\mathbf{Z}^{(j)},\cdot)\mid i=1,\cdots,2d, \ j=1, \ldots , N\right\},
\end{equation}
where $(\nabla_1K)_i(\mathbf{z},\cdot)$ denotes the $i$-th component of the gradient $\nabla_1K(\mathbf{z},\cdot)$ of $K$ with respect to $\mathbf{z}$. obviously $\mathcal{H}_K^N$ is a subspace of $\mathcal{H}_K$ since $(\nabla_1K)_i(\mathbf{z},\cdot)\in\mathcal{H}_K$ for all $\mathbf{z}\in \mathbb{R}^{2d}$ and $i=1,\cdots,2d$ by Theorem \ref{Par-Rep}. Then by the representation of the operator $B_N$ in Proposition \ref{Wel-Emp}, we know that $B_N(\mathcal{H}_{K}^N) \subseteq \mathcal{H}_{K}^N$ (see the expression \eqref{emp-ope}), that is, $\mathcal{H}_{K}^N$ is an invariant space for the operator $B_N$. This implies that, for any $\lambda>0 $, $(B_N+ \lambda I)(\mathcal{H}_{K}^N) \subseteq \mathcal{H}_{K}^N$. Now, since by Proposition \ref{Wel-Emp} the operator $B _N $ is positive semidefinite, we can conclude that the restriction $(B_N+ \lambda I)|_{\mathcal{H}_{K}^N} $ is invertible and since the space $\mathcal{H}_{K}^N  $ is finite-dimensional then it is also an invariant subspace of $(B_N+ \lambda I)|_{\mathcal{H}_{K}^N}^{-1} $, that is 
\begin{equation*}
(B_N+ \lambda I)|_{\mathcal{H}_{K}^N}^{-1} \left(\mathcal{H}_{K}^N\right) \subset \mathcal{H}_{K}^N.
\end{equation*}
Thus, there exist vectors $\widehat {\bf c}_{1}, \ldots, \widehat {\bf c}_{N} \in \mathbb{R}^{2d}$ such that 
\begin{equation}\label{rep-ker1}
\widehat{h}_{\lambda,N}= \sum_{i=1}^N \langle \widehat {\bf c}_{i},\nabla_1K(\mathbf{{\bf Z}}^{(i)},\cdot)\rangle.
\end{equation}
Then, applying $(B_N + \lambda I)$ on both sides of \eqref{em10}, plugging \eqref{rep-ker1} into the identity, and denoting by $\widehat{\mathbf{c}} \in \mathbb{R}^{2dN}$ the vectorization of $\left(\widehat {\bf c}_{1}| \cdots | \widehat {\bf c}_{N}\right)$, we obtain
\begin{align}
\label{euler equ in this case}
\widehat{\mathbf{c}}^{\top} \left (\frac{1}{N}\nabla_{1,2}K(\mathbf{Z}_N,\mathbf{Z}_N)+\lambda I\right) \nabla_1 K(\mathbf{Z}_N,\cdot)=\frac{1}{\sqrt{N}}A_N^{*}\mathbf{X}_{\sigma^2,N} = \frac{1}{N}\mathbf{X}_{\sigma^2,N}^{\top}\mathbb{J}\nabla_1 K(\mathbf{Z}_N,\cdot).
\end{align}
Since the matrix $\nabla_{1,2}K(\mathbf{Z}_N,\mathbf{Z}_N)+\lambda N I$ is invertible due to the positive semidefiniteness of the differential Gram matrix $\nabla_{1,2}K(\mathbf{Z}_N,\mathbf{Z}_N)$ that we proved in Proposition \ref{Pos-Gra}, we can write the expression 
\begin{align}
\label{expression for chat}
\widehat{\mathbf{c}}= (\nabla_{1,2}K(\mathbf{Z}_N,\mathbf{Z}_N)+\lambda NI)^{-1}\mathbb{J}^{\top} \mathbf{X}_{\sigma^2,N},
\end{align}
that a straightforward verification shows that plugged into \eqref{euler equ in this case} satisfies \eqref{euler equ in this case}. This shows that the function $\widehat{h}_{\lambda,N} $ in \eqref{rep-ker1} with $\widehat{\mathbf{c}} $ determined by \eqref{expression for chat} is a minimizer of the regularized empirical risk functional $ \widehat{R}_{\lambda,N}$ in \eqref{emp-fun}. Since by Proposition \ref{Rep-Ope}, this minimizer is unique, the result follows.
\end{proof}

\begin{remark}\normalfont
In this paper, we employ the classical Tikhonov regularization technique for our learning problems, which is a typical choice among modern regularization methods for inverse problems \cite{engl1996regularization,benning2018modern}.
This regularization term plays a crucial role in ensuring the well-posedness of the minimization problems (3.3)-(3.4) and (3.5)-(3.6), in combination with the differential reproducing property in Theorem \ref{Par-Rep}.
Define the kernel of the operator $A$ as follows:
\begin{align*}
\mathcal{H}_{\mathrm{null}}:=\{h\in\mathcal{H}_K\mid Ah=X_h=0\}=\{f\in\mathcal{H}_K\mid\nabla h=0\}.
\end{align*}
In general, the space $\mathcal{H}_{\mathrm{null}}$ contains non-zero constant functions (unlike the case of the Gaussian kernel which, as we saw in Remark \ref{two observations} {\bf (ii)}, is trivial). Moreover,
using the notation introduced in \eqref{halphaspaces} we can write
\begin{equation}
\label{expressionnull}
\mathcal{H}_{\operatorname{null}}= \bigcap\limits_{i=1}^{2d} {\mathcal H}_{e_i},
\end{equation}
where the vectors $e _i\in \mathbb{R}^{2d} $ are the elements of the canonical basis. Moreover, by Corollary \ref{ker-reproducing-property} and the expression \eqref{rep-ker} it is clear that
\begin{equation}
\label{expressionnullperp}
\widehat{h}_{\lambda,N} \in \mathcal{H}_{\operatorname{null}}^\bot=\bigoplus\limits_{i=1}^{2d} {\mathcal H}_{e_i}^\bot
\end{equation}
It is clear that adding elements in $\mathcal{H}_{\mathrm{null}} $ to $\widehat{h}_{\lambda,N} $ does not change the corresponding Hamiltonian vector field and hence we may wonder why, according to Theorem \ref{Rep-Ker}, the optimizer $\widehat{h}_{\lambda,N} $ is unique. The explanation for this fact is in the use of the regularization term. Indeed, let $\widehat{h}_{\lambda,N}$ be the minimizer in \eqref{rep-ker}  and let $h\in\mathcal{H}_{\mathrm{null}}$. Even though $\widehat{h}_{\lambda,N}$ and $\widehat{h}_{\lambda,N}+h$ have the same Hamiltonian vector field associated, it is easy to show that $\widehat{h}_{\lambda,N}+h$ is a minimizer of \eqref{emp-pro} if and only if $h\equiv0$. This is because
\begin{align*}
\widehat{R}_{\lambda,N}(\widehat{h}_{\lambda,N}+h)&:=\frac{1}{N}\sum_{n=1}^{N} \|X_{\widehat{h}_{\lambda,N}+h}(\mathbf{Z}^{(n)})-\mathbf{X}^{(n)}_{\sigma^2}\|^2 + \lambda\|\widehat{h}_{\lambda,N}+h\|_{\mathcal{H}_K}^2  \\
&=\frac{1}{N}\sum_{n=1}^{N} \|X_{\widehat{h}_{\lambda,N}}(\mathbf{Z}^{(n)})-\mathbf{X}^{(n)}_{\sigma^2}\|^2 + \lambda\left(\|\widehat{h}_{\lambda,N}\|_{\mathcal{H}_K}^2+\|h\|^2_{\mathcal{H}_K} +2\langle \widehat{h}_{\lambda,N},h\rangle_{\mathcal{H}_K}\right) \\
&=\frac{1}{N}\sum_{n=1}^{N} \|X_{\widehat{h}_{\lambda,N}}(\mathbf{Z}^{(n)})-\mathbf{X}^{(n)}_{\sigma^2}\|^2 + \lambda\left(\|\widehat{h}_{\lambda,N}\|_{\mathcal{H}_K}^2+\|h\|^2_{\mathcal{H}_K} \right),
\end{align*}
where the last equality is due to \eqref{expressionnull} and \eqref{expressionnullperp}.  
\end{remark}

\begin{remark}
\normalfont
{\bf (i)} The main difference between the Differential Representer Theorem \ref{Rep-Ker} and the usual Representer Theorem is that the gradient of the kernel function is involved in the statement. The usual Representer Theorem asserts that the minimizer of certain regularized losses can be represented as a linear combination of kernel sections evaluated at the data points, whereas in our case, the minimizer is given by a linear combination of partial derivatives of the kernel, also evaluated at the data points. 

\noindent {\bf (ii)}  In this paper, we have set to learn Hamiltonian functions from observed Hamiltonian vector fields. Nevertheless, this framework can be applied to learn gradient systems or, more generally, vector fields generated by an arbitrary linear transformation of the gradient of differentiable functions. Besides, our approach has a strong connection to solving ill-posed linear PDEs~\cite{schaback2006kernel}, which can be formulated as follows:
\begin{align*}
B h = f,
\end{align*}
where $B$ is defined in (3.10), and $f := A_{N}^* \mathbf{X}_{\sigma^2,N}$ is derived from the data observations. This type of ill-posed linear PDEs is usually approximated by adding a regularization term, such as the quadratic Tikhonov regularization \cite{plato2018optimal} or non-quadratic regularization \cite{burger2004convergence}.
However, compared to traditional pseudo-differential operators~\cite{hormander2007analysis}, the operator $B$ depends in our case on the measure used for data sampling, which even though it introduces new challenges in analyzing the error bounds, it also leads to important applications in the machine learning setting.
\end{remark}

\subsection{Equivalence of the Gaussian posterior mean estimator and the structure-preserving kernel estimator}

In this subsection, we establish operator representations of the Gaussian process posterior mean estimator and the marginal variance given in Theorem \ref{Pos-Est}. Moreover, we shall prove that if the regularization constant $\lambda$ in the kernel ridge regression problem that we solved in Theorem \ref{Rep-Ker} is set to the value $\lambda=\frac{\sigma^2}{N}$, then the posterior mean estimator obtained out of the GP approach and structure-preserving kernel estimator coincide. Although a relation of this type has already been pointed out in the literature \cite{Kanagawa2018}, we emphasize that our result is proved in the presence of a gradient and in the structure-preserving machine learning setup in which the developments of this paper take place.
 
In Theorem \ref{Pos-Est}, we showed that the posterior mean and marginal variance estimators can be written using gradients of the kernel function evaluated at the data set. Thus, by leveraging Proposition \ref{Rep-Ope} and Theorem \ref{Rep-Ker}, we immediately obtain the following operator representations for the posterior mean and marginal variance estimators.
This shows, in particular, under what condition the posterior mean estimator and the structure-preserving kernel estimator coincide. 

\begin{theorem}
\label{Brd-Gau}  
Suppose $H\sim \mathcal{GP} (0, K)$ with covariance function $K\in C_b^3(\mathbb{R}^{2d}\times\mathbb{R}^{2d})$ and that the observation noise term $\bm{\varepsilon}^{(n)} $ in \eqref{observation regime} is Gaussian and it is independent of $\mathbf{Z}_N$. Choose now the covariance function $K$ as the kernel of the RKHS $\mathcal{H}_K$ in the inverse learning problem \eqref{emp-pro}-\eqref{emp-fun}. If $\lambda=\frac{\sigma^2}{N}$, then it holds that
\begin{description}
\item [(i)]  The posterior mean estimator $\overline\phi_N$ in \eqref{pos-min} has the operator representation
\begin{align*}
\overline{\phi}_N = \widehat{h}_{\lambda,N} =\frac{1}{\sqrt{N}}(A_N^*A_N+\lambda)^{-1}A_N^{*}\mathbf{x}_{\sigma^2,N}.
\end{align*}   
\item [(ii)] The marginal posterior variance $\overline{\Sigma}_N$ in  \eqref{pos-var} can be written as 
\begin{align*}
\overline{\Sigma}_N(\mathbf{z}^{\ast})=K(\mathbf{z}^{\ast},\mathbf{z}^{\ast})-K^{\lambda,N}(\mathbf{z}^{\ast},\mathbf{z}^{\ast}),
\end{align*} 
where $
K^{\lambda,N}(\mathbf{z}^{\ast},\mathbf{z}^{\ast}):=\frac{1}{\sqrt{N}}[(A_N^*A_N+\lambda I)^{-1}A_N^{*}K_{X_H,H}(\mathbf{z}_N,\mathbf{z}^\ast)](\mathbf{z}^{\ast})$.
\end{description}
\end{theorem}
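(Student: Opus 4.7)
The plan is to rewrite the objects in Theorem \ref{Pos-Est} in terms of the kernel derivatives and then match them against the operator expressions already obtained in Proposition \ref{Rep-Ope} and Theorem \ref{Rep-Ker}. The starting observation is that, for a Gaussian prior $H\sim\mathcal{GP}(0,K)$ whose kernel satisfies the hypotheses of Theorem \ref{Par-Rep}, the bilinearity of covariance together with the differential reproducing property allows us to identify explicitly the block structure of the covariance matrices appearing in Theorem \ref{Pos-Est}:
\begin{equation*}
K_{X_H}(\mathbf{z}_N,\mathbf{z}_N)=\mathbb{J}\,\nabla_{1,2}K(\mathbf{z}_N,\mathbf{z}_N)\,\mathbb{J}^{\top},\qquad K_{H,X_H}(\mathbf{z}^{\ast},\mathbf{z}_N)=\nabla_{1}K(\mathbf{z}_N,\mathbf{z}^{\ast})^{\top}\mathbb{J}^{\top},
\end{equation*}
where $\mathbb{J}=\mathrm{diag}\{J,\dots,J\}$. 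Since $J$ is symplectic in Darboux coordinates, $J^{\top}J=I_{2d}$ and hence $\mathbb{J}^{\top}\mathbb{J}=I_{2dN}$. This orthogonality is the key algebraic input that lets us commute $\mathbb{J}$ through the inverse in the next step.

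For part \textbf{(i)}, plugging in $\lambda=\sigma^{2}/N$ and the identifications above, the Gaussian posterior mean becomes
\begin{equation*}
\overline{\phi}_N(\mathbf{z}^{\ast})=\nabla_{1}K(\mathbf{z}_N,\mathbf{z}^{\ast})^{\top}\mathbb{J}^{\top}\bigl[\mathbb{J}(\nabla_{1,2}K(\mathbf{z}_N,\mathbf{z}_N)+\lambda N I)\mathbb{J}^{\top}\bigr]^{-1}\mathbf{x}_{\sigma^{2},N}.
\end{equation*}
Using $\mathbb{J}^{\top}\mathbb{J}=I$ I factor the bracketed inverse as $\mathbb{J}(\nabla_{1,2}K(\mathbf{z}_N,\mathbf{z}_N)+\lambda N I)^{-1}\mathbb{J}^{\top}$, so that the outer $\mathbb{J}^{\top}$ collapses and we get
\begin{equation*}
\overline{\phi}_N(\mathbf{z}^{\ast})=\nabla_{1}K(\mathbf{z}_N,\mathbf{z}^{\ast})^{\top}\bigl(\nabla_{1,2}K(\mathbf{z}_N,\mathbf{z}_N)+\lambda N I\bigr)^{-1}\mathbb{J}^{\top}\mathbf{x}_{\sigma^{2},N}.
\end{equation*}
The right-hand side is exactly the evaluation at $\mathbf{z}^{\ast}$ of the representation \eqref{rep-ker} with coefficient vector $\widehat{\mathbf c}$ supplied by Theorem \ref{Rep-Ker}, that is, $\widehat{h}_{\lambda,N}(\mathbf{z}^{\ast})$. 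Combining with the operator representation $\widehat{h}_{\lambda,N}=\frac{1}{\sqrt{N}}(A_N^{*}A_N+\lambda I)^{-1}A_N^{*}\mathbf{x}_{\sigma^{2},N}$ of Proposition \ref{Rep-Ope} closes part \textbf{(i)}.

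For part \textbf{(ii)} I follow the same route. The quadratic form in \eqref{pos-var} becomes, after the same $\mathbb{J}$-cancellation,
\begin{equation*}
K_{H,X_H}(\mathbf{z}^{\ast},\mathbf{z}_N)(K_{X_H}(\mathbf{z}_N,\mathbf{z}_N)+\sigma^{2}I)^{-1}K_{X_H,H}(\mathbf{z}_N,\mathbf{z}^{\ast})=\nabla_{1}K(\mathbf{z}_N,\mathbf{z}^{\ast})^{\top}(\nabla_{1,2}K(\mathbf{z}_N,\mathbf{z}_N)+\lambda N I)^{-1}\nabla_{1}K(\mathbf{z}_N,\mathbf{z}^{\ast}).
\end{equation*}
It therefore remains to identify this scalar with the operator expression $K^{\lambda,N}(\mathbf{z}^{\ast},\mathbf{z}^{\ast})$. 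To do so I mimic the proof of the Differential Representer Theorem: setting $W:=K_{X_H,H}(\mathbf{z}_N,\mathbf{z}^{\ast})\in\mathbb{R}^{2dN}$, the adjoint formula in Proposition \ref{Wel-Emp} yields $A_N^{*}W=\frac{1}{\sqrt{N}}\nabla_{1}K(\mathbf{z}_N,\mathbf{z}^{\ast})^{\top}\nabla_{1}K(\mathbf{z}_N,\cdot)$, which lies in the finite-dimensional invariant subspace $\mathcal{H}_K^{N}$ defined in \eqref{sub-spa}. I therefore write $(A_N^{*}A_N+\lambda I)^{-1}A_N^{*}W=\mathbf{d}^{\top}\nabla_{1}K(\mathbf{z}_N,\cdot)$ for some $\mathbf{d}\in\mathbb{R}^{2dN}$, apply $B_N+\lambda I$ on both sides, match coefficients against $\nabla_{1}K(\mathbf{z}_N,\cdot)$ as in the proof of Theorem \ref{Rep-Ker}, and solve to get $\mathbf{d}^{\top}=\sqrt{N}\,\nabla_{1}K(\mathbf{z}_N,\mathbf{z}^{\ast})^{\top}(\nabla_{1,2}K(\mathbf{z}_N,\mathbf{z}_N)+\lambda N I)^{-1}$. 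Evaluating at $\mathbf{z}^{\ast}$ and multiplying by the $\frac{1}{\sqrt{N}}$ prefactor in the definition of $K^{\lambda,N}$ produces exactly the quadratic form above, proving \textbf{(ii)}.

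The steps are essentially algebraic once the differential reproducing property and Proposition \ref{Pos-Gra} are in place; the only real subtlety is bookkeeping the $2dN$-vectorizations and ensuring the orthogonality of $\mathbb{J}$ is used at the right moment to convert the Gaussian-flavored inverse $(\mathbb{J}\,\nabla_{1,2}K\,\mathbb{J}^{\top}+\sigma^{2}I)^{-1}$ into the ridge-regression-flavored inverse $(\nabla_{1,2}K+\lambda N I)^{-1}$. This conversion, together with the choice $\lambda=\sigma^{2}/N$, is the whole content of the equivalence.
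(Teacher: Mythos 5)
Your proof is correct and takes essentially the same route as the paper: identify $K_{X_H}(\mathbf{z}_N,\mathbf{z}_N)=\mathbb{J}\,\nabla_{1,2}K(\mathbf{z}_N,\mathbf{z}_N)\,\mathbb{J}^{\top}$, set $\sigma^2=\lambda N$, and match \eqref{pos-min}--\eqref{pos-var} against the representer formula \eqref{rep-ker} and the operator form from Proposition~\ref{Rep-Ope}. The only difference is one of exposition: you make explicit the orthogonality $\mathbb{J}^{\top}\mathbb{J}=I_{2dN}$ used to push $\mathbb{J}$ through the inverse, and the coefficient-matching computation for $K^{\lambda,N}$ in part (ii), both of which the paper leaves to the reader after citing \eqref{pos-min} and \eqref{rep-ker}.
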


\begin{proof}
Note that $K_{X_H}(\mathbf{z}_N,\mathbf{z}_N)=\mathbb{J}\nabla_{1,2}K(\mathbf{z}_N,\mathbf{z}_N)\mathbb{J}^{\top}$. Thus if  $\lambda=\frac{\sigma^2}{N}$, we obtain $\overline{\phi}_N = \widehat{h}_{\lambda,N}$ by combining equations \eqref{pos-min} and \eqref{rep-ker}.  Then by the operator representation of the structure-preserving kernel estimator $ \widehat{h}_{\lambda,N}$ in Proposition \ref{Rep-Ope}, we have
\begin{align*}
\overline{\phi}_N  =\frac{1}{\sqrt{N}}(A_N^*A_N+\lambda)^{-1}A_N^{*}\mathbf{x}_{\sigma^2,N}.
\end{align*}   

Moreover, we get the operator representation of the marginal posterior variance $\overline{\Sigma}_N$ in equation \eqref{pos-var} as follows
\begin{align*}
\overline{\Sigma}_N(\mathbf{z}^{\ast})&=K(\mathbf{z}^\ast,\mathbf{z}^\ast) - K_{H,X_H}(\mathbf{z}^\ast,\mathbf{z}_N)(K_{X_H}(\mathbf{z}_N,\mathbf{z}_N) + \sigma^2I_{2dN})^{-1}K_{X_H,H}(\mathbf{z}_N,\mathbf{z}^\ast)\\
&=K(\mathbf{z}^\ast,\mathbf{z}^\ast) - \frac{1}{\sqrt{N}}[(A_N^*A_N+\lambda I)^{-1}A_N^{*}K_{X_H,H}(\mathbf{z}_N,\mathbf{z}^\ast)](\mathbf{z}^{\ast}).
\end{align*}
The results follow.
\end{proof}

\begin{remark}
\normalfont
We emphasize that, because of the previous result, the equivalence of the posterior mean estimator and the structure-preserving kernel estimator holds true {\it if and only if} $\lambda=\frac{\sigma^2}{N}$.
\end{remark}

\begin{remark}
\normalfont
A non-trivial consequence of this result is that the posterior mean estimator $\overline\phi_N$ belongs to the RKHS ${\mathcal H}_K $. This does not imply or is related to the paths of $\mathcal{GP} (0, K)$ being in ${\mathcal H}_K $ which, as it has been pointed out in \cite{Kanagawa2018}, is not the case almost surely. A larger RKHS induced by ${\mathcal H}_K $ can be nevertheless constructed for which this is true. We encourage the reader to check with \cite{Kanagawa2018} for illuminating discussions on this point and for a comprehensive account of the classical results on this topic.
\end{remark}

\begin{remark}\normalfont
Online and lifelong learning are important topics in machine learning dealing with the issue of updating a model when observed data comes in a streamed fashion without the need to use the entire past dataset. With deep neural networks, the problem of online learning can be challenging and is typically dealt with in a case-by-case manner. Fortunately, for regression problems, there has been a natural solution to online learning that can be found in \cite{Nadungodage2021}, and the particular case of the kernel ridge regression in \cite{van2014online}. These strategies aim at updating the estimator cheaply when new data arrives. Since our learning framework has a regression nature, the above-mentioned schemes can be readily modified to achieve online learning.    
\end{remark}

\section{Estimation and approximation error analysis }
\label{Convergence analysis and error bounds}
  
In this section, we establish a rigorous quantitative framework that analyzes the ability of the structure-preserving kernel estimator $ \widehat{h}_{\lambda,N}$ to recover the unknown Hamiltonian $H$. 
A standard approach in this setup is to decompose the {\bf reconstruction error} $\widehat{h}_{\lambda,N}-H $ as the sum of what we shall be calling the {\bf estimation} and {\bf approximation errors}. 
\begin{align} 
\label{error first decomposition}
\widehat{h}_{\lambda,N}-H =\underbrace{\widehat{h}_{\lambda,N}-h_{\lambda}^*}_{\text{Estimation error}}\quad+\underbrace{h_{\lambda}^*-H }_{\text{Approximation error}},
\end{align}
where we recall that  $h^{*}_{\lambda}\in\mathcal{H}_K$ the best-in-class function introduced in \eqref{exp-pro} that minimizes the regularized statistical risk. The estimation error comes from two sources: the randomness in the sampling and the randomness in the noise term that we added in \eqref{noisy observations of X} to the observations of the Hamiltonian vector field. Thus, using the operator representations that we introduced in Propositions \ref{Wel-Emp} and \ref{Rep-Ope}, we further decompose the estimation error into what we shall call the {\bf sampling error} and its {\bf noisy sampling error} parts: 
\begin{equation} 
\begin{aligned}\label{min-dec}
\widehat{h}_{\lambda,N}-h_{\lambda}^*
&= \frac{1}{\sqrt{N}}{ (B_N+\lambda)^{-1}A_N^{*}\mathbf{X}_{\sigma^2,N}-{h_{\lambda}^*}}= { (B_N+\lambda)^{-1}B_NH+\frac{1}{\sqrt{N}}(B_N+\lambda)^{-1}A_N^{*}\mathbf{E}_N-{h_{\lambda}^*}}\\
:&= \widetilde h_{\lambda,N}-h_{\lambda}^*+\frac{1}{\sqrt{N}}(B_N+\lambda)^{-1}A_N^{*}\mathbf{E}_N,
\end{aligned}
\end{equation}
where the noise vector $\mathbf{E} _N  $ is defined as 
\begin{equation*}
\mathbf{E}_N=\mathrm{Vec}\left(\bm{\varepsilon}^{(1)}|\cdots |\bm{\varepsilon}^{(N)}\right)\in \mathbb{R}^{2dN},
\end{equation*}
and by hypothesis follows a multivariate distribution with zero mean and variance $\sigma^2I_{2dN}$. 
The noise-free term $\widetilde{h}_{\lambda,N}:=(B_N+\lambda)^{-1}B_NH$ is actually the unique minimizer of the following noise-free minimization problem
\begin{align}
\widetilde{h}_{\lambda,N}: & =\argmin{h\in \mathcal{H}_K}\widetilde{R}_{\lambda,N}(h),\label{fre-pro}
\\
\widetilde{R}_{\lambda,N}(h):&=\|A_Nh-A_NH\|^2+\lambda \|h\|_{\mathcal{H}_K}^2. \nonumber
\end{align}
The functional $\widetilde{R}_{\lambda,N}$ is indeed the functional $ \widehat{R}_{\lambda,N}$ in \eqref{emp-fun} without the noise term.
We call $\widetilde{h}_{\lambda,N}$ the {\bf noise-free structure-preserving kernel estimator}. Furthermore, we have $\|\mathbb{E}_{\bm\varepsilon}[ \widehat{h}_{\lambda,N}]-H\|_{\mathcal{H}_K}=\|\widetilde{h}_{\lambda,N}-H\|_{\mathcal{H}_K}$, where $\mathbb{E}_{\bm\varepsilon}$ is the expectation with respect to the noise vector as in \eqref{expectation with respect to epsilon}. To sum up, we have refined the decomposition \eqref{error first decomposition} by splitting the reconstruction error into three parts, namely, the {approximation error}, the {sampling error}, and the {noisy sampling error}. More specifically, we have
\begin{align} 
\label{error second decomposition}
\widehat{h}_{\lambda,N}-H=\underbrace{\widetilde{h}_{\lambda,N}-h_{\lambda}^*}_{\text{Sampling error}}\quad+ \quad\underbrace{\widehat{h}_{\lambda,N}-\widetilde{h}_{\lambda,N}}_{\text{Noisy sampling error}}\quad+\quad\underbrace{h_{\lambda}^*-H}_{\text{Approximation error}},
\end{align}
where, according to \eqref{min-dec}, the noisy sampling error satisfies that 
\begin{equation}
\label{noisy part of sampling}
\widehat{h}_{\lambda,N}-\widetilde{h}_{\lambda,N}=\frac{1}{\sqrt{N}}(B_N+\lambda)^{-1}A_N^{*}\mathbf{E}_N.
\end{equation}
Following the approach introduced in \cite{feng2021learning}, we shall separately analyze these three errors. The analysis of the approximation error and the noisy sampling error is relatively standard and, for the sake of completeness, it is conducted in Appendix \ref{Analysis of the approximation error and noise part of the sampling error}. We shall then proceed in Section \ref{PAC bounds with fixed Tikhonov regularization parameter} to formulate probably approximately correct (PAC) bounds for the sampling and the reconstruction errors in which, unlike other results in the literature, the regularization constant $\lambda$ remains fixed, and the size of the estimation sample is allowed to vary independently from it. To obtain convergence rates, we shall have to adopt in the following Section \ref{Convergence rates using adaptive Tikhonov regularization parameters} a more conventional approach in which the Tikhonov regularization parameter is adapted as the sample size is modified.

\paragraph{The approximation error.} To bound the approximation error, it is customary to impose restrictions on the target Hamiltonian function. We consider the following hypothesis that is used in \cite{feng2021learning} under the denomination {\bf source condition}. Let $\gamma\in(0,1)$,  $S>0$, and $B=A^*A$ as in \eqref{positive}. We assume that
\begin{align}\label{sou-con}
H\in \Omega_S^\gamma:=\{h\in\mathcal{H}_K\mid h=B^\gamma \psi, \psi\in\mathcal{H}_K, \|\psi\|_{\mathcal{H}_K} < S\}.    
\end{align}

\begin{proposition}\label{source-character}
    If a Hamiltonian function $H$ satisfies the source condition \eqref{sou-con}, then $H\in \mathcal{H}_{\mathrm{null}}^{\bot}$. In other words, $\Omega_S^\gamma\subset \mathcal{H}_{\mathrm{null}}^{\bot}$.
\end{proposition}
\begin{proof} 
Recall first that by Proposition \ref{Wel-Ope}, the operator $B=A^*A$ is a positive compact operator. Let $B=\sum_{n=1}^{L}\lambda_n\langle \cdot, e_n\rangle e_n$ (possibly $L=\infty$) be the spectral decomposition of $B$ with $0<\lambda_{n+1}\leq \lambda_{n}$ and $\{e_n\}_{n=1}^{L}$ be an orthonormal basis of $\mathcal{H}_K$. Hence for any $\gamma\in (0,1)$, we have $B^\gamma=\sum_{n=1}^L\lambda_n^\gamma \langle \cdot,e_n\rangle_{\mathcal{H}_K}e_n$.
Notice that by the representation of the operator $B$ given in \eqref{positive}, for an arbitrary function $h\in\mathcal{H}_{\mathrm{null}}$ (if it is not an empty set), we have that $Bh=0$, which implies that $\lambda_n\langle h,e_n\rangle_{\mathcal{H}_K}=0$ for all $n=1,\cdots,L$. Hence for any $\gamma\in (0,1)$, we have $\lambda_n^\gamma\langle h,e_n\rangle_{\mathcal{H}_K}=0$ for all $n=1,\cdots,L$. Then we obtain that $B^\gamma h=0$. Finally for arbitrary $\psi\in\mathcal{H}_K$, we compute the inner product 
$\langle B^\gamma\psi,h\rangle_{\mathcal{H}_K}=\langle \psi,B^\gamma h\rangle_{\mathcal{H}_K}=0$, which yields that $B^\gamma\psi\in \mathcal{H}_{\mathrm{null}}^{\bot}$ for any $\psi\in \mathcal{H}_K$. Therefore, we conclude that $\Omega_S^\gamma\subset \mathcal{H}_{\mathrm{null}}^{\bot}$.
\end{proof}

\begin{proposition}
Let $B=\sum_{n\in\mathbb{N}_+}\lambda_n\langle \cdot, e_n\rangle e_n$, be a spectral decomposition of $B$, with an orthonormal basis $\{e_n\}_{n=1}^L$. Then, $\Omega_S^\gamma$ has the following characterization 
\begin{align*}
    \Omega_S^\gamma :=\mathcal{H}_{S}^{\gamma} = \left\{h\in\mathcal{H}_{\mathrm{null}}^\bot\mid \sum_{n\in\mathbb{N}^+}\frac{1}{\lambda_n^{2\gamma}}|\langle h,e_n\rangle_{\mathcal{H}_K}|^2<S^2\right\}.
\end{align*}
\end{proposition}
\begin{proof}
First, we show that $\mathcal{H}_{S}^{\gamma}\subset\Omega_{S}^\gamma$. For an arbitrary $h\in \mathcal{H}_{S}^{\gamma}$, we can define a function $\psi\in \operatorname{span}\{e_n, n\in \mathbb{N}_+\}$ such that $\langle\psi,e_n\rangle_{\mathcal{H}_K}=\frac{1}{\lambda_n^{\gamma}}\langle h,e_n\rangle_{\mathcal{H}_K}$ for all $n\in\mathbb{N}^+$. One computes
\begin{align*}
\|\psi\|_{\mathcal{H}_K}^2=\sum_{n\in\mathbb{N}^+}\frac{1}{\lambda_n^{2\gamma}}|\langle h,e_n\rangle_{\mathcal{H}_K}|^2<S^2.
\end{align*}
Furthermore, since $h\in \mathcal{H}_{\mathrm{null}}^{\bot}=\operatorname{span}\{e_n, n\in \mathbb{N}_+\}$, we have that
\begin{align*}
 B^\gamma\psi = \sum_{n\in\mathbb{N}^+} \lambda_n^\gamma \langle \psi,e_n\rangle_{\mathcal{H}_K}e_n = \sum_{n\in\mathbb{N}^+} \lambda_n^\gamma \frac{1}{\lambda_n^{\gamma}}\langle h,e_n\rangle_{\mathcal{H}_K}e_n =h,
\end{align*}
which yields that $h\in\Omega_S^\gamma$. Now, we show that $\Omega_{S}^\gamma\subset\mathcal{H}_{S}^{\gamma}$. For an arbitrary $h\in \Omega_{S}^{\gamma}$, there exists a function $\psi\in\mathcal{H}_K$ with norm $\|\psi\|_{\mathcal{H}_K}<S$, such that $B^\gamma \psi=h$. Let $\psi^{\prime}$ be the projection of $\psi$ onto $\mathcal{H}_{\mathrm{null}}^{\bot}$, then $h =B^{\gamma}\psi^{\prime}$. Hence, we obtain that
\begin{align*}
  \sum_{n\in\mathbb{N}^+} \lambda_n^\gamma \langle \psi^{\prime},e_n\rangle_{\mathcal{H}_K}e_n=\sum_{n\in\mathbb{N}^+}  \langle h,e_n\rangle_{\mathcal{H}_K}e_n,   
\end{align*}
which implies that $\langle\psi^{\prime},e_n\rangle_{\mathcal{H}_K}=\frac{1}{\lambda_n^{\gamma}}\langle h,e_n\rangle_{\mathcal{H}_K}$ for all $n\in\mathbb{N}^+$ since $\lambda_n>0$. Therefore, 
\begin{align*}
    \|\psi^{\prime}\|_{\mathcal{H}_K}^2=\sum_{n\in\mathbb{N}^+}\frac{1}{\lambda_n^{2\gamma}}|\langle h,e_n\rangle_{\mathcal{H}_K}|^2<S^2.
\end{align*}
Combing Proposition \ref{source-character}, we have that $h\in\mathcal{H}_{S}^{\gamma}$. Therefore, we can conclude that $\Omega_{S}^\gamma=\mathcal{H}_{S}^{\gamma}$.
\end{proof}

The source condition \eqref{sou-con} provides a standard way to analyze the approximation error that is spelled out in Appendix \ref{Analysis of the approximation error and noise part of the sampling error}. The conclusion of that study is that in the presence of the source condition \eqref{sou-con}, the approximation error can be bound using the RKHS norm as 
\begin{align}\label{app-err}
\|h_{\lambda}^*-H\|_{\mathcal{H}_K} \leq \lambda^{\gamma}\|B^{-\gamma}H\|_{\mathcal{H}_K},
\end{align}
where $B^{-\gamma}H$ represents the pre-image of $H$, via an operator spectral decomposition. 

An alternative way to handle the approximation error is by using kernels that have the universality property spelled out around \eqref{universal kernel}. This approach is preferable in the presence of compactness hypotheses that are not present in our context, so we shall use the source condition in what follows.

\paragraph{The noisy sampling error.} The norm $\|\widetilde h_{\lambda,N}- \widehat{h}_{\lambda,N}\|_{\mathcal{H}_K} $ can be controlled by performing an analysis similar to the one carried out in \cite{feng2021learning} in which the treatment involving the usual Gram matrix $K(\mathbf{Z}_N,\mathbf{Z}_N)$ has to be extended to accommodate the differential Gram matrix $\nabla_{1,2}K(\mathbf{Z}_N,\mathbf{Z}_N)$. This work is carried out in detail in Appendix \ref{Analysis of the approximation error and noise part of the sampling error}, and we obtain that, for any $\delta>0$, with a probability at least $1-\delta/2$, it holds that
\begin{equation}
\begin{aligned}\label{noi-bou}
\left\|\widetilde h_{\lambda,N}- \widehat{h}_{\lambda,N}\right\|_{\mathcal{H}_K} \leq \frac{\sigma \kappa}{\lambda }\sqrt{\frac{2d}{N}}\left(1+\sqrt{\frac{1}{c}\log(4/\delta)}\right),
\end{aligned} 
\end{equation}
where $c$ is a positive constant appearing in the Hanson-Wright inequality (see Appendix \ref{Han-Wri}).

\subsection{PAC bounds with fixed Tikhonov regularization parameter}
\label{PAC bounds with fixed Tikhonov regularization parameter}

In this section, we provide probably approximately correct (PAC) bounds for the sampling and the total reconstruction errors using as our main tool the $\Gamma$-convergence Theorem (see, for instance, \cite[P81, Corollary 7.20]{dal2012introduction}). 
These theorems show convergence with high probability for a fixed value $\lambda$ of the regularization parameter and when the estimation sample size is sufficiently high. All along this section we assume that $K\in C_b^3(\mathbb{R}^{2d}\times\mathbb{R}^{2d})$ is a Mercer kernel.

\begin{theorem}[{\bf PAC bounds of the sampling error}]
\label{Sam-Con}
Let $h_{\lambda}^*$ and $\widetilde{h}_{\lambda,N}$ be the unique minimizers of the minimization problems \eqref{exp-pro} and \eqref{fre-pro}, respectively. Then for every $\lambda>0$ and any $\varepsilon, \delta>0$, there exits $n\in\mathbb{N}_{+}$ such that for all $N>n$, it holds that 
\begin{align*}
\mathbb{P}\left(\left\|\widetilde{h}_{\lambda,N}-h_{\lambda}^*\right\|_{\mathcal{H}_K}>\varepsilon\right) < \delta.  
\end{align*}
\end{theorem}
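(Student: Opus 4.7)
The plan is to apply the $\Gamma$-convergence machinery of \cite{dal2012introduction} to the random functionals $\widetilde R_{\lambda,N}$ on $\mathcal{H}_K$. Since $R_\lambda$ and $R_\lambda-\sigma^2$ share the same unique minimizer $h_\lambda^*$, the goal is to show, on an event $\Omega^\star$ of full probability, that (i) $\widetilde R_{\lambda,N}\xrightarrow{\Gamma}R_\lambda-\sigma^2$, and (ii) the family $\{\widetilde R_{\lambda,N}\}_{N\in\mathbb{N}}$ is equi-coercive. Combined with uniqueness of the limit minimizer (from strict convexity of the Tikhonov penalty), the fundamental $\Gamma$-convergence theorem (Corollary 7.20 of \cite{dal2012introduction}) then delivers $\widetilde h_{\lambda,N}\to h_\lambda^*$ in $\mathcal{H}_K$ on $\Omega^\star$. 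Almost sure convergence automatically implies convergence in probability, which is precisely the PAC statement.

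\textbf{Pointwise convergence, recovery sequence, and equi-coercivity.} For each fixed $h\in\mathcal{H}_K$, Theorem \ref{Par-Rep} gives a uniform bound on $X_h-X_H$, so the i.i.d.\ variables $\|X_h(\mathbf{Z}^{(n)})-X_H(\mathbf{Z}^{(n)})\|^2$ are bounded and the strong law of large numbers yields
\begin{align*}
\widetilde R_{\lambda,N}(h)\;\longrightarrow\;\|X_h-X_H\|_{L^2(\mu_\mathbf{Z})}^2+\lambda\|h\|_{\mathcal{H}_K}^2\;=\;R_\lambda(h)-\sigma^2\qquad\text{a.s.}
\end{align*}
Separability of $\mathcal{H}_K$ (invoked in the proof of Proposition \ref{Wel-Ope}) supplies a countable dense subset $\mathcal{D}\subset\mathcal{H}_K$ and a common event $\Omega^\star$ on which this convergence holds simultaneously for every element of $\mathcal{D}$; the constant recovery sequence $h_N\equiv h$ then provides the $\limsup$ inequality on $\Omega^\star$, extended by density/continuity to all of $\mathcal{H}_K$. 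Equi-coercivity follows by comparing $\widetilde h_{\lambda,N}$ with the trivial competitor $h=0$:
\begin{align*}
\lambda\|\widetilde h_{\lambda,N}\|_{\mathcal{H}_K}^2\;\leq\;\widetilde R_{\lambda,N}(\widetilde h_{\lambda,N})\;\leq\;\widetilde R_{\lambda,N}(0)\;=\;\|A_N H\|^2\;\leq\;2d\kappa^2\|H\|_{\mathcal{H}_K}^2,
\end{align*}
so $\{\widetilde h_{\lambda,N}\}$ sits in a fixed (weakly compact) ball of $\mathcal{H}_K$; strict convexity forces the unique weak cluster point to coincide with $h_\lambda^*$, and convergence of the minima upgrades this to norm convergence and hence strong convergence.

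\textbf{Liminf inequality and main obstacle.} For the $\liminf$ condition, take any $h_N\to h$ strongly in $\mathcal{H}_K$. The embedding $\mathcal{H}_K\hookrightarrow C_b^1(\mathbb{R}^{2d})$ from Theorem \ref{Par-Rep} promotes this to convergence in the $C_b^1$-norm, so $X_{h_N}\to X_h$ uniformly. Writing
\begin{align*}
\widetilde R_{\lambda,N}(h_N)-\widetilde R_{\lambda,N}(h)=\tfrac{1}{N}\sum_{n=1}^N\bigl(\|X_{h_N}-X_H\|^2-\|X_h-X_H\|^2\bigr)(\mathbf{Z}^{(n)})+\lambda\bigl(\|h_N\|_{\mathcal{H}_K}^2-\|h\|_{\mathcal{H}_K}^2\bigr),
\end{align*}
and estimating the first sum by $\|X_{h_N}-X_h\|_\infty\bigl(\|X_{h_N}-X_H\|_\infty+\|X_h-X_H\|_\infty\bigr)$, both pieces vanish uniformly in the sample; combined with the pointwise a.s.\ convergence for $h$ (obtained by density from $\mathcal{D}$), this yields $\lim_N\widetilde R_{\lambda,N}(h_N)=R_\lambda(h)-\sigma^2$ on $\Omega^\star$. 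The main technical hurdle is exactly this step: merging the $h$-dependent a.s.\ events produced by the SLLN into a single event on which the $\Gamma$-convergence conditions hold against arbitrary competing sequences. The lever that makes this possible is the $\mathcal{H}_K\hookrightarrow C_b^1$ embedding, which converts strong RKHS convergence into uniform convergence of the associated vector fields and thereby reduces the uncountable family of test sequences to a countable dense one. We note that this $\Gamma$-convergence route intrinsically does not yield a quantitative $n=n(\varepsilon,\delta,\lambda)$, which is precisely why the next subsection turns to adaptive Tikhonov schedules to obtain explicit rates.
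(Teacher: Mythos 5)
Your proposal follows the same $\Gamma$-convergence route that the paper uses: strong law of large numbers for pointwise convergence of $\widetilde R_{\lambda,N}$ to $R_\lambda-\sigma^2$, control of the $h$-dependence of the null events via the $C_b^1$-embedding (Theorem~\ref{Par-Rep}) combined with density in the separable $\mathcal H_K$, and then the fundamental theorem of $\Gamma$-convergence to pass from convergence of the functionals to convergence of the minimizers. The paper packages the uniform-Lipschitz step as a pointwise equi-continuity estimate for $\{\widetilde R_{\lambda,N}\}$ and cites the shortcut ``pointwise convergence + equi-continuity $\Rightarrow$ $\Gamma$-convergence'' (\cite[Proposition~5.9]{dal2012introduction}); you instead verify continuous convergence $\widetilde R_{\lambda,N}(h_N)\to R_\lambda(h)-\sigma^2$ directly and read off both $\Gamma$-inequalities. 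These are essentially the same argument in two dialects.

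There is, however, a genuine gap in the final step, both in your write-up and -- worth noting -- in the paper's own proof, where it differs more. You establish equi-coercivity only in the \emph{weak} topology (the minimizers lie in a norm ball, hence a weakly compact set), but your $\Gamma$-convergence is verified in the \emph{strong} topology (test sequences $h_N\to h$ in $\|\cdot\|_{\mathcal H_K}$). The fundamental theorem \cite[Corollary~7.20]{dal2012introduction} requires equi-coercivity in the \emph{same} topology as the $\Gamma$-limit, and bounded sets of an infinite-dimensional $\mathcal H_K$ are not strongly compact; so the sentence ``strict convexity forces the unique weak cluster point to coincide with $h_\lambda^*$, and convergence of the minima upgrades this to norm convergence'' is a placeholder for a nontrivial argument, not a consequence of the cited theorem. (The paper sidesteps this by asserting that the norm sequence $\{\|\widetilde h_{\lambda,N}\|_{\mathcal H_K}\}$ is ``bounded above and increasing'' and hence that $\widetilde h_{\lambda,N}$ converges in $\mathcal H_K$; this does not follow -- the eigendata $(\lambda_n,e_n)$ of $B_N$ themselves change with $N$, so monotonicity is unclear, and a convergent norm sequence does not in any case imply convergence of a Hilbert-space-valued sequence.) A clean way to close the gap in your draft would be to exploit the $2\lambda$-strong convexity of $\widetilde R_{\lambda,N}$ (its second G\^ateaux derivative is $2(B_N+\lambda I)\succeq 2\lambda I$): at the minimizer one has
\begin{equation*}
\lambda\,\big\|\widetilde h_{\lambda,N}-h_\lambda^*\big\|_{\mathcal H_K}^2\;\le\;\widetilde R_{\lambda,N}(h_\lambda^*)-\widetilde R_{\lambda,N}(\widetilde h_{\lambda,N}),
\end{equation*}
and the right-hand side tends to $0$ a.s.\ once one shows $\widetilde R_{\lambda,N}(h_\lambda^*)\to R_\lambda(h_\lambda^*)-\sigma^2$ (SLLN at the fixed point $h_\lambda^*$) and $\inf\widetilde R_{\lambda,N}\to \inf(R_\lambda-\sigma^2)$; the latter is the only place $\Gamma$-convergence is really needed, and it can also be obtained by a uniform law of large numbers over the fixed ball $\{\|h\|_{\mathcal H_K}\le R\}$ that contains all minimizers, using the equi-Lipschitz estimate you derived.
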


Combining the PAC bounds of the sampling error with the analysis of the approximation and the noisy sampling errors in \eqref{app-err} and \eqref{noi-bou}, respectively, we obtain the following PAC bounds of the total reconstruction error.

\begin{theorem}[{\bf PAC bounds of the total reconstruction error}]
\label{Pac-Rec}
Let $ \widehat{h}_{\lambda,N}$ be the unique minimizer of the minimization problem \eqref{emp-pro}.
Suppose that $H\in\Omega_S^{\gamma}$ as defined in \eqref{sou-con}. 
Then, for any $\varepsilon, \delta>0$, there exist $\lambda>0$ and $n\in\mathbb{N}_{+}$ such that for all $N>n$, it holds that 
\begin{align*}
\mathbb{P}\left(\left\| \widehat{h}_{\lambda,N}-H\right\|_{\mathcal{H}_K}>\varepsilon\right) < \delta.  
\end{align*}    
\end{theorem}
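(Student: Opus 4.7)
The plan is to combine the three-part decomposition \eqref{error second decomposition} with the bounds already established in the excerpt: the approximation bound \eqref{app-err}, the PAC sampling bound in Theorem \ref{Sam-Con}, and the noisy sampling bound \eqref{noi-bou}. By the triangle inequality,
\begin{equation*}
\|\widehat{h}_{\lambda,N}-H\|_{\mathcal{H}_K} \le \|h_{\lambda}^*-H\|_{\mathcal{H}_K} + \|\widetilde{h}_{\lambda,N}-h_{\lambda}^*\|_{\mathcal{H}_K} + \|\widehat{h}_{\lambda,N}-\widetilde{h}_{\lambda,N}\|_{\mathcal{H}_K},
\end{equation*}
so it suffices to make each of the three summands smaller than $\varepsilon/3$, the last two with high probability.

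First I would pin down $\lambda$ using only the approximation error. The source condition $H\in\Omega_S^{\gamma}$ and \eqref{app-err} give the deterministic estimate $\|h_{\lambda}^*-H\|_{\mathcal{H}_K} \le \lambda^{\gamma}\|B^{-\gamma}H\|_{\mathcal{H}_K}$, so choosing $\lambda=\lambda(\varepsilon)$ sufficiently small (explicitly, $\lambda < (\varepsilon/(3S))^{1/\gamma}$ suffices since $\|B^{-\gamma}H\|_{\mathcal{H}_K}<S$) forces this term below $\varepsilon/3$. Crucially, $\lambda$ does not depend on $N$.

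With $\lambda$ now fixed, I would control the two random errors separately and then combine them with a union bound. For the sampling error, Theorem \ref{Sam-Con} applied with tolerance $\varepsilon/3$ and confidence $\delta/2$ produces an integer $n_1$ so that $\|\widetilde{h}_{\lambda,N}-h_{\lambda}^*\|_{\mathcal{H}_K} \le \varepsilon/3$ with probability at least $1-\delta/2$ for all $N>n_1$. For the noisy sampling error, the bound \eqref{noi-bou} gives, with probability at least $1-\delta/2$,
\begin{equation*}
\|\widehat{h}_{\lambda,N}-\widetilde{h}_{\lambda,N}\|_{\mathcal{H}_K} \le \frac{\sigma\kappa}{\lambda}\sqrt{\frac{2d}{N}}\Bigl(1+\sqrt{\tfrac{1}{c}\log(4/\delta)}\Bigr),
\end{equation*}
which, for the now-fixed $\lambda$ and $\delta$, tends to zero as $N\to\infty$; hence there exists $n_2$ such that this bound is below $\varepsilon/3$ for every $N>n_2$. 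Setting $n:=\max\{n_1,n_2\}$ and applying a union bound yields the desired estimate.

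The only substantive subtlety to watch for is the order of quantifiers: the prefactor $1/\lambda$ in \eqref{noi-bou} means that fixing a smaller approximation tolerance forces a larger sample threshold, so $\lambda$ must be chosen first, and only afterwards can $n$ be selected as a function of $\lambda$, $\sigma$, $\kappa$, $d$, $\varepsilon$, and $\delta$. No additional convergence rate is asserted, which is consistent with the fact that Theorem \ref{Sam-Con} is a purely qualitative $\Gamma$-convergence-based statement; genuine rates require the adaptive choice $\lambda\propto N^{-\alpha}$ treated in the next subsection.
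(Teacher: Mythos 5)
Your proof is correct and follows essentially the same route as the paper: the same three-term decomposition \eqref{error second decomposition}, the source-condition bound \eqref{app-err} to fix $\lambda$, Theorem \ref{Sam-Con} for the sampling error, the Hanson--Wright-based bound \eqref{noi-bou} for the noisy sampling error, and a union bound. If anything, you are slightly more careful than the paper on the order of quantifiers (choosing $\lambda$ before $n$) and on splitting the confidence budget $\delta$ between the two probabilistic events.
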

\begin{proof}
By Theorem \ref{Sam-Con}, for every $\lambda>0$ and any $\varepsilon, \delta>0$, there exits $n_1\in\mathbb{N}_{+}$ such that for all $N>n_1$, it follows that 
\begin{align}\label{fre-bou}
\mathbb{P}\left(\left\|\widetilde{h}_{\lambda,N}-h_{\lambda}^*\right\|_{\mathcal{H}_K}>\frac{\varepsilon}{3}\right) < \delta.  
\end{align}
Moreover, using the bounds \eqref{app-err} and \eqref{noi-bou} of the  approximation  and the noisy sampling errors, respectively, we can conclude that for any $\varepsilon>0$, there exists $\lambda>0$ and $n_2\in\mathbb{N}^+$, such that for all $N>n_2$, we have
\begin{align}\label{oth-bou}
\|h_{\lambda}^*-H\|_{\mathcal{H}_K}<\frac{\varepsilon}{3}, \quad \|\widetilde{h}_{\lambda,N}- \widehat{h}_{\lambda,N}\|_{\mathcal{H}_K}<\frac{\varepsilon}{3}.
\end{align}
Letting $n=\max\{n_1, n_2\}$ and combining the inequalities \eqref{fre-bou} and \eqref{oth-bou}, the result follows.
\end{proof}

\subsection{Convergence rates using adaptive Tikhonov regularization parameters}
\label{Convergence rates using adaptive Tikhonov regularization parameters}

The PAC bound of the total reconstruction error in Theorems \ref{Sam-Con} and \ref{Pac-Rec} are not informative in relation to convergence rates. 
In order to get a convergence upper rate of $\|{\widehat{h}_{\lambda,N}-H}\|_{\mathcal{H}_K}$ as $N \rightarrow \infty $, in this section we shall work not with a fixed, but with a dynamical $\lambda$ that is adapted with respect to the sample size $N$. More specifically, we shall assume that
\begin{align}\label{dyn-sca}
\lambda\propto N^{-\alpha},\quad \alpha >0,    
\end{align}
where the symbol "$\propto$" means that $\lambda$ has the order $N^{-\alpha}$ as $N\to\infty$. It is easy to see by combining the hypothesis \eqref{dyn-sca} with the bounds \eqref{app-err} and \eqref{noi-bou}, that the approximation error $\|{h_{\lambda}^*-H}\|_{\mathcal{H}_K}$ tends to zero with a convergence rate $N^{-\alpha\gamma}$ and so does the noisy sampling error $\|\widetilde{h}_{\lambda,N}- \widehat{h}_{\lambda,N}\|_{\mathcal{H}_K}$ with a convergence rate $N^{-(\frac{1}{2}-\alpha)}$, if $\alpha\in(0,\frac{1}{2})$.
Thus, to obtain a convergence rate for the total reconstruction error, it suffices to show that the sampling error for a dynamical $\lambda$ converges for some $\alpha\in(0,\frac{1}{2})$. This will be carried out in Theorem \ref{Sam-Err} where it will be shown that the sampling error $\|\widetilde{h}_{\lambda,N}-h_{\lambda}^*\|_{\mathcal{H}_K}$ converges to zero with a convergence upper rate $N^{-\frac{1}{2}(1-3\alpha)}$, with $\alpha\in(0,\frac{1}{3})$.

Recall that by Theorem \ref{Brd-Gau}, the Gaussian process posterior mean and the kernel ridge regression estimators coincide when $\lambda=\frac{\sigma^2}{N}$. This relation is of the type in the hypothesis \eqref{dyn-sca} with $\alpha=1$. Unfortunately, given what we just stated, our convergence framework does not apply to the case $\alpha=1$ and we shall hence have to restrict to the analysis of the convergence of the kernel ridge regression estimator.

As in previous sections we shall assume here that $K\in C_b^3(\mathbb{R}^{2d}\times\mathbb{R}^{2d})$ is a Mercer kernel. We start by obtaining a bound for the sampling error using an adaptive $\lambda$ as in \eqref{dyn-sca}. We start by decomposing 
\begin{align*}
&\widetilde{h}_{\lambda,N}-h_{\lambda}^*=(B_N+\lambda)^{-1}B_N H-(B+\lambda)^{-1}BH \\
=&(B_N+\lambda)^{-1}B_NH-(B_N+\lambda)^{-1}BH+(B_N+\lambda)^{-1}BH- (B+\lambda)^{-1}BH.
\end{align*}
The following lemma requires applying lemma \ref{Hil-Bou} and is inspired by a similar result in \cite{feng2021learning}. The main difference between our result and \cite[Lemma 22]{feng2021learning} is that different choices of norms are involved, namely, $\|H\|_{\infty}$ and what is denoted in that paper as  $\|H\|_{L^2(\tilde{\rho}_T^L)}$. Moreover, the authors assumed the so-called coercivity condition, which is a restriction on the choice of kernel. In our setting, we merely involve the RKHS norm $\|H\|_{\mathcal{H}_K}$, and hence shall not require that assumption.

We now present the following sampling error bound. 

\begin{theorem}{\bf (Sampling Error Bounds)}\label{Sam-Err} For a function $H \in \mathcal{H}_K$ and $0< \delta <1$, with probability at least $1-\delta$, it holds that
$$\left\|\widetilde{h}_{\lambda,N}-h_{\lambda}^*\right\|_{\mathcal{H}_K} \leq \left(\sqrt{ \frac{8\log(4/\delta)}{N}}+ 
1\right)\sqrt{ \frac{2\log(4/\delta)}{N\lambda^2}}\|H\|_{\mathcal{H}_K}2d\kappa^2\left(1+\kappa\sqrt{\frac{2d}{\lambda}}\right).$$
Moreover, if $\lambda$ satisfies \eqref{dyn-sca} with $\alpha\in(0,\frac{1}{3})$, then the  sampling error $\|\widetilde{h}_{\lambda,N}-h_{\lambda}^*\|_{\mathcal{H}_K}$ converges to zero with a convergence upper rate $N^{-\frac{1}{2}(1-3\alpha)}$.
\end{theorem}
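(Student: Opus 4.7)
The plan is to rewrite the sampling error through the resolvent identity and then reduce everything to two applications of Lemma \ref{Dec-Omp}. Starting from the decomposition displayed just before the statement, I use the algebraic identity $(B_N+\lambda)^{-1}-(B+\lambda)^{-1}=(B_N+\lambda)^{-1}(B-B_N)(B+\lambda)^{-1}$ together with the operator formula $h_{\lambda}^{*}=(B+\lambda)^{-1}BH$ from Proposition \ref{Rep-Ope} to rearrange the error as
\begin{align*}
\widetilde{h}_{\lambda,N}-h_{\lambda}^{*}
=(B_N+\lambda)^{-1}(B_N-B)H\ +\ (B_N+\lambda)^{-1}(B-B_N)h_{\lambda}^{*}.
\end{align*}
The crucial feature is that each summand is now $(B_N+\lambda)^{-1}(B_N-B)$ applied to a \emph{deterministic} element of $\mathcal{H}_K$, which is exactly the format required by Lemma \ref{Dec-Omp}.

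Since $B_N$ is positive semidefinite by Proposition \ref{Wel-Emp}, one has $\|(B_N+\lambda)^{-1}\|_{\mathrm{op}}\leq 1/\lambda$. I then invoke Lemma \ref{Dec-Omp} twice, once with $h=H$ and once with $h=h_{\lambda}^{*}$, each at confidence level $\delta/2$; a union bound yields, with probability at least $1-\delta$,
\begin{align*}
\|\widetilde{h}_{\lambda,N}-h_{\lambda}^{*}\|_{\mathcal{H}_K}
\leq \frac{2d\kappa^{2}}{\lambda}\Bigl(\sqrt{\tfrac{8\log(4/\delta)}{N}}+1\Bigr)\sqrt{\tfrac{2\log(4/\delta)}{N}}\bigl(\|H\|_{\mathcal{H}_K}+\|h_{\lambda}^{*}\|_{\mathcal{H}_K}\bigr).
\end{align*}
To rewrite the right-hand side purely in terms of $\|H\|_{\mathcal{H}_K}$, I factor $h_{\lambda}^{*}=(B+\lambda)^{-1}B^{1/2}\cdot B^{1/2}H$ and use the functional-calculus estimate $\|(B+\lambda)^{-1}B^{1/2}\|_{\mathrm{op}}\leq \lambda^{-1/2}$ together with $\|B^{1/2}\|_{\mathrm{op}}\leq\sqrt{\|B\|_{\mathrm{op}}}\leq\kappa\sqrt{2d}$, the last bound being a direct consequence of Proposition \ref{Wel-Ope}. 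This produces $\|h_{\lambda}^{*}\|_{\mathcal{H}_K}\leq \kappa\sqrt{2d/\lambda}\,\|H\|_{\mathcal{H}_K}$, and substitution yields exactly the announced factor $\|H\|_{\mathcal{H}_K}\bigl(1+\kappa\sqrt{2d/\lambda}\bigr)$.

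For the convergence-rate claim, under the scaling $\lambda\propto N^{-\alpha}$ the $1/\lambda$ from the operator norm contributes $N^{\alpha}$ to the first summand and, through the additional $1/\sqrt{\lambda}$ factor carried by $\|h_{\lambda}^{*}\|_{\mathcal{H}_K}$, a further $N^{\alpha/2}$ to the second. Multiplying by the $N^{-1/2}$ concentration factor coming from Lemma \ref{Dec-Omp}, the dominant contribution is of order $N^{-(1-3\alpha)/2}$, which vanishes as $N\to\infty$ precisely when $\alpha\in(0,1/3)$.

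The principal obstacle, as I see it, is choosing the decomposition so that Lemma \ref{Dec-Omp} is only ever applied to deterministic arguments. A superficially more attractive manipulation such as $\widetilde{h}_{\lambda,N}-h_{\lambda}^{*}=\lambda(B+\lambda)^{-1}(B_N-B)(B_N+\lambda)^{-1}H$ algebraically eliminates one power of $1/\lambda$ but routes the lemma through the data-dependent element $(B_N+\lambda)^{-1}H$, violating its hypothesis. The splitting above sidesteps this at the price of accepting the looser estimate $\|h_{\lambda}^{*}\|_{\mathcal{H}_K}\leq\kappa\sqrt{2d/\lambda}\,\|H\|_{\mathcal{H}_K}$, which is precisely what forces the admissible scaling to shrink to $\alpha<1/3$; the coercivity-based refinement in Corollary \ref{Sam-Err2} will later restore $\alpha<1/2$.
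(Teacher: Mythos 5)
Your proof is correct and essentially matches the paper's: the same two-term decomposition through the intermediate element $(B_N+\lambda)^{-1}BH$ (you present it via the resolvent identity, which is the same algebra), the same operator-norm bound $\|(B_N+\lambda)^{-1}\|\le 1/\lambda$, the same two applications of Lemma \ref{Dec-Omp} at level $\delta/2$ with a union bound, and the same final constant. The only deviation is how you obtain $\|h_\lambda^*\|_{\mathcal{H}_K}\le\kappa\sqrt{2d/\lambda}\,\|H\|_{\mathcal{H}_K}$ --- you factor $h_\lambda^*=(B+\lambda)^{-1}B^{1/2}\cdot B^{1/2}H$ and use spectral calculus plus $\|B\|\le 2d\kappa^2$, whereas the paper compares $R_\lambda(h_\lambda^*)$ with $R_\lambda(0)$ using the optimality of $h_\lambda^*$ in \eqref{exp-fun} together with $\|A\|\le\sqrt{2d}\kappa$ from Proposition \ref{Wel-Ope}; both are two-line arguments that give the same bound, so this is a cosmetic variation rather than a different method.
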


\paragraph{Improved sampling error bounds using the coercivity condition.} The error analysis in \cite{feng2021learning,lu2019nonparametric} is conducted under the so-called {\bf coercivity condition}. Even though the theorem that we just proved does not require this coercivity condition, in the next result, we shall see that by assuming a similar condition, we can improve our sampling error convergence result for a wider range of $\alpha$. 

\begin{definition}[{\bf Coercivity condition}]\label{Coe-Con}
We say that the Hamiltonian system \eqref{ham-sys} satisfies the \textbf{coercivity condition} on $\mathcal{H}_K$,  if  for all $h \in \mathcal{H}_K$, there exists $ c_{\mathcal{H}_K}>0$ such that
\begin{align}
\label{coercivity}
\|A h\|^2_{L^2(\mu_{\mathbf{Z}})}=\|X_h\|^2_{L^2(\mu_{\mathbf{Z}})}\geq c_{\mathcal{H}_K}\|h\|^2_{\mathcal{H}_K}. \end{align}
We choose the supremum $c_{\mathcal{H}_K}$ that satisfies \eqref{coercivity} and refer to it as the \textit{coercivity constant}. 
\end{definition}

The coercivity condition that we just defined is slightly different from the one in \cite{feng2021learning,lu2019nonparametric} designed for the analysis of particle swarming models. On the right side of \eqref{coercivity}, they have the norm that they denote as $L^2(\tilde{\rho}_{\mathbf{X}})$  rather than the RKHS norm. 
There are two reasons for this difference with respect to the results in \cite{feng2021learning,lu2019nonparametric}, which require the coercivity condition.
On the one hand, we added a Tikhonov regularization term in the loss functions so that the operators $B+\lambda$ and $B_N+\lambda$ are always invertible and hence the existence of the solution of the inverse problems \eqref{emp-pro}-\eqref{emp-fun} and \eqref{exp-fun}-\eqref{exp-pro} is guaranteed by Proposition \ref{Rep-Ope}. On the other hand, as shown in Lemma \ref{Dec-Omp}, the bound is obtained directly with the RKHS norm $\|\cdot\|_{\mathcal{H}_K}$ rather than with $\|\cdot\|_{\infty}$ or the $L^2$ norm $\|\cdot\|_{L^2(\mu_{\mathbf{Z}})}$ and there is no need to transform the infinity norm and the $L^2$ norm to the RKHS norm using the coercivity condition.

We now show that using the coercivity condition \eqref{coercivity}, we can enlarge the available scaling indices from $\alpha\in(0,\frac{1}{3})$ to $\alpha\in(0,\frac{1}{2})$ by using a more precise analysis. That is the content of the following corollary.

\begin{corollary}[{\bf Sampling error bounds under the coercivity condition}]\label{Sam-Err2}
Suppose that the coercivity condition \eqref{coercivity} holds. Then, for any function $H \in \mathcal{H}_K$ and $0< \delta <1$, with probability at least $1-\delta$, it holds that
$$\left\|\widetilde{h}_{\lambda,N}-h_{\lambda}^*\right\|_{\mathcal{H}_K} \leq \left(\sqrt{ \frac{8\log(4/\delta)}{N}}+ 
1\right)\sqrt{ \frac{2\log(4/\delta)}{N\lambda^2}}2d\kappa^2\left(2+\kappa\sqrt{\frac{2d}{c_{\mathcal{H}_K}}}\right)\|H\|_{\mathcal{H}_K}.$$
Moreover, if $\lambda$ has the scaling \eqref{dyn-sca}, then the  sampling error $\|\widetilde{h}_{\lambda,N}-h_{\lambda}^*\|_{\mathcal{H}_K}$ converges to zero with convergence upper rate $N^{-\frac{1}{2}(1-2\alpha)}$ with $\alpha\in(0,\frac{1}{2})$.  
\end{corollary}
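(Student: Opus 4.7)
The plan is to follow the three-piece decomposition employed in the proof of Theorem \ref{Sam-Err} and to improve only the estimate on $\|h_{\lambda}^{*}\|_{\mathcal{H}_K}$, since the $\lambda^{-1/2}$ behaviour of the bound $\|h_{\lambda}^{*}\|_{\mathcal{H}_K}\leq\kappa\sqrt{2d/\lambda}\|H\|_{\mathcal{H}_K}$ used there is precisely what forces the restriction $\alpha<\tfrac{1}{3}$. I would begin by reproducing the identity
\begin{align*}
\widetilde h_{\lambda,N}-h_{\lambda}^{*}=(B_N+\lambda)^{-1}(B_N-B)H+(B_N+\lambda)^{-1}(B-B_N)h_{\lambda}^{*},
\end{align*}
apply $\|(B_N+\lambda)^{-1}\|\leq 1/\lambda$ and Lemma \ref{Dec-Omp} separately to $H$ and to $h_{\lambda}^{*}$, and combine the two resulting high-probability events via a union bound of total failure probability $\delta$. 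This yields, exactly as in Theorem \ref{Sam-Err}, the controlling quantity $\tfrac{1}{\lambda}\bigl(\sqrt{8\log(4/\delta)/N}+1\bigr)\sqrt{2\log(4/\delta)/N}\,2d\kappa^{2}\bigl(\|H\|_{\mathcal{H}_K}+\|h_{\lambda}^{*}\|_{\mathcal{H}_K}\bigr)$.

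The decisive new ingredient is a coercivity-based $\lambda$-free control on $\|h_{\lambda}^{*}\|_{\mathcal{H}_K}$. Writing $h_{\lambda}^{*}=(A^{\ast}A+\lambda)^{-1}A^{\ast}AH$ and applying $A$ together with the elementary spectral identity $A(A^{\ast}A+\lambda)^{-1}A^{\ast}=(AA^{\ast}+\lambda)^{-1}AA^{\ast}$ gives $Ah_{\lambda}^{*}=(AA^{\ast}+\lambda)^{-1}AA^{\ast}\,(AH)$. The functional calculus on the positive self-adjoint operator $(AA^{\ast}+\lambda)^{-1}AA^{\ast}$, whose spectrum lies in $[0,1]$, yields the contraction
\begin{align*}
\|Ah_{\lambda}^{*}\|_{L^{2}(\mu_{\mathbf{Z}})}\leq \|AH\|_{L^{2}(\mu_{\mathbf{Z}})}\leq \sqrt{2d}\,\kappa\,\|H\|_{\mathcal{H}_K},
\end{align*}
the second inequality being Proposition \ref{Wel-Ope}. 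Invoking the coercivity condition \eqref{coercivity} for the test function $h_{\lambda}^{*}$ then produces the crucial estimate
\begin{align*}
\|h_{\lambda}^{*}\|_{\mathcal{H}_K}\leq \tfrac{1}{\sqrt{c_{\mathcal{H}_K}}}\|Ah_{\lambda}^{*}\|_{L^{2}(\mu_{\mathbf{Z}})}\leq \kappa\sqrt{\tfrac{2d}{c_{\mathcal{H}_K}}}\,\|H\|_{\mathcal{H}_K},
\end{align*}
which no longer contains any negative power of $\lambda$. Combining this with the elementary variational bound $\|h_{\lambda}^{*}\|_{\mathcal{H}_K}\leq\|H\|_{\mathcal{H}_K}$ obtained by testing the regularized statistical risk \eqref{exp-fun} at $h=H$ (this is what accounts for the constant $2$ in the stated prefactor, since the two contributions from $\|H\|_{\mathcal{H}_K}$ and from the two available upper bounds on $\|h_{\lambda}^{*}\|_{\mathcal{H}_K}$ are added) and substituting into the union-bound estimate delivers the announced inequality.

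The convergence rate is then immediate: with the scaling $\lambda\propto N^{-\alpha}$, the explicit probabilistic factor behaves as $\sqrt{1/(N\lambda^{2})}\propto N^{-(1-2\alpha)/2}$, so the sampling error tends to zero whenever $\alpha\in(0,\tfrac{1}{2})$, strictly enlarging the range obtained in Theorem \ref{Sam-Err}. The principal subtlety is not algebraic but conceptual, namely recognising that the spectral identity $A(A^{\ast}A+\lambda)^{-1}A^{\ast}=(AA^{\ast}+\lambda)^{-1}AA^{\ast}$ together with its $[0,1]$-contractivity is exactly the analytic device that, combined with coercivity, removes the $\lambda^{-1/2}$ singularity responsible for the bottleneck in Theorem \ref{Sam-Err}; once this observation is in place, the remainder is a bookkeeping exercise that mirrors the previous proof line by line.
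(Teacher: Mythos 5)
Your argument is correct in its essential ideas but uses a genuinely different route to the key estimate than the paper does. The paper controls $\|h_{\lambda}^{*}\|_{\mathcal{H}_K}$ by applying coercivity to the \emph{difference} $h_{\lambda}^{*}-H$: starting from the variational inequality $R_{\lambda}(h_{\lambda}^{*})\leq R_{\lambda}(0)$, i.e.\ \eqref{eq1}, it invokes \eqref{coercivity} for the argument $h_{\lambda}^*-H$, applies the reverse triangle inequality, and then solves the resulting quadratic inequality \eqref{Opt-con} in $\|H\|_{\mathcal{H}_K}$ and $\|h_{\lambda}^{*}\|_{\mathcal{H}_K}$ to arrive at $\|h_{\lambda}^{*}\|_{\mathcal{H}_K}\leq\bigl(1+\kappa\sqrt{2d/c_{\mathcal{H}_K}}\bigr)\|H\|_{\mathcal{H}_K}$. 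You instead invoke the spectral identity $A(A^{\ast}A+\lambda)^{-1}A^{\ast}=(AA^{\ast}+\lambda)^{-1}AA^{\ast}$, observe that the operator $(AA^{\ast}+\lambda)^{-1}AA^{\ast}$ is a contraction, and then apply coercivity directly to $h_{\lambda}^{*}$; this bypasses the quadratic inequality entirely and actually yields the \emph{sharper} bound $\|h_{\lambda}^{*}\|_{\mathcal{H}_K}\leq\kappa\sqrt{2d/c_{\mathcal{H}_K}}\,\|H\|_{\mathcal{H}_K}$, free of the additive $1$. Both approaches then feed into the same union-bound combination of two applications of Lemma~\ref{Dec-Omp}.

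One small slip is your account of where the constant $2$ in the stated prefactor comes from. With your contraction-plus-coercivity bound on $\|h_{\lambda}^{*}\|_{\mathcal{H}_K}$, the two contributions to the Lemma~\ref{Dec-Omp} estimate are $\|H\|_{\mathcal{H}_K}$ and $\|h_{\lambda}^{*}\|_{\mathcal{H}_K}\leq\kappa\sqrt{2d/c_{\mathcal{H}_K}}\,\|H\|_{\mathcal{H}_K}$, so your method naturally delivers the prefactor $1+\kappa\sqrt{2d/c_{\mathcal{H}_K}}$ rather than $2+\kappa\sqrt{2d/c_{\mathcal{H}_K}}$. The extra variational bound $\|h_{\lambda}^{*}\|_{\mathcal{H}_K}\leq\|H\|_{\mathcal{H}_K}$ is correct (test $R_\lambda$ at $h=H$), but you cannot legitimately \emph{add} two separate upper bounds on the same quantity to justify the paper's looser prefactor; you only get to use one of them. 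In short, your proof establishes a strictly tighter version of the stated corollary; the paper's constant $2$ is an artifact of the cruder quadratic-inequality route. The convergence-rate conclusion for $\alpha\in(0,\frac{1}{2})$ is unchanged either way, since the $\lambda$-dependence is identical.
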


Having constructed sampling error bounds in Theorem \ref{Sam-Err} and Corollary \ref{Sam-Err2}, we wrap everything up and present the following convergence upper rate of the total reconstruction error.

\begin{theorem}[{\bf Convergence upper rate of the total reconstruction error}]\label{Tot-Err} 
Let $ \widehat{h}_{\lambda,N}$ be the unique minimizer of the minimization problem \eqref{emp-pro}. Suppose that $H$ satisfies the source condition \eqref{sou-con}, that is, $H\in\Omega_S^{\gamma}$. Then for all $\alpha\in(0,\frac{1}{3})$, and for any $0<\delta<1$, with probability as least $1-\delta$, it holds that
\begin{align*}
\left\| \widehat{h}_{\lambda,N}-H\right\|_{\mathcal{H}_K} \leq C(\gamma,\delta,\kappa) ~ N^{-\min\{\alpha\gamma, \frac{1}{2}(1-3\alpha)\}},   
\end{align*}  
where 
$$C(\gamma,\delta,\kappa)=\max\left\{\|B^{-\gamma}H\|_{\mathcal{H}_K}, 8\sqrt{ 4\log(8/\delta)}d^{\frac{3}{2}}\kappa^3\|H\|_{\mathcal{H}_K}\right\}.$$ 

Moreover, if the coercivity condition \eqref{coercivity} holds, then for all $\alpha\in(0,\frac{1}{2})$, and for any $0<\delta<1$, and with probability as least $1-\delta$, it holds that
\begin{align*}
\| \widehat{h}_{\lambda,N}-H\|_{\mathcal{H}_K} \leq C(\gamma,\delta,\sigma,\kappa,c,c_{\mathcal{H}_K}) ~ N^{-\min\{\alpha\gamma, \frac{1}{2}(1-2\alpha)\}},   
\end{align*}
where 
\begin{multline*}
C(\gamma,\delta,\sigma,\kappa,c,c_{\mathcal{H}_K})\\
=\max\left\{\|B^{-\gamma}H\|_{\mathcal{H}_K}, \sigma \kappa\sqrt{2d} \left(1+\sqrt{\frac{1}{c}\log(4/\delta)}\right), 4\sqrt{ 2\log(8/\delta)}d\kappa^2\left(2+\kappa\sqrt{\frac{2d}{c_{\mathcal{H}_K}}}\right)\|H\|_{\mathcal{H}_K}\right\}.
\end{multline*}
\end{theorem}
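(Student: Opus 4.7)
My plan is to combine the three already-established component bounds (approximation, sampling, noisy sampling) via the triangle inequality applied to the decomposition \eqref{error second decomposition}, perform a union bound on the stochastic events, substitute the adaptive scaling $\lambda \propto N^{-\alpha}$, and then extract the slowest decay exponent.

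First, I would start from $\|\widehat{h}_{\lambda,N}-H\|_{\mathcal{H}_K} \leq \|\widetilde{h}_{\lambda,N}-h_\lambda^*\|_{\mathcal{H}_K} + \|\widehat{h}_{\lambda,N}-\widetilde{h}_{\lambda,N}\|_{\mathcal{H}_K} + \|h_\lambda^*-H\|_{\mathcal{H}_K}$ and attack each piece with the tools already in hand. Under the source condition $H\in\Omega_S^\gamma$, the approximation error is deterministically bounded by $\lambda^\gamma\|B^{-\gamma}H\|_{\mathcal{H}_K}$ via \eqref{app-err}. The noisy sampling error is bounded with probability at least $1-\delta/4$ by \eqref{noi-bou}, which is of order $\sigma\kappa\lambda^{-1}\sqrt{d/N}$ up to a $\log(1/\delta)$ factor. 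The sampling error is bounded with probability at least $1-\delta/2$ by Theorem \ref{Sam-Err} in the general case (of order $N^{-1/2}\lambda^{-3/2}$ for small $\lambda$) or by Corollary \ref{Sam-Err2} under the coercivity condition (of order $N^{-1/2}\lambda^{-1}$).

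Next, I would apply a union bound so that all three estimates hold simultaneously with probability at least $1-\delta$ (after redistributing the confidence budget). Plugging $\lambda \propto N^{-\alpha}$, the three contributions scale as $N^{-\alpha\gamma}$, $N^{-(1-2\alpha)/2}$, and $N^{-(1-3\alpha)/2}$ (respectively $N^{-(1-2\alpha)/2}$ under coercivity). Using $a+b+c \leq 3\max(a,b,c)$ and absorbing the factor $3$ into the prefactor, the total reconstruction error is dominated by $C\cdot N^{-r}$ with $r$ the minimum of the relevant positive exponents and $C$ the maximum of the three prefactor constants. The restriction $\alpha \in (0,1/3)$ (resp. $\alpha \in (0,1/2)$) ensures all exponents are positive. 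In the non-coercive case, since $(1-3\alpha)/2 < (1-2\alpha)/2$ for every $\alpha>0$, the sampling error strictly dominates the noisy sampling contribution, giving the effective rate $\min\{\alpha\gamma,(1-3\alpha)/2\}$ and a constant that may be written using only the approximation and sampling prefactors. In the coercive case, both stochastic pieces share the exponent $(1-2\alpha)/2$, so both must be retained in the maximum, producing the asserted $C(\gamma,\delta,\sigma,\kappa,c,c_{\mathcal{H}_K})$ and the rate $\min\{\alpha\gamma,(1-2\alpha)/2\}$ valid on the wider range $\alpha\in(0,1/2)$.

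The remaining step is purely bookkeeping: simplify the prefactors from Theorem \ref{Sam-Err} and Corollary \ref{Sam-Err2} by bounding the factor $\bigl(\sqrt{8\log(4/\delta)/N}+1\bigr)$ by a constant (valid for $N$ large enough and justifying the numerical coefficients quoted in $C(\gamma,\delta,\kappa)$), rewrite the $\log(4/\delta)$ terms as $\log(8/\delta)$ after the union bound, and assemble the maxima. The main care required is tracking the confidence budget through the union bound and comparing the $N$-exponents to identify the slowest-decaying term; there is no genuine analytic obstacle, since all the probabilistic heavy lifting has been done in the preceding results on each individual error source.
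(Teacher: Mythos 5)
Your proposal is correct and follows essentially the same approach as the paper: the paper's entire proof is a one-line remark that Theorem \ref{Tot-Err} is ``a direct combination of the inequalities \eqref{app-err}, \eqref{noi-bou}, Theorem \ref{Sam-Err}, and Corollary \ref{Sam-Err2},'' which is precisely the triangle-inequality-plus-union-bound-plus-$\lambda\propto N^{-\alpha}$ assembly you carry out (and you correctly identify the three exponents $N^{-\alpha\gamma}$, $N^{-(1-2\alpha)/2}$, $N^{-(1-3\alpha)/2}$ and the ranges of $\alpha$). The only place to be careful is your remark that, in the non-coercive case, the $\sigma$-dependent noisy-sampling prefactor ``may be written using only the approximation and sampling prefactors'': the faster decay exponent justifies dropping it asymptotically, but a fully non-asymptotic constant should retain that term (the paper's own stated $C(\gamma,\delta,\kappa)$ has the same gap).
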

\begin{proof}
It is a direct combination of the inequalities \eqref{app-err}, \eqref{noi-bou}, Theorem \ref{Sam-Err}, and Corollary \ref{Sam-Err2}.  
\end{proof}

\begin{remark} 
\normalfont
Note that by the Differential Reproducing Property Theorem \ref{Par-Rep}, we immediately obtain that for all kernels $K\in C_b^{2s+1}(\mathbb{R}^{2d}\times \mathbb{R}^{2d})$ with $s\geq1$,
\begin{align*}
\| \widehat{h}_{\lambda,N}-H\|_{C_b^s} \leq \kappa C(\gamma,\delta,\kappa) ~ N^{-\min\{\alpha\gamma, \frac{1}{2}(1-3\alpha)\}}\quad \text{or}\quad \kappa C(\gamma,\delta,\sigma,\kappa,c,c_{\mathcal{H}_K}) ~ N^{-\min\{\alpha\gamma, \frac{1}{2}(1-2\alpha)\}}.   
\end{align*}  
\end{remark}

Theorem \ref{Tot-Err} guarantees that the structure-preserving kernel estimator provides a function that is close to the data-generating Hamiltonian function with respect to the RKHS norm. As a consequence, we now prove that the flow of the learned Hamiltonian system will uniformly approximate that of the underlying Hamiltonian, which justifies the use of the RKHS norm.

\begin{proposition}[{\bf From discrete data to continuous-time flows}]\label{Dis-Con} 
Let $\widehat{H}= \widehat{h}_{\lambda,N}$ be the structure-preserving kernel estimator of $H$ using a kernel $K\in C_b^5(\mathbb{R}^{2d}\times\mathbb{R}^{2d})$. Let $F:[0,T] \times  \mathbb{R} ^{2d} \longrightarrow \mathbb{R} ^{2d} $ and $ \widehat{F}:[0,T] \times  \mathbb{R} ^{2d} \longrightarrow \mathbb{R} ^{2d} $ be the flows over the time interval $[0,T]$ of the Hamilton equations associated to the Hamiltonian functions $H$ and $\widehat{H}$, respectively. Then, for any initial condition ${\bf z}\in \mathbb{R}^{2d} $, we have that 
\begin{align*}
\|F({\bf z})-\widehat{F}({\bf z})\|_{\infty}:=\max_{t\in[0,T]}\left\|F_t({\bf z})-\widehat{F}_t({\bf z})\right\|\leq C \|H-\widehat{H}\|_{\mathcal{H}_K}, 
\end{align*}
with the constant $C:=\sqrt{2d}\kappa T\exp\{2d\kappa\|H\|_{\mathcal{H}_K}
T\}$. 
\end{proposition}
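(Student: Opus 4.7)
The plan is a standard Gr\"onwall-type argument after using Theorem \ref{Par-Rep} twice: once to ensure that $H$ and $\widehat{H}$ have globally Lipschitz gradients (so that the Hamiltonian flows $F, \widehat{F}$ exist on all of $[0,T]$), and once to convert the resulting Lipschitz and supremum constants into RKHS quantities. Since $K\in C_b^5(\mathbb{R}^{2d}\times\mathbb{R}^{2d})$, Theorem \ref{Par-Rep} with $s=2$ gives the continuous embedding $\mathcal{H}_K\hookrightarrow C_b^2(\mathbb{R}^{2d})$ with constant $\kappa$, hence both $\|H\|_{C_b^2}\le \kappa\|H\|_{\mathcal{H}_K}$ and $\|H-\widehat{H}\|_{C_b^1}\le \kappa\|H-\widehat{H}\|_{\mathcal{H}_K}$.

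First I would fix an initial condition $\mathbf{z}\in\mathbb{R}^{2d}$ and write the integral form of Hamilton's equations:
\begin{equation*}
F_t(\mathbf{z})-\widehat{F}_t(\mathbf{z})=\int_0^t J\bigl(\nabla H(F_s(\mathbf{z}))-\nabla \widehat{H}(\widehat{F}_s(\mathbf{z}))\bigr)\,\mathrm{d}s.
\end{equation*}
Adding and subtracting $J\nabla H(\widehat{F}_s(\mathbf{z}))$ splits the integrand into a ``flow distance'' piece and a ``Hamiltonian distance'' piece. Since $J$ is orthogonal, $\|J v\|=\|v\|$. For the first piece, I use that if $f\in C_b^2(\mathbb{R}^{2d})$ then the mean value theorem applied component-wise gives $\|\nabla f(x)-\nabla f(y)\|\le 2d\|f\|_{C_b^2}\|x-y\|$; together with the embedding this yields
\begin{equation*}
\bigl\|\nabla H(F_s(\mathbf{z}))-\nabla H(\widehat{F}_s(\mathbf{z}))\bigr\|\le 2d\kappa\|H\|_{\mathcal{H}_K}\bigl\|F_s(\mathbf{z})-\widehat{F}_s(\mathbf{z})\bigr\|.
\end{equation*}
For the second piece, the crude bound $\|\nabla g\|_\infty\le \sqrt{2d}\|g\|_{C_b^1}$ applied to $g=H-\widehat{H}$ yields
\begin{equation*}
\bigl\|\nabla H(\widehat{F}_s(\mathbf{z}))-\nabla\widehat{H}(\widehat{F}_s(\mathbf{z}))\bigr\|\le \sqrt{2d}\,\kappa\|H-\widehat{H}\|_{\mathcal{H}_K}.
\end{equation*}

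Combining these estimates gives
\begin{equation*}
\bigl\|F_t(\mathbf{z})-\widehat{F}_t(\mathbf{z})\bigr\|\le \sqrt{2d}\,\kappa T\|H-\widehat{H}\|_{\mathcal{H}_K}+2d\kappa\|H\|_{\mathcal{H}_K}\int_0^t\bigl\|F_s(\mathbf{z})-\widehat{F}_s(\mathbf{z})\bigr\|\,\mathrm{d}s,
\end{equation*}
valid for all $t\in[0,T]$. Applying Gr\"onwall's inequality to the non-negative continuous function $t\mapsto\|F_t(\mathbf{z})-\widehat{F}_t(\mathbf{z})\|$ produces the exponential factor $\exp\{2d\kappa\|H\|_{\mathcal{H}_K}T\}$ in front of the constant $\sqrt{2d}\,\kappa T\|H-\widehat{H}\|_{\mathcal{H}_K}$. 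Taking the maximum over $t\in[0,T]$ yields exactly $\|F(\mathbf{z})-\widehat{F}(\mathbf{z})\|_\infty\le C\|H-\widehat{H}\|_{\mathcal{H}_K}$ with the claimed constant $C$.

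There is no real obstacle here; the only step requiring some care is bookkeeping the dimensional factors so that the final constant is exactly $\sqrt{2d}\kappa T\exp\{2d\kappa\|H\|_{\mathcal{H}_K}T\}$ and not a loose version of it. In particular, one must use the bound $\|\nabla f(x)-\nabla f(y)\|\le 2d\|f\|_{C_b^2}\|x-y\|$ with the sharp coefficient (from writing $|\partial_i f(x)-\partial_i f(y)|\le\sqrt{2d}\|f\|_{C_b^2}\|x-y\|$ and summing squared components) rather than a cruder estimate, and similarly use $\|\nabla g\|_\infty\le\sqrt{2d}\|g\|_{C_b^1}$ in the second piece.
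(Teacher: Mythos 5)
Your proof is correct and follows the same overall strategy as the paper---an integral Gr\"onwall bound obtained after splitting the integrand by a triangle inequality---but the intermediate term you insert is the opposite of the paper's, and this choice is actually the cleaner one. You add and subtract $\nabla H(\widehat{F}_s(\mathbf{z}))$, so the ``flow-distance'' piece involves the Lipschitz constant of $\nabla H$ and the exponential factor becomes $\exp\{2d\kappa\|H\|_{\mathcal{H}_K}T\}$ immediately. The paper instead adds and subtracts $\nabla\widehat{H}(F_s(\mathbf{z}))$, so its flow-distance term carries $\|\widehat{H}\|_{C_b^2}$; when it then applies Gr\"onwall it silently writes $\|H\|_{C_b^2}$ in the exponent, a replacement of $\widehat{H}$ by $H$ that is not justified there. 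Your decomposition arrives at the stated constant $C=\sqrt{2d}\kappa T\exp\{2d\kappa\|H\|_{\mathcal{H}_K}T\}$ without that unjustified step, so it is the more faithful route to the claimed inequality. A secondary (cosmetic) difference: for the ``Hamiltonian-distance'' piece you pass through the embedding $\|g\|_{C_b^1}\le\kappa\|g\|_{\mathcal{H}_K}$ followed by $\|\nabla g\|_\infty\le\sqrt{2d}\|g\|_{C_b^1}$, whereas the paper applies the differential reproducing property directly to obtain the factor $\sqrt{\sum_i(\nabla_{1,2}K)_{i,i}}\le\sqrt{2d}\,\kappa$; these are equivalent and yield the same bound $\sqrt{2d}\,\kappa\|H-\widehat{H}\|_{\mathcal{H}_K}$.
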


\section{Numerical experiments}\label{Numerical experiments}

In this subsection, we apply our structure-preserving ridge regularized kernel estimator to learn Hamiltonian functions of various dynamical systems, where the dimension of the configuration space is $d=2$, that is, we shall be learning functions $H:\mathbb{R}^4\rightarrow \mathbb{R}$. Moreover, all our examples, except for the ones in Sections \ref{5.1.1} and \ref{numerical_convergence} are {\it simple} mechanical systems in the sense that the Hamiltonian function can be written as the sum of the kinetic energy plus a potential that depends only on the configuration variables, that is, 
\begin{equation}
\label{simple mechanical}
H(q_1,q_2,p_1,p_2)=T(p _1, p _2)+V(q _1, q _2),
\end{equation}
where $(q_1,q_2) $ are the position variables in the configuration space and $(p_1,p_2)$ are the conjugate momenta. An advantage of these systems is that the potential $V$ can be well visualized as a 3D plot, which can be used for the sake of comparison. In Section \ref{5.1}, we test our algorithm on some common Hamiltonian systems that are used as examples in the literature. In Section \ref{5.2}, we consider the more challenging task of learning a Hamiltonian system with a highly non-convex potential well. In Section \ref{numerical_convergence}, we perform a numerical investigation about the convergence rate derived in Theorem \ref{Tot-Err}. In Section \ref{5.3}, we experiment on the effectiveness of our algorithm in case the potential well exhibits singularities. Lastly, in Section \ref{5.4}, we compare the performance and training cost between our algorithm and that of the HNN approach, where the Hamiltonian function is modeled as a neural network and trained with gradient descent.

In the kernel ridge regressions that we will conduct to learn various Hamiltonian functions, we consistently use a Gaussian kernel. As explained in the introduction, our dataset contains the sampling points in the phase space $\mathbf{Z}_N$, and the corresponding noisy versions of the Hamiltonian vector fields $\mathbf{X}_{\sigma^2,N}$ at those sampling points. To generate these data, we randomly draw $N$ phase space points (with $N$ varying for each example) to construct $\mathbf{Z}_N$.  We subsequently evaluate the Hamiltonian data generating vector fields at $\mathbf{Z}_N$ to construct $\mathbf{X}_{\sigma^2,N}$. In the first examples, we set the noise to zero ($\sigma=0 $), and then in Section \ref{5.2}, we shall illustrate how the performance evolves when $\sigma$ varies. During the training phase, we perform a grid search combined with a 5-fold cross-validation scheme to determine the optimal parameter $\eta$ in the Gaussian kernel \eqref{gaussian kernel eta} and the constant coefficient $c$ in the adaptive relation \eqref{dyn-sca}, that is, $\lambda = c\cdot N^{-\alpha}$, where $\alpha=0.4$ is fixed. We will be searching for the optimal $c$ in the same grid of 
\begin{equation}\label{c grid}
(5e^{-6},1e^{-5},5e^{-5},1e^{-4},5e^{-4},1e^{-3},5e^{-3},1e^{-2},5e^{-2},1e^{-1},5e^{-1},1). 
\end{equation}
in all of the numerical examples, while the grid for searching $\eta$ could be different depending on the specific example. 


Finally, to assess the learning performance, we shall plot, for each example below, the potential $V(q_1,q_2)$ of the ground truth Hamiltonian function and that of the reconstructed Hamiltonian function by simply setting $p_1=p_2=0$ (this is what we shall call ``potential of the learned Hamiltonian" as well as for the system in Section \ref{5.1.1}). We stress that since the observed data are Hamiltonian vector fields, the Hamiltonian function can be reconstructed up to a scalar constant at best. Hence, we vertically shift the surface of the potential well of the reconstructed Hamiltonian towards the ground truth Hamiltonian, with the shifted distance equal to the average of the distances on the $(q_1,q_2)$ grid. The differences are then visualized with heatmaps.

\subsection{Some common Hamiltonian systems}\label{5.1}

\subsubsection{Double pendulum}\label{5.1.1}

The Hamiltonian of the double mathematical pendulum (two point masses of mass $m$ concatenated by two ideal massless strings of length $l$ and moving in a plane under the influence of gravity) using polar coordinates is
\begin{align*}
H(q_1,q_2,p_1,p_2) = \frac{1}{2ml^2}\cdot \frac{p_1^2+2p_2^2-2p_1p_2\cos(q_1-q_2)}{1+\sin^2(q_1-q_2)}+mgl\left[4-2\cos(q_1)-\cos(q_2)\right].
\end{align*}
Since the variables $(q_1,q_2) $ are angles, it is only a local version of the theorems in the paper that apply to this case. For the numerical experiment, we set $N=200$.  We sample $N$ states $(q_1,q_2,p_1,p_2)$ over a uniform distribution on $[-3,3]^4\subset \mathbb{R}^4$, and obtain the corresponding Hamiltonian vector fields. We then perform a grid search of parameters over $\eta$ in {\it numpy.arange(0.5, 4, 0.5)} and $c$ as in \eqref{c grid}. The optimal parameters are $\eta=1.5$ and $c=1e^{-5}$. We plot the potential function of the ground truth (Figure \ref{double_pendulum} (a)) and the reconstructed Hamiltonian (Figure \ref{double_pendulum} (b)) on the $(q_1,q_2)$ plane restricted to $[-3,3]^2$, with the optimal parameters $\eta$ and $c$. We also visualize the error in a heatmap (Figure \ref{double_pendulum} (c)) as elaborated in the introduction of Section \ref{Numerical experiments}.

\begin{figure}[htp]
    \centering
    \subfigure[]{\includegraphics[width=0.32\textwidth]{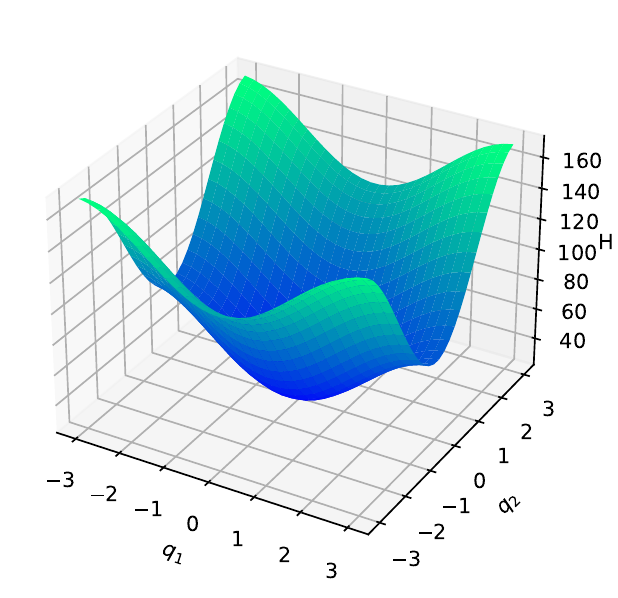}} 
    \subfigure[]{\includegraphics[width=0.32\textwidth]{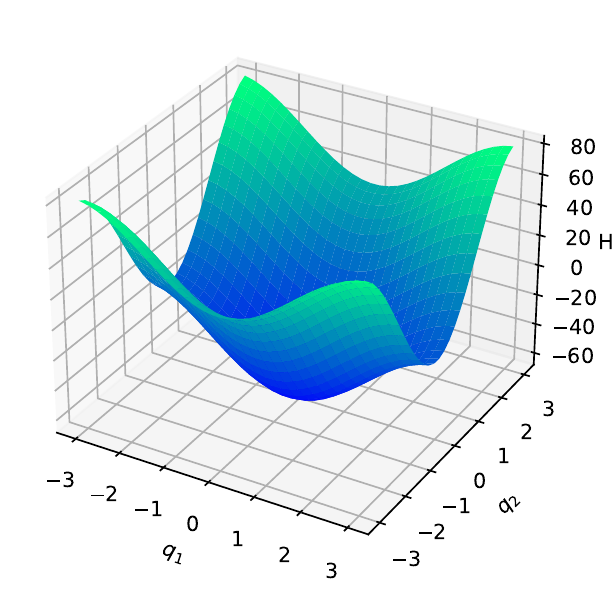}} 
    \subfigure[]{\includegraphics[width=0.32\textwidth]{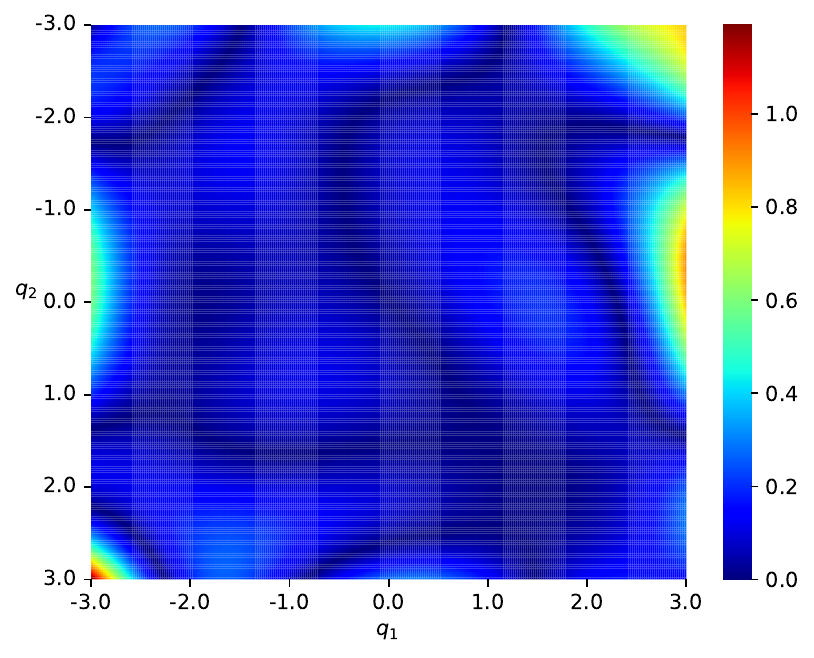}}
    \caption{Double pendulum: (a) Ground truth potential (b) Potential of the learned Hamiltonian (c) Mismatch error after vertical shift}
    \label{double_pendulum}
\end{figure}

\subsubsection{H\'{e}non-Heiles systems}
The H\'{e}non-Heiles system is a simplified model for the planar motion of a star around a galactic center restricted. The dynamical system has a governing Hamiltonian of the simple type in \eqref{simple mechanical}, namely,
\begin{align*}
    H(q_1,q_2,p_1,p_2) = \frac{p_1^2+p_2^2}{2}+\left(\frac{q_1^2+q_2^2}{2}+q_1^2q_2+\frac{q_2^3}{3}\right).
\end{align*}

For the numerical experiment, we adopt $N=100$. We sample $N$ initial conditions $(q_1,q_2,p_1,p_2)$ over a uniform distribution on $[-1,1]^4\subset \mathbb{R}^4$ and obtain the corresponding Hamiltonian vector fields. We then perform a grid search of parameters over $\eta$ in {\it numpy.arange(0.5, 4, 0.5)} and $c$ as in \eqref{c grid}. The optimal parameters are $\eta=3.5$ and $c=5e^{-6}$. We plot the potential function of the groundtruth (Figure \ref{Heiles} (a)) and the reconstructed Hamiltonians (Figure \ref{Heiles} (b)) on the $(q_1,q_2)$ plane restricted to $[-1,1]^2$, with the optimal parameters $\eta$ and $c$. We also visualize the error in a heatmap (Figure \ref{Heiles} (c)) as explained in the introduction of Section \ref{Numerical experiments}.

\begin{figure}[htp]
    \centering
    \subfigure[]{\includegraphics[width=0.32\textwidth]{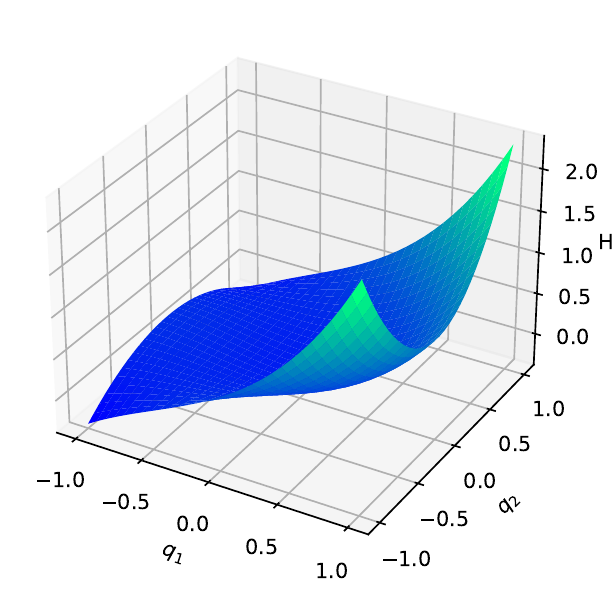}} 
    \subfigure[]{\includegraphics[width=0.32\textwidth]{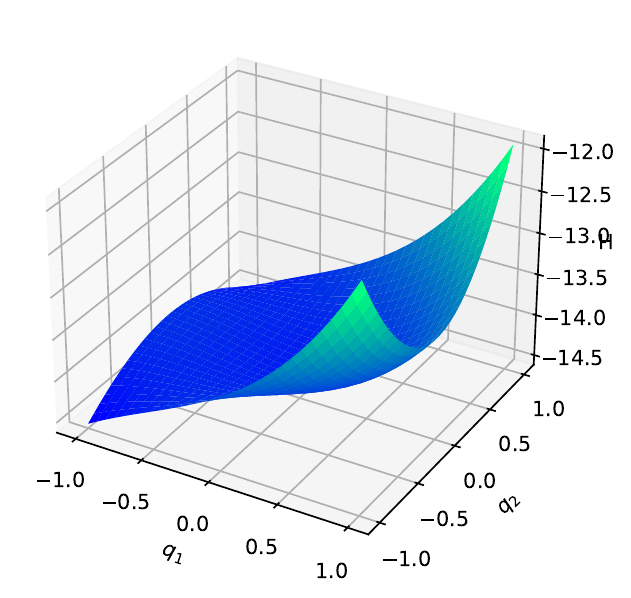}} 
    \subfigure[]{\includegraphics[width=0.32\textwidth]{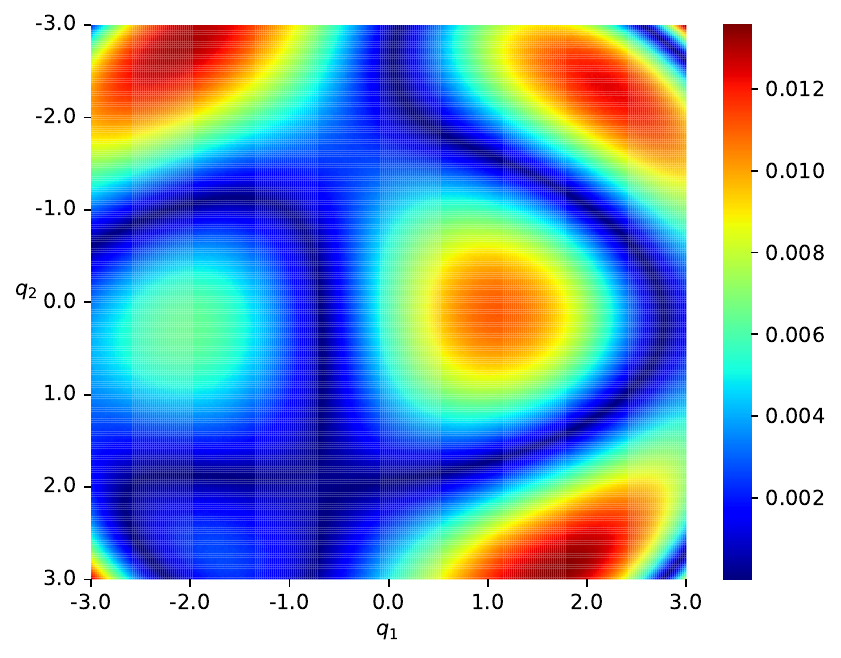}}
    \caption{H\'{e}non-Heiles system (a) Ground truth potential (b) Potential of the learned Hamiltonian (c) Mismatch error after vertical shift}
    \label{Heiles}
\end{figure}

\subsubsection{Frenkel-Kontorova model}
The Frenkel-Kontorova model describes the dynamics of a chain of particles with nearest-neighbor interaction subject to a periodic potential. The dynamical system has a governing Hamiltonian 
\begin{align*}
    H(q_1,q_2,p_1,p_2) = \frac{p_1^2+p_2^2}{2}-\cos(q_1)-\cos(q_2)+\frac{1}{2}g(q_2-q_1)^2.
\end{align*}

For the numerical experiment, we adopt $N=100$.  We sample $N$ initial conditions $(q_1,q_2,p_1,p_2)$ over a uniform distribution on $[-1,1]^4\subset \mathbb{R}^4$ and obtain the corresponding Hamiltonian vector fields. We then perform a grid search of parameters over $\eta$ in np.arange(0.5, 4, 0.5) and $c$ as in  \eqref{c grid}. The optimal parameters are $\eta=2.5$ and $c=5e^{-6}$. We plot the potential function of the ground truth (Figure \ref{Frenkel-Kontorova} (a)) and the reconstructed Hamiltonian (Figure \ref{Frenkel-Kontorova} (b)) on the $(q_1,q_2)$ plane restricted to $[-1,1]^2$, with the optimal parameters $\eta$ and $c$. We also visualize the error in a heatmap (Figure \ref{Frenkel-Kontorova} (c)) as elaborated in the introduction of Section \ref{Numerical experiments}.

\begin{figure}[htp]
    \centering
    \subfigure[]{\includegraphics[width=0.32\textwidth]{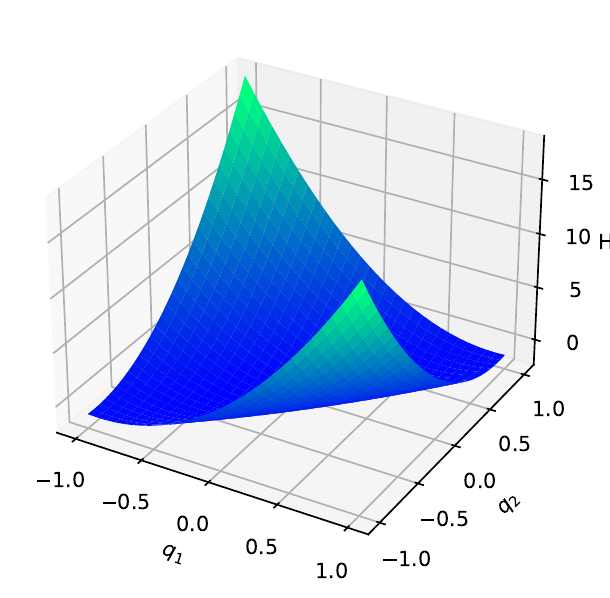}} 
    \subfigure[]{\includegraphics[width=0.32\textwidth]{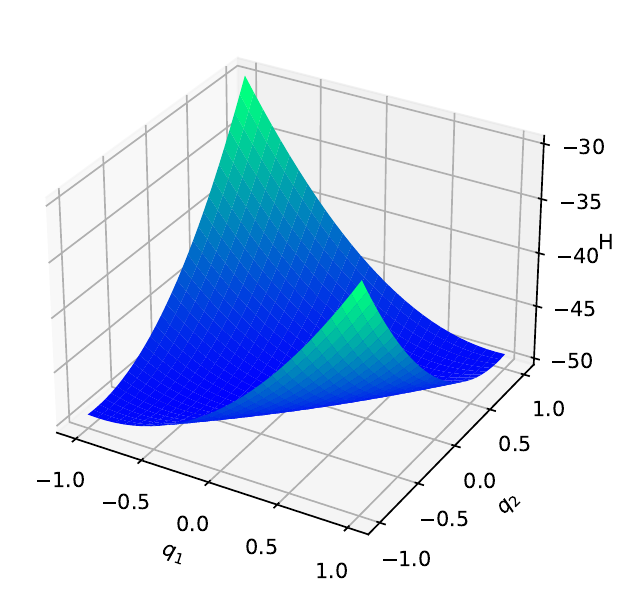}} 
    \subfigure[]{\includegraphics[width=0.32\textwidth]{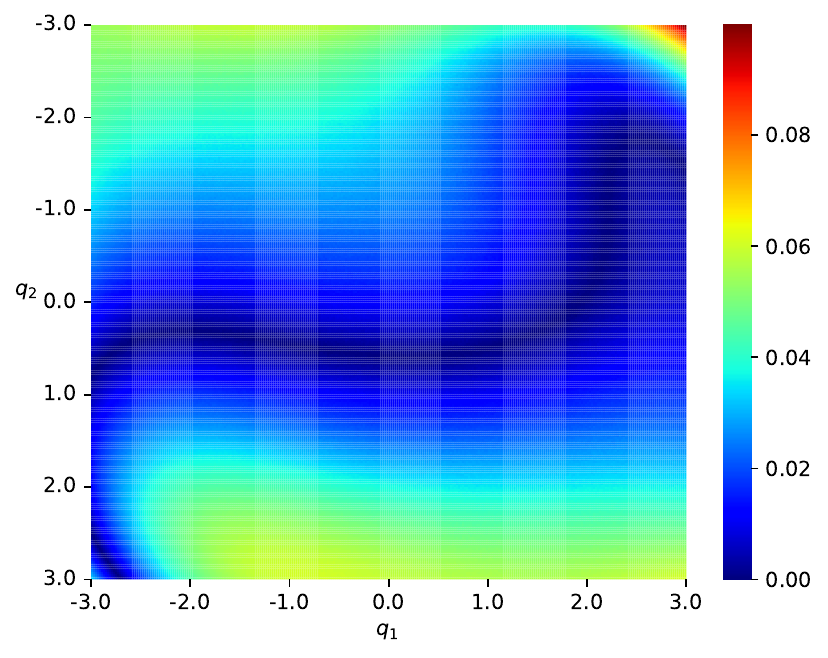}}
    \caption{Frenkel-Kontorova  model (a) Ground truth potential (b) Potential of the learned Hamiltonian (c) Mismatch error after vertical shift}
    \label{Frenkel-Kontorova}
\end{figure}

\subsection{Highly non-convex potential function}
\label{5.2}

It is generally a challenging task to learn a Hamiltonian function that has a highly non-convex potential function. In this subsection, we demonstrate the effectiveness of our approach in even such tasks. We showcase our algorithm by learning the following Hamiltonian function
\begin{align*}
    H(q_1,q_2,p_1,p_2) = \frac{1}{2}(p_1^2+p_2^2)+\sin\left(\frac{2\pi}{3}\cdot q_1\right)\cos\left(\frac{2\pi}{3}\cdot q_2\right)+\frac{\sin(\sqrt{q_1^2+q_2^2})}{\sqrt{q_1^2+q_2^2}},
\end{align*}
whose potential function is visualized below in Figure \ref{ground truth_nonconvex}. To illustrate how the algorithm's performance evolves with the sample size $N$ and the noise level determined by $\sigma$, we run our algorithm in different experimental settings. 

\begin{figure}[htp]
    \centering
{\includegraphics[width=0.45\textwidth]{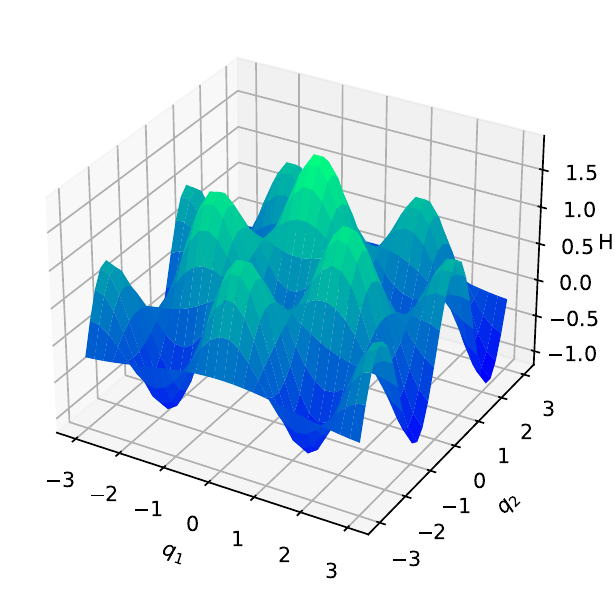}}
    \caption{Ground truth potential}
    \label{ground truth_nonconvex}
\end{figure}

First, we adopt $N=500$ and $\sigma=0$. We sample $N$ initial conditions $(q_1,q_2,p_1,p_2)$ over a uniform distribution on $[-3,3]^4\subset \mathbb{R}^4$ and obtain the corresponding Hamiltonian vector fields. We then perform a grid search of parameters over $\eta$ in np.arange(0.2, 2, 0.2) and $c$ as in \eqref{c grid}. The optimal parameters are $\eta=1.2$ and $c=5e^{-6}$. We plot the potential function of the reconstructed Hamiltonian (Figure \ref{nonconvex_1} (a)) on the $(q_1,q_2)$ plane restricted to $[-3,3]^2$, with the optimal parameters $\eta$ and $c$. We also visualize the error in a heatmap (Figure \ref{nonconvex_1} (b)) as elaborated in the introduction of Section \ref{Numerical experiments}.

\begin{figure}[htp]
    \centering
    \subfigure[]{\includegraphics[width=0.45\textwidth]{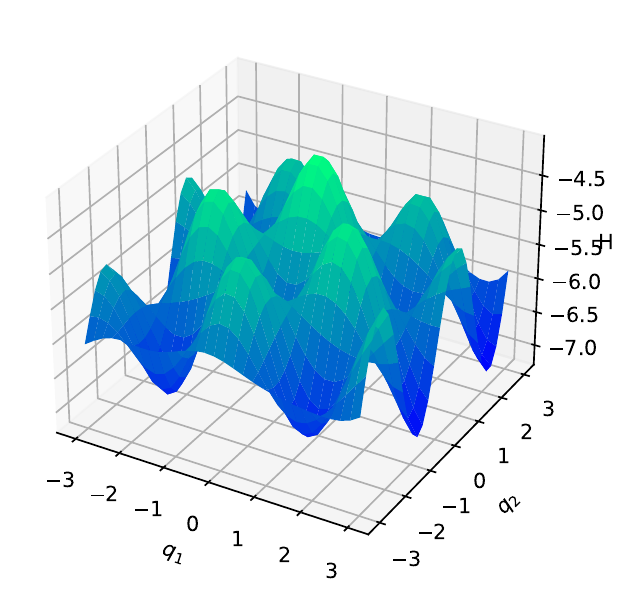}} 
    \subfigure[]{\includegraphics[width=0.45\textwidth]{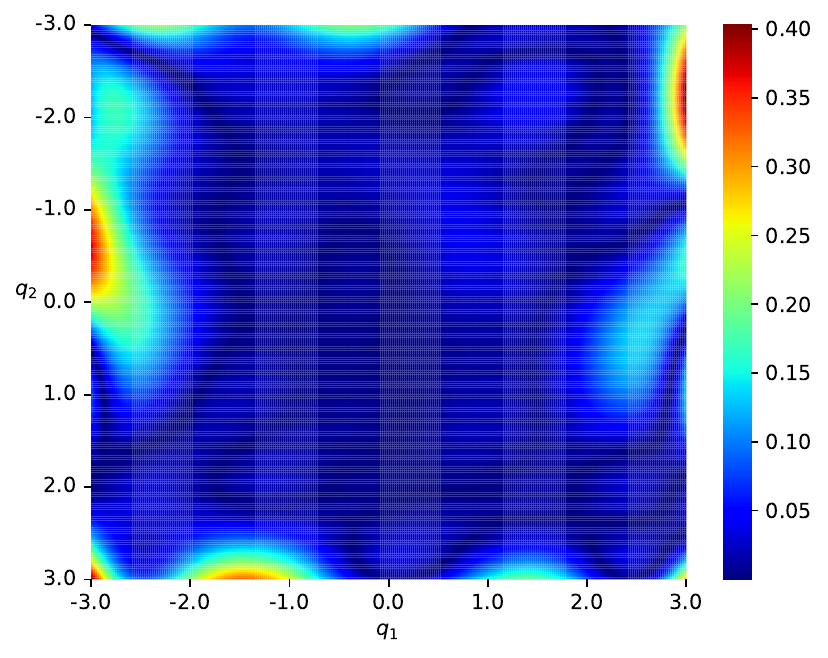}} 
    \caption{Learning with $N=500$ (a) Potential of the learned Hamiltonian (b) Mismatch error after vertical shift}
    \label{nonconvex_1}
\end{figure}

Second, we adopt $N=1500$ and $\sigma=0$. We sample $N$ initial conditions $(q_1,q_2,p_1,p_2)$ over a uniform distribution on $[-3,3]^4\subset \mathbb{R}^4$ and obtain the corresponding Hamiltonian vector fields. We then perform a grid search of parameters over $\eta$ in np.arange(0.2, 2, 0.2) and $c$ as in \eqref{c grid}. The optimal parameters are $\eta=1.2$ and $c=1e^{-5}$. We plot the potential function of the reconstructed Hamiltonian (Figure \ref{nonconvex3} (a)) on the $(q_1,q_2)$ plane restricted to $[-3,3]^2$, with the optimal parameters $\rho$ and $c$. We also visualize the error in a heatmap (Figure \ref{nonconvex3} (b)) as elaborated in the introduction of Section \ref{Numerical experiments}.

\begin{figure}[htp]
    \centering
    \subfigure[]{\includegraphics[width=0.45\textwidth]{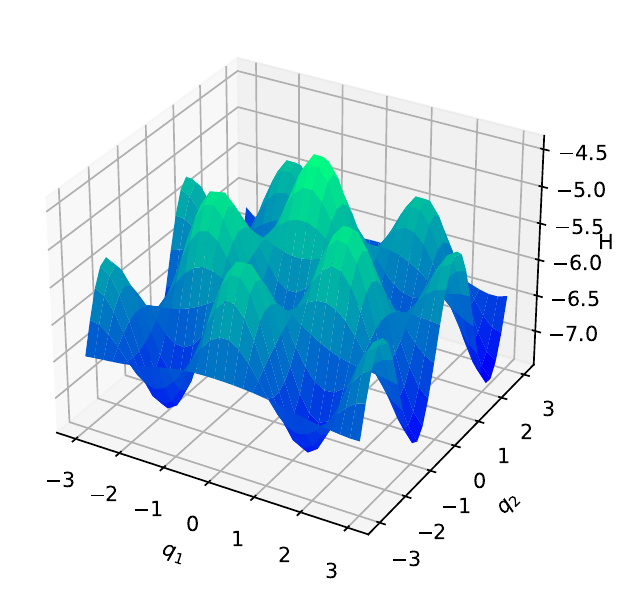}} 
    \subfigure[]{\includegraphics[width=0.45\textwidth]{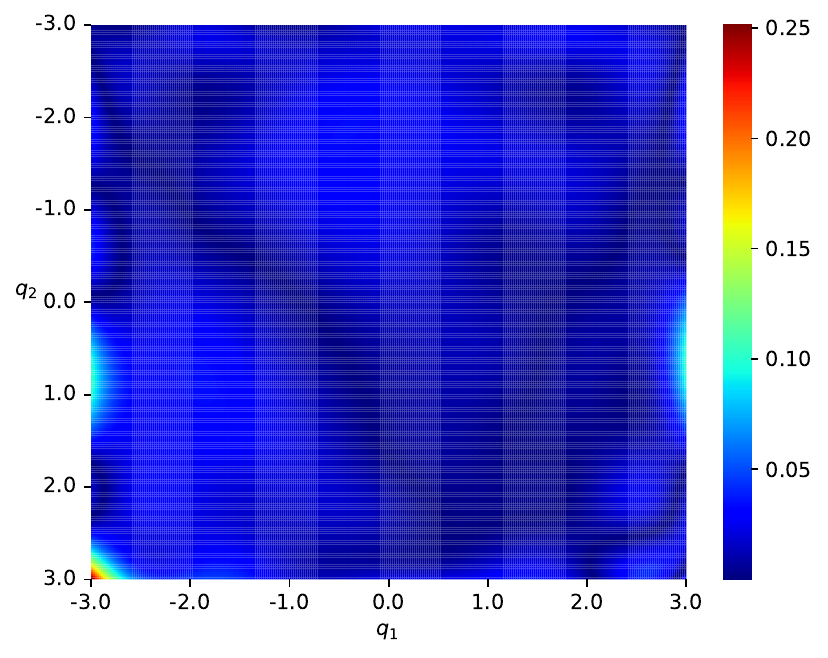}} 
    \caption{Learning with $N=1500$ (a) Potential of the learned Hamiltonian (b) Mismatch error after vertical shift}
    \label{nonconvex3}
\end{figure}

Third, we again adopt $N=500$, but this time, we repeat the experiment for various observation noise levels of the Hamiltonian vector fields, namely, for $\sigma=0.1,0.2,0.3,0.4,0.5$. We visualize the impact of noise on the quality of the learning in Figure \ref{noise_experiment}.

\begin{figure}[htp]
    \centering
    \subfigure[]{\includegraphics[width=0.32\textwidth]{Error_Nonconvex.pdf}} 
    \subfigure[]{\includegraphics[width=0.32\textwidth]{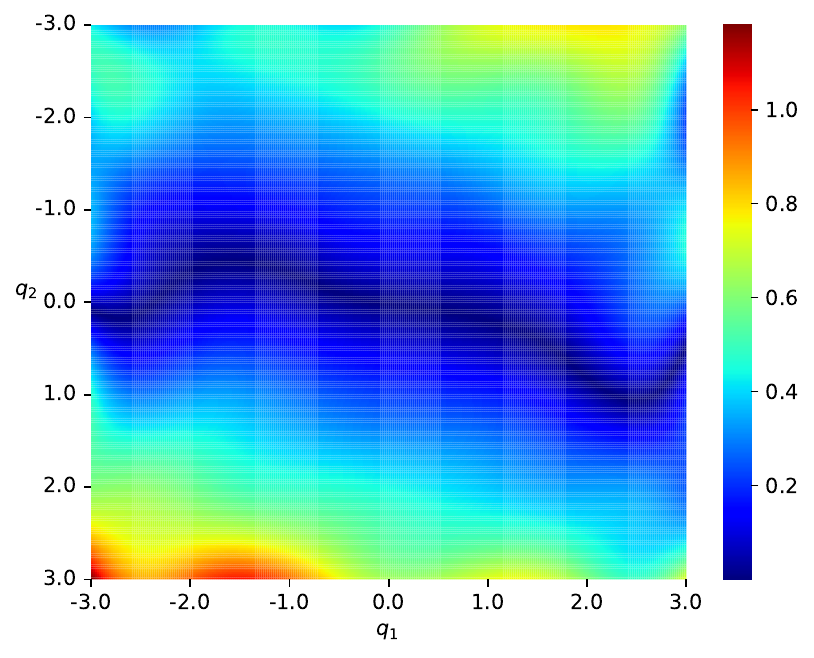}} 
    \subfigure[]{\includegraphics[width=0.32\textwidth]{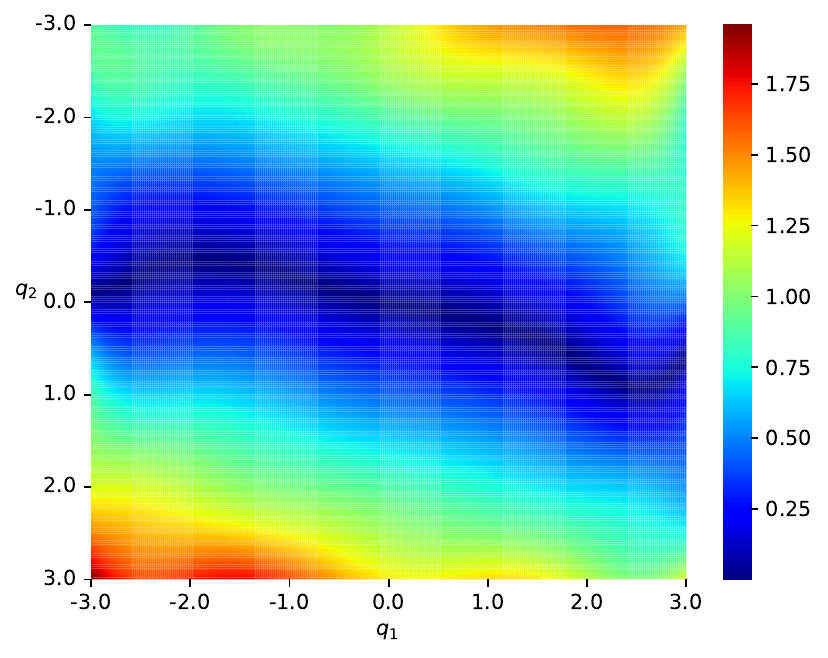}}
    \medskip
    \subfigure[]{\includegraphics[width=0.32\textwidth]{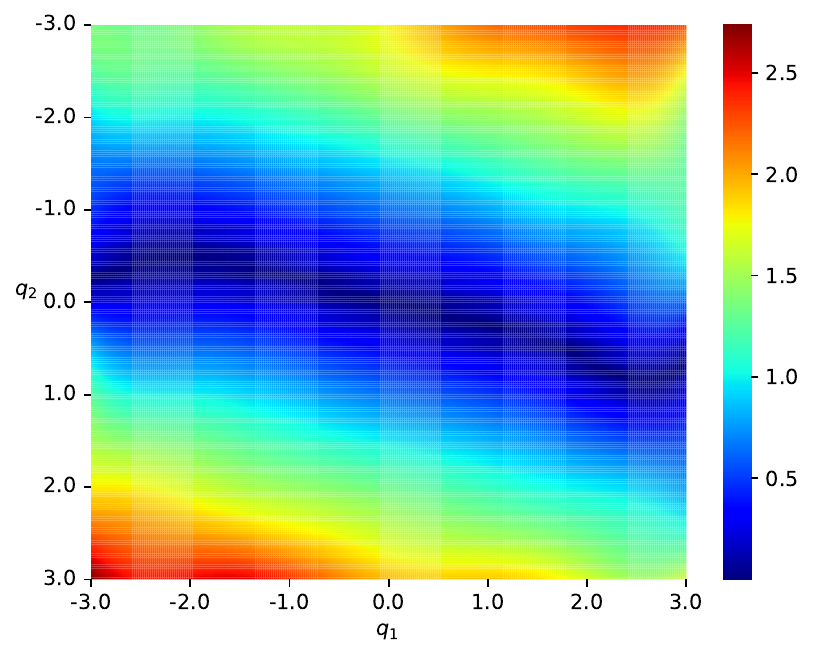}} 
    \subfigure[]{\includegraphics[width=0.32\textwidth]{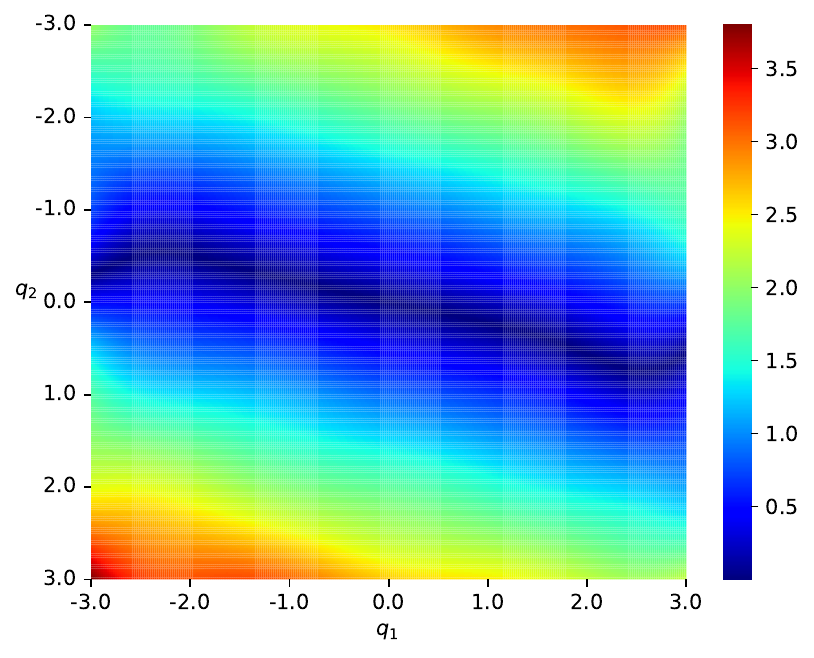}} 
    \subfigure[]{\includegraphics[width=0.32\textwidth]{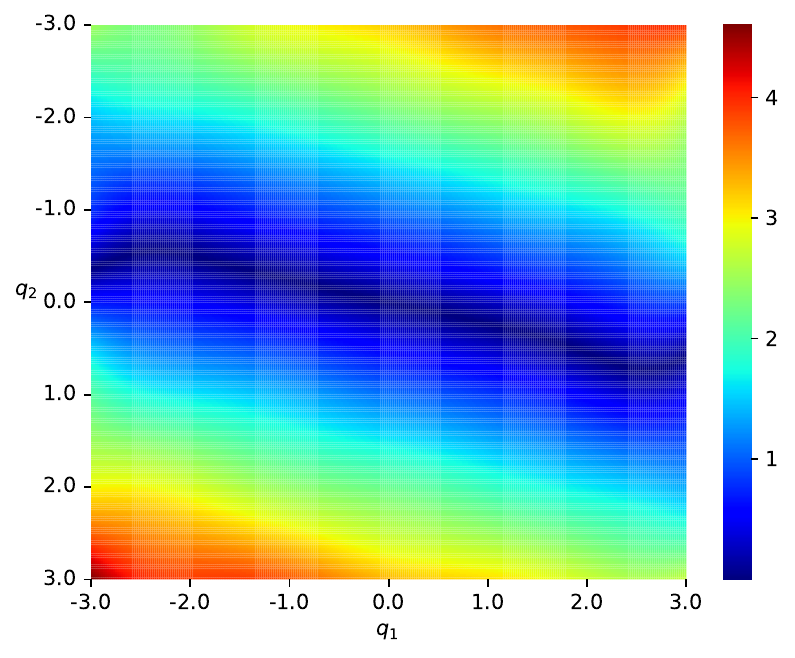}}
    \caption{Learning with $N=500$; Mismatch error after vertical shift with various noise levels corresponding to (a) $\sigma=0$ (b) $\sigma=0.1$ (c) $\sigma=0.2$ (d) $\sigma=0.3$ (e) $\sigma=0.4$ (f) $\sigma=0.5$.}
    \label{noise_experiment}
\end{figure}

\subsection{Convergence analysis}\label{numerical_convergence}

In this subsection, we perform a numerical examination of the convergence rates derived in Section \ref{Convergence analysis and error bounds}. We shall consider the Hamiltonian function
\begin{align*}
    H(q_1,q_2,p_1,p_2) = q^2_1q^3_2e^{-\frac{\|(q_1,q_2,p_1,p_2)\|^2_2}{\eta^2}},
\end{align*}
with $\eta = 2$. This Hamiltonian function, by Example \ref{Gau-Ker}, belongs to the RKHS corresponding to the Gaussian kernel $K_{\eta}$ with parameter $\eta$, with $d = 4$, $k=5$ and $\alpha = (2,3,0,0)$. 

We compute the reconstruction error term $\left\| \widehat{h}_{\lambda,N}-H\right\|_{\mathcal{H}_K}$ by splitting its square into three terms and by calculating them separately, namely,
\begin{align*}
    \left\| \widehat{h}_{\lambda,N}-H\right\|^2_{\mathcal{H}_K} = \left\| \widehat{h}_{\lambda,N}\right\|^2_{\mathcal{H}_K}+\left\|H\right\|^2_{\mathcal{H}_K}-2\langle\widehat{h}_{\lambda,N},H\rangle_{\mathcal{H}_K}.
\end{align*} Note that $\left\| \widehat{h}_{\lambda,N}\right\|^2_{\mathcal{H}_K} = \widehat {\bf c}^{\top}\nabla_{1,2}K({\bf Z}_N,{\bf Z}_N)\widehat {\bf c}$ by \eqref{rep-ker}, and that $\left\| H\right\|^2_{\mathcal{H}_K} = \frac{k!}{(2/\eta^2)^k C^{k}_{\alpha}}=384$. Additionally, by the differential reproducing property \eqref{dif-rep}, 
\begin{align*}
\langle\widehat{h}_{\lambda,N},H\rangle_{\mathcal{H}_K} = \langle \widehat {\bf c}^{\top}\nabla_1K({\bf Z}_N,\cdot), H\rangle_{\mathcal{H}_K} = \widehat {\bf c}^{\top} \nabla H({\bf Z}_N),
\end{align*}
and hence, can also be explicitly computed.

For the numerical experiment, we fix $\eta=2$ to ensure that the estimator lives in the same RKHS as the groundtruth Hamiltonian $H$. We also fix $c = 5e^{-6}$, $\alpha=0.4$, and sample $N$ states over a uniform distribution on $[-1,1]^4$. Since the training data is drawn randomly from the sampling measure, we decide to perform, for each sample size $N$, 50 independent experiments to compute the mean and standard deviation of the RKHS-norm error $\left\| \widehat{h}_{\lambda,N}-H\right\|_{\mathcal{H}_K}$, and visualize them against the sample size $N$ in the Figure \ref{rkhs-norm-convergence}.  The plot indicates that the RKHS-norm error exhibits linear convergence in the log-scale, consistent with the functional form of our theoretical upper bound. 

\begin{figure}[htp]
    \centering
    \subfigure[]{\includegraphics[width=0.48\textwidth]{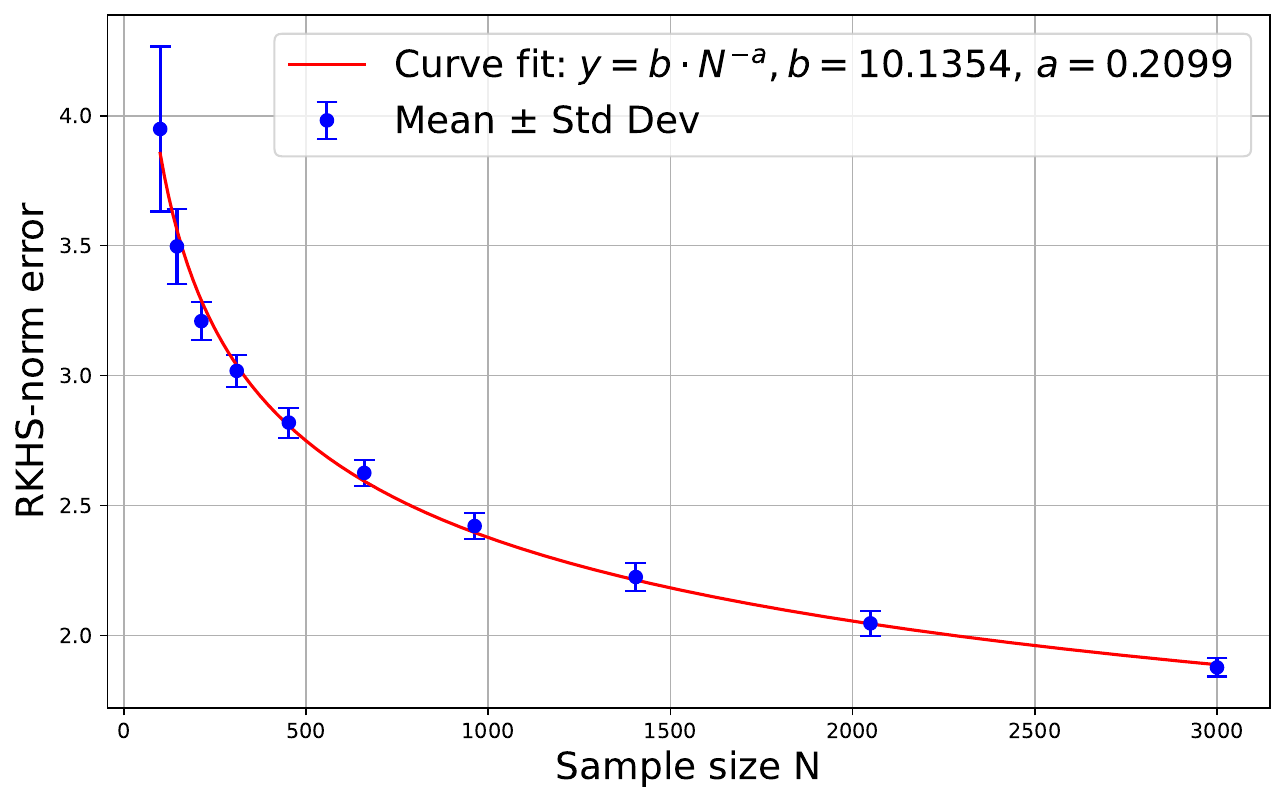}} 
    \subfigure[]{\includegraphics[width=0.5\textwidth]{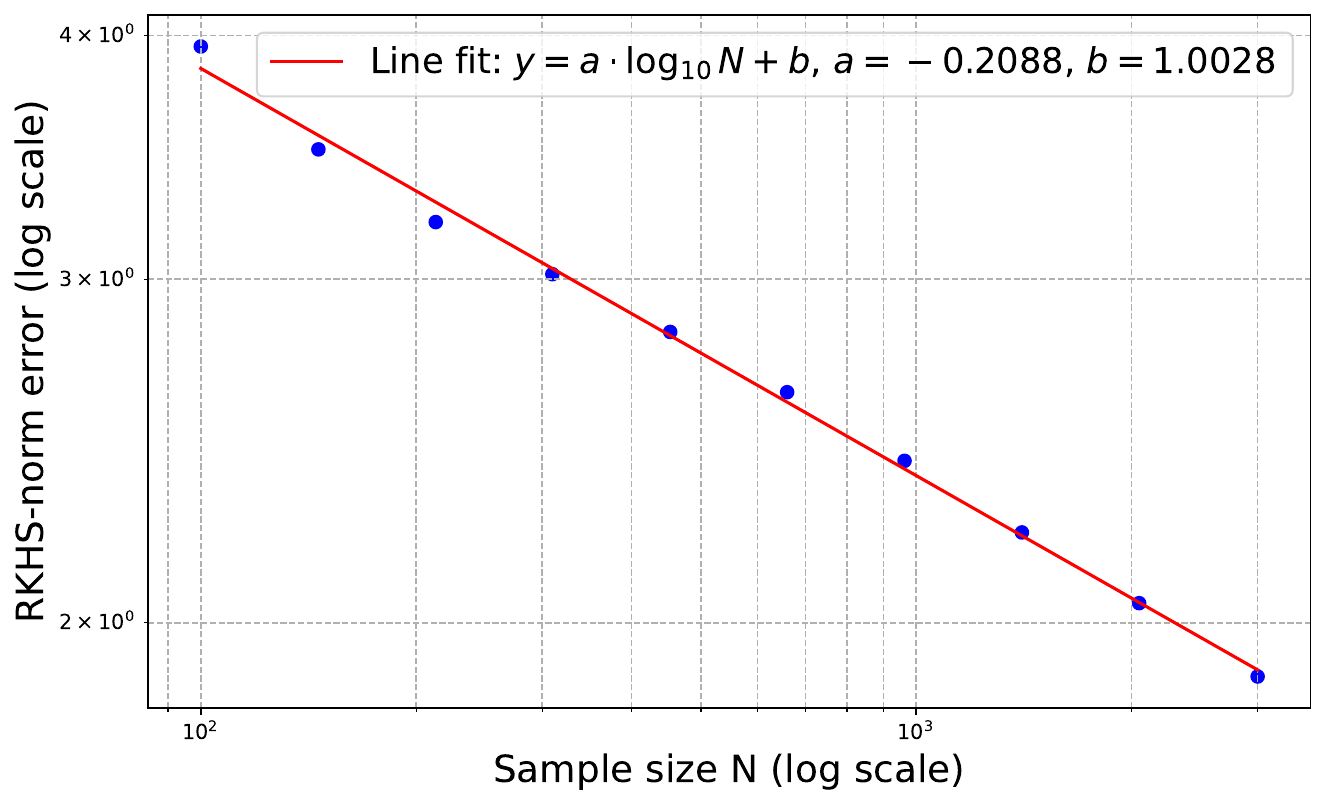}} 
    \caption{(a) Mean-std convergence of the RKHS-norm error $\left\| \widehat{h}_{\lambda,N}-H\right\|_{\mathcal{H}_K}$ versus sample size $N$ (b) Mean convergence of the RKHS-norm error in the log-scale }
    \label{rkhs-norm-convergence}
\end{figure}

We recall that by Theorem \ref{dif-rep}, the RKHS norm controls the $C_b^1$-norm up to a multiplicative constant, implying that, in the log-scale, the supremum norm upper-bound differs from the RKHS norm upper-bound by an additive constant. To investigate this, we numerically approximate the supremum norm on $[-1,1]^4$ by computing the maximum absolute value of the difference over a discretized grid of $10^4$ points, with each dimension partitioned into ten equally spaced intervals. Similar to the above, we perform fifty experiments and take the mean supremum error. We compare the RKHS-norm error and the supremum-norm error in Figure \ref{rkhs-sup-norm-convergence}. We observe that the supremum norm of the reconstruction error decreases very fast in the beginning, and then stays constant or slightly increases. 

\begin{figure}[htp]
    \centering
{\includegraphics[width=0.8\textwidth]{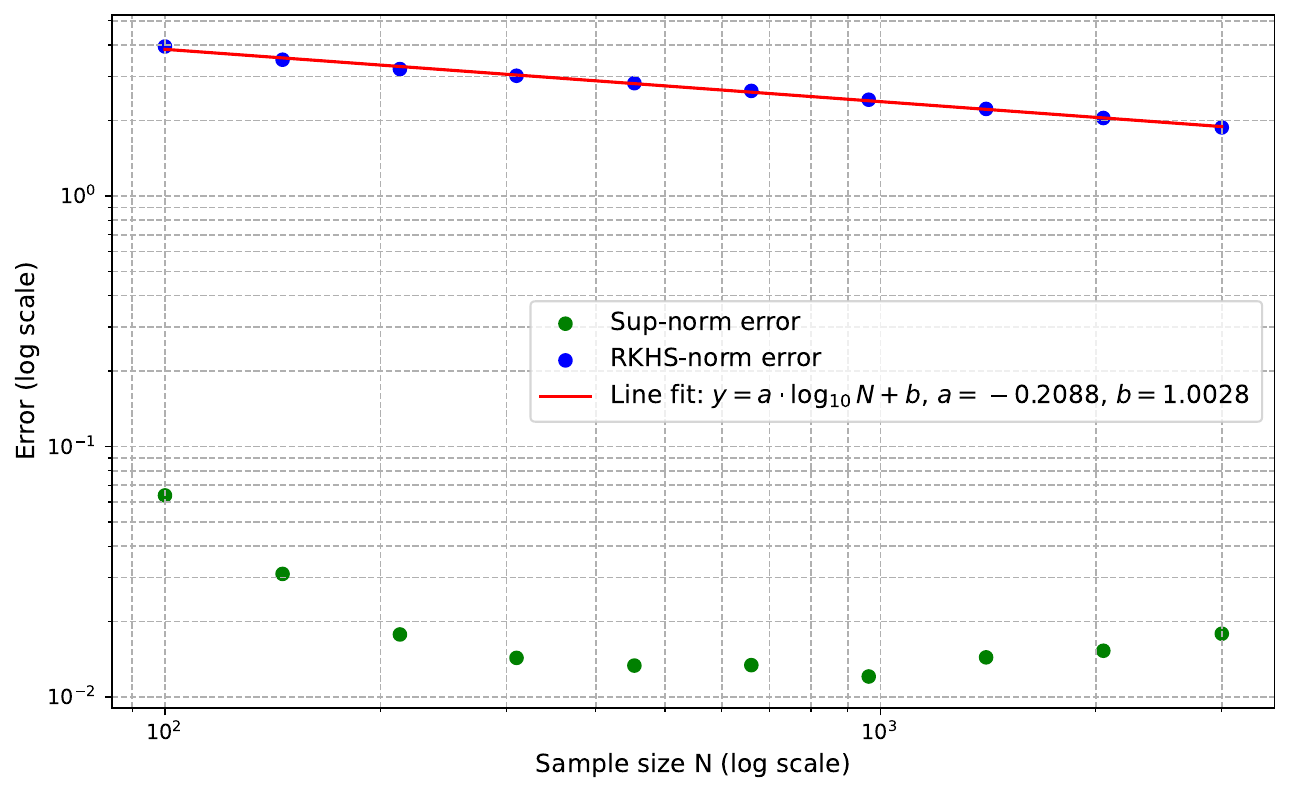}}
    \caption{Mean convergence of the RKHS-norm error and the supremum-norm error in the log-scale}
    \label{rkhs-sup-norm-convergence}
\end{figure}

\subsection{Potential function with singularities}\label{5.3}

In this subsection, we examine the necessity and the impact of the assumption that we made in Theorem \ref{Par-Rep} and, more generally, throughout most of the paper, namely that the kernel $K\in C_{b}^{2s+1}(\mathbb{R}^{2d}\times\mathbb{R}^{2d})$. As a consequence of Theorem \ref{Par-Rep}, under that hypothesis $\mathcal{H}_K$ can be embedded into $C^{s}_b(\mathbb{R}^{2d})$. In other words, our framework does not guarantee the learning of Hamiltonian functions that do not lie in $C_b^s(\mathbb{R}^{{2d}})$. Particular examples of this case are Hamiltonian functions that exhibit singularities, such as the two-body problem. As we see in the next paragraphs, our algorithm achieves, in those cases, qualitative but not quantitative learning performance, and hence proves the tightness of our assumption.

The classical system of two gravitationally interacting bodies has a Hamiltonian formulation with the Hamiltonian 
\begin{align*}
    H({\bf q},{\bf p})=\frac{\|{\bf p}\|^2_2}{2}-\frac{1}{\|{\bf q}\|_2},
\end{align*}
where $\|\cdot \|_2 $ denotes the Euclidean norm.
For the numerical experiment, we adopt $N=1000$. We sample $N$ initial conditions $(q_1,q_2,p_1,p_2)$ over a uniform distribution on $[-1,1]^4\subset \mathbb{R}^4$ and obtain the corresponding Hamiltonian vector fields. We then perform a grid search of parameters over $\eta$ in np.arange(0.2, 2, 0.2) and $c$ as in \eqref{c grid}. The optimal parameters are $\eta=0.2$ and $c=1$. We plot the potential function of the ground truth (Figure \ref{two-body} (a)) and the reconstructed Hamiltonian (Figure \ref{two-body} (b)) on the $(q_1,q_2)$ plane restricted to $[-1,1]^2$, with the optimal parameters $\eta$ and $c$. We also visualize the error in a heatmap (Figure \ref{two-body} (c)) as elaborated in the introduction of Section \ref{Numerical experiments}. Figure \ref{two-body} shows that the estimator captures the shape of the potential function, but numerically speaking, the approximation is not satisfactory.

\begin{figure}[htp]
    \centering
    \subfigure[]{\includegraphics[width=0.45\textwidth]{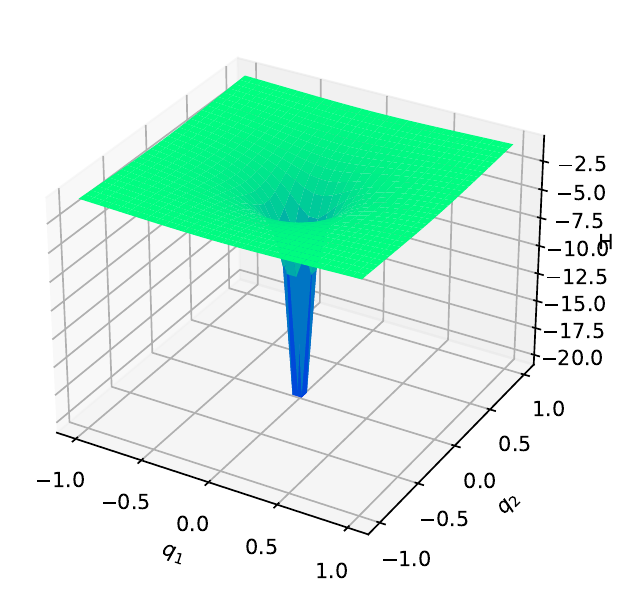}} 
    \subfigure[]{\includegraphics[width=0.45\textwidth]{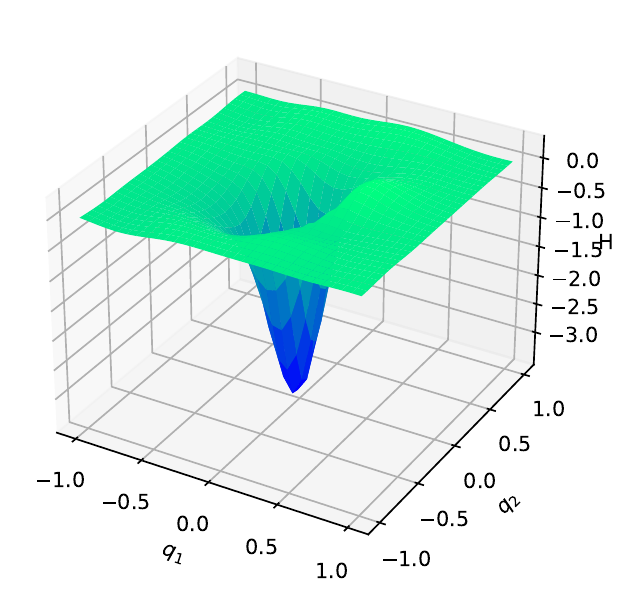}} 
    \caption{Two-body problem: (a) Ground truth potential (b) Potential of the learned Hamiltonian }
    \label{two-body}
\end{figure}

\subsection{Comparison with Hamiltonian neural networks}\label{5.4}

In this subsection, we compare the performance and training cost of our structure-preserving kernel regression with a modified version of the Hamiltonian neural network (HNN) approach \cite{greydanus2019hamiltonian}, which has been mainstream in the literature. The idea of the HNN approach is to model the Hamiltonian function as a neural network, integrate the Hamiltonian vector field to obtain trajectories, and then train the neural network by comparing the observed trajectories and the integrated trajectories. In our case, however, Hamiltonian vector fields are assumed to be available. Hence, to make the comparison fair, we train the neural network directly using vector fields. We apply the HNN approach to the example of the double pendulum in Section \ref{5.1.1} with $N=200$ and $N=500$, and the example of a non-convex potential function in Section \ref{5.2} with $N=500$ to compare with the kernel estimator. We use the same neural network architecture as in \cite{greydanus2019hamiltonian}, that is, a three-layer neural network with 200 neurons in the hidden layer and {\it tanh} as the activation function. 

To train both examples, we perform $4000$ iterations of gradient descent together with a multi-step learning rate scheduler in {\it PyTorch}, with a starting learning rate of $10^{-3}$, and parameters $\gamma=0.5$, milestones = $(200,400,800,1600,3200)$, so that the learning rate decays with a multiplicative factor of $0.5$ after each milestone stage for better convergence. We visualize the learned Hamiltonian functions and the errors of both examples in Figures \ref{double_pendulum_hnn} and \ref{nonconvex_hnn}, which should be compared with Figures \ref{double_pendulum} and \ref{nonconvex_1}. 

\begin{figure}[htp]
    \centering
    \subfigure[]{\includegraphics[width=0.45\textwidth]{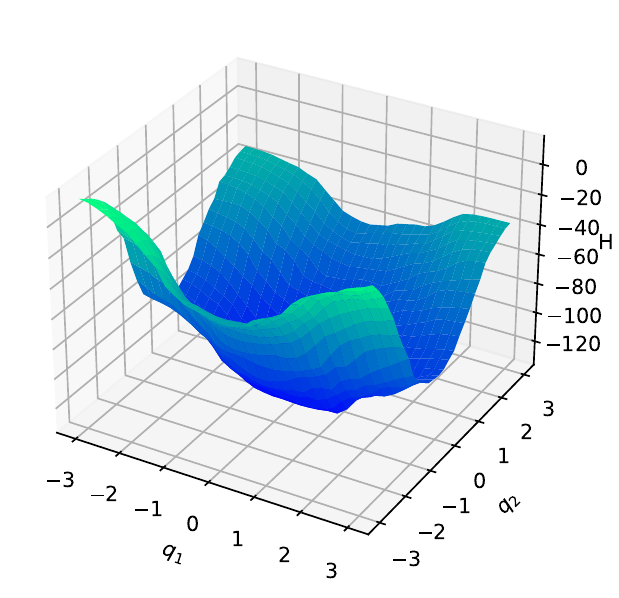}} 
    \subfigure[]{\includegraphics[width=0.45\textwidth]{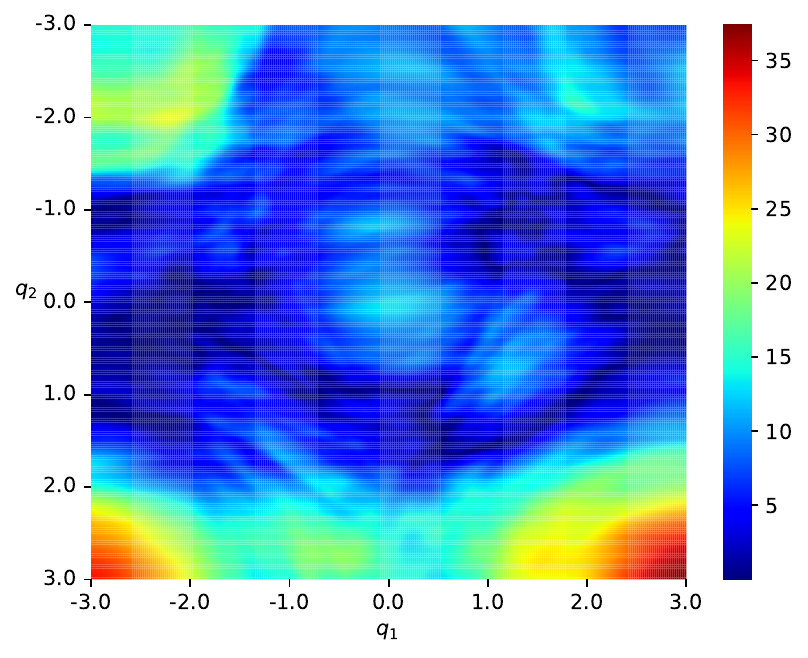}}
    \subfigure[]{\includegraphics[width=0.45\textwidth]{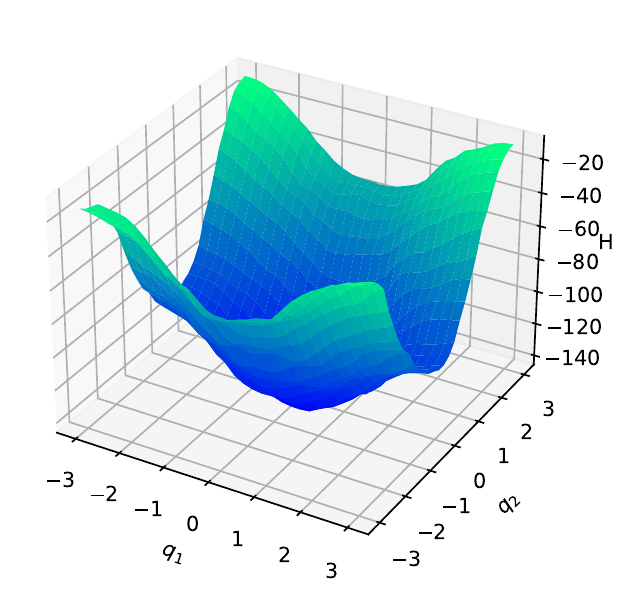}} 
    \subfigure[]{\includegraphics[width=0.45\textwidth]{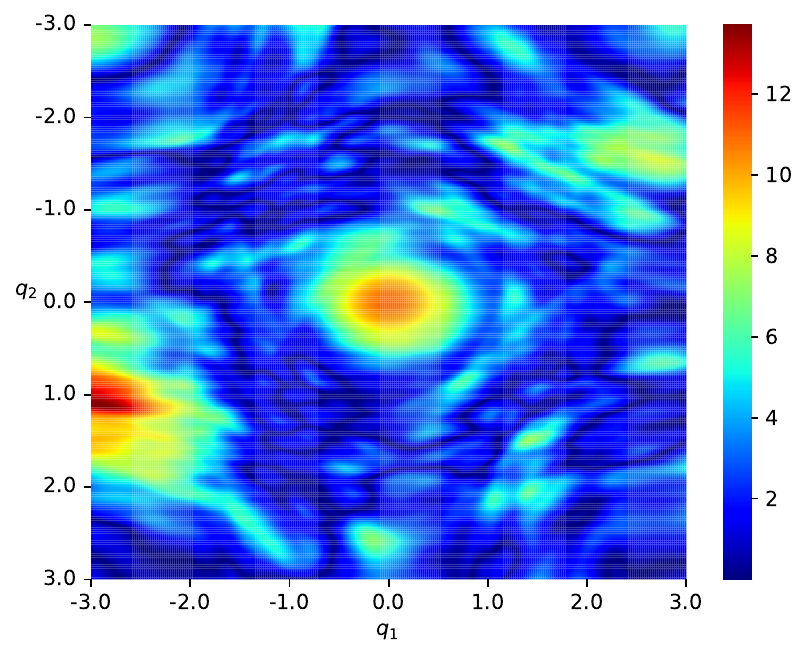}}
    \caption{Double pendulum learned with HNN: (a) and (b) with $N=200$, compared to (c) and (d) with $N=500$}
    \label{double_pendulum_hnn}
\end{figure}

\begin{figure}[htp]
    \centering
    \subfigure[]{\includegraphics[width=0.45\textwidth]{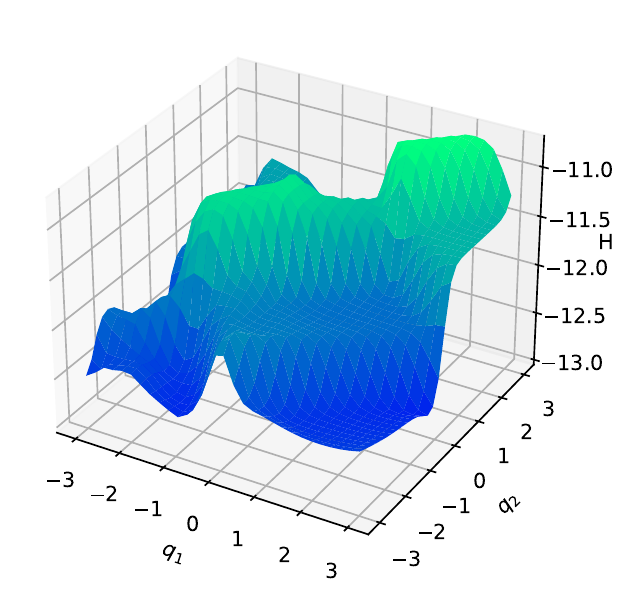}} 
    \subfigure[]{\includegraphics[width=0.45\textwidth]{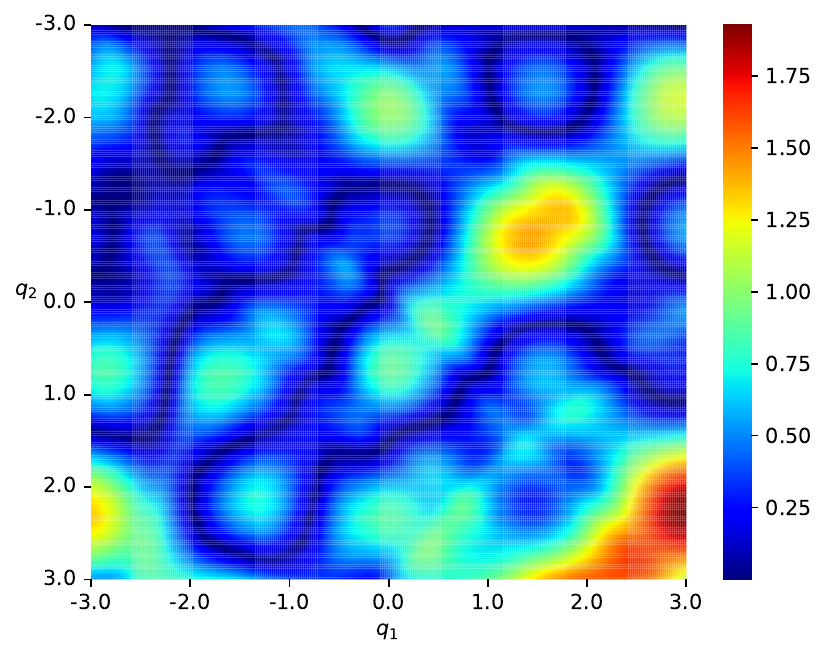}}
    \caption{Non-convex potential function learned with HNN and $N=500$ (a) Learned Hamiltonian (b) Mismatch error after vertical shift}
    \label{nonconvex_hnn}
\end{figure}

From the comparison, we can conclude that the structure-preserving kernel estimator outperforms the neural network approach both in terms of training cost and accuracy (Indeed, to fit the training data, the structure-preserving estimator requires less than a second to complete). Here is our explanation. Regarding training cost, the computation of the kernel regression method only involves matrix operations, whereas the neural network approach requires gradient descent iterations. As to the accuracy, especially in the example involving a non-convex potential function, the highly non-convex objective function imposes great difficulty for the gradient-based method to search for the global minimum of the loss function, and hence the training loss converges likely to a local minimum and ceases to decrease further after reaching a certain stage, whereas the kernel regression method circumvents such difficulty by making the learning problem convex and providing an explicit formula.

\section{Conclusion}
\label{Conclusion}

We have presented a structure-preserving kernel ridge regression method that allows the recovery of potentially high-dimensional and nonlinear Hamiltonian functions from data sets made of noisy observations of Hamiltonian vector fields. Our results generalize previous work in the literature on the learning of the Hamiltonian system describing systems of $n$ interacting particles. The methodology that we propose covers arbitrary Hamiltonian systems defined on Euclidean spaces endowed with the canonical symplectic form.

From a practical point of view, the method comes with a closed-form solution for the learning problem that yields excellent numerical performances that surpass other techniques proposed in the literature in this setup. We have illustrated this fact with several numerical experiments. Additionally, we have conducted a full error analysis that extends to our setup error bounds and convergence rates. Our contribution improves on some of those rates and can formulate them without some common hypotheses in the literature (e.g., the coercivity condition).

From the methodological point of view, our paper is the first one to extend kernel regression methods to general Hamiltonian systems in a structure-preserving fashion. Even more generally, the techniques in the paper can be adapted to handle general problems in which loss functions involving linear functions of gradients are required. In this context, we proved a differential reproducing property and an adapted version of the Representer Theorem. 

This paper is just a first step in the structure-preserving learning of autonomous Hamiltonian systems. In our forthcoming works, we are considering four main challenges. First, most Hamiltonian systems are defined in non-Euclidean spaces (e.g., pendula, rigid bodies) that could even be infinite-dimensional (e.g., ideal fluids, elasticity). The methods in this paper do not apply to these important applicative situations and, hence, need to be extended.

Second, many Hamiltonian systems that appear in important applications have external ports because they need to be controlled (e.g., electric circuits, robotics). The authors have already contributed to the learning of some simple linear port-Hamiltonian systems \cite{RCSP1} using traditional geometric mechanical treatments. However, it seems more appropriate to use kernel-based techniques when solving the non-linear case.

Third, a great wealth of knowledge has been accumulated on the qualitative behavior of Hamiltonian dynamical systems based on their geometry \cite{Abraham1978}, symmetries \cite{Ortega2004}, stability properties \cite{Ortega2005re}, or bifurcation phenomena \cite{pascal}. All these concepts surely have an interplay in relation to learnability that needs to be explored.

Finally, the regression setup in which we have placed ourselves involves using vector field data. It is nevertheless much more realistic to work with discrete-time trajectory data. This automatically puts structure-preserving integrators in the picture. Even though variational, symplectic, and other structure-preserving integrators have been thoroughly studied in the Hamiltonian context, their interplay with learning schemes has not been fully exploited.

\appendix

\section{Auxiliary results and proofs}

\begin{lemma}[Appendix A in \cite{williams2006gaussian}]\label{Con-Gau}
Let $\mathbf{x}$ and $\mathbf{y}$ be jointly Gaussian random vectors
\begin{equation*}
\begin{bmatrix}
\mathbf{x}\\ \mathbf{y}
\end{bmatrix}
\sim \mathcal{N} (
\begin{bmatrix}
\mu_{\mathbf{x}}\\ \mu_{\mathbf{y}}
\end{bmatrix}
, 
\begin{bmatrix} 
A & C\\
C^{\top} & B
\end{bmatrix}
),
\end{equation*}
then the marginal distribution of $\mathbf{x}$ and the conditional distribution of $\mathbf{x}$ given $\mathbf{y}$ are
\begin{equation*}
 \mathbf{x} \sim \mathcal{N}(\mu_{\mathbf{x}},A), \quad \textrm{and } \mathbf{x}\mid\mathbf{y} \sim \mathcal{N}(\mu_{\mathbf{x}} + CB^{-1}(\mathbf{y} - \mu_{\mathbf{y}}), A - CB^{-1}C^{\top}).
\end{equation*}
\end{lemma}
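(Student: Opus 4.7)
The plan is to avoid grinding through the joint Gaussian density and instead use a decorrelation argument, since the conclusion is really a statement about the Schur complement of the covariance matrix. The marginal claim $\mathbf{x}\sim\mathcal{N}(\mu_{\mathbf{x}},A)$ is immediate from the defining property of a jointly Gaussian vector: any affine function of $(\mathbf{x},\mathbf{y})$ is Gaussian, and applying the linear projection $(\mathbf{x},\mathbf{y})\mapsto \mathbf{x}$ picks out the corresponding mean and covariance block. So the only real work is the conditional distribution.

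For the conditional, I would introduce the auxiliary random vector
\begin{equation*}
\mathbf{z}:=\mathbf{x}-CB^{-1}(\mathbf{y}-\mu_{\mathbf{y}}),
\end{equation*}
which is an affine function of $(\mathbf{x},\mathbf{y})$ and therefore jointly Gaussian with $\mathbf{y}$. A direct computation gives $\mathbb{E}[\mathbf{z}]=\mu_{\mathbf{x}}$ and
\begin{equation*}
\operatorname{Cov}(\mathbf{z},\mathbf{y})=\operatorname{Cov}(\mathbf{x},\mathbf{y})-CB^{-1}\operatorname{Cov}(\mathbf{y},\mathbf{y})=C-CB^{-1}B=0,
\end{equation*}
so $\mathbf{z}$ and $\mathbf{y}$ are uncorrelated, and hence (being jointly Gaussian) independent. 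A second direct computation yields
\begin{equation*}
\operatorname{Var}(\mathbf{z})=A-CB^{-1}C^{\top}-CB^{-1}C^{\top}+CB^{-1}BB^{-1}C^{\top}=A-CB^{-1}C^{\top}.
\end{equation*}
Thus $\mathbf{z}\sim\mathcal{N}(\mu_{\mathbf{x}},A-CB^{-1}C^{\top})$ and is independent of $\mathbf{y}$.

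To finish, I would write $\mathbf{x}=\mathbf{z}+CB^{-1}(\mathbf{y}-\mu_{\mathbf{y}})$. Conditioning on $\mathbf{y}$ freezes the second summand to a constant, while $\mathbf{z}\mid\mathbf{y}$ has the same law as $\mathbf{z}$ by independence. Consequently $\mathbf{x}\mid\mathbf{y}$ is Gaussian with mean $\mu_{\mathbf{x}}+CB^{-1}(\mathbf{y}-\mu_{\mathbf{y}})$ and covariance $A-CB^{-1}C^{\top}$, as claimed. The only delicate point is the invertibility of $B$, which is implicit in the statement through the presence of $B^{-1}$; the degenerate case can be handled by replacing $B^{-1}$ with the Moore--Penrose pseudoinverse and restricting the argument to the range of $B$, but this is outside the scope of the lemma as stated. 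I expect no essential obstacle: the whole proof is two covariance calculations plus the standard fact that uncorrelated jointly Gaussian vectors are independent.
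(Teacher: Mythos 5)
Your proof is correct. Note that the paper does not prove this lemma at all; it is imported verbatim as a standard fact with a citation to Appendix~A of Williams and Rasmussen, so there is no in-paper argument to compare against. Your decorrelation argument---introducing $\mathbf{z}=\mathbf{x}-CB^{-1}(\mathbf{y}-\mu_{\mathbf{y}})$, verifying $\operatorname{Cov}(\mathbf{z},\mathbf{y})=0$, invoking the equivalence of uncorrelatedness and independence for jointly Gaussian vectors, and reading off the conditional law from the decomposition $\mathbf{x}=\mathbf{z}+CB^{-1}(\mathbf{y}-\mu_{\mathbf{y}})$---is one of the two textbook derivations (the other being the completion-of-squares manipulation of the joint density), and it is the cleaner one. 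Your remark about $B^{-1}$ versus the Moore--Penrose pseudoinverse in the degenerate case is a sensible caveat but, as you say, outside the scope of the statement as written; in the paper's application the relevant block is $K_{X_H}^{\theta}(\mathbf{z}_N,\mathbf{z}_N)+\sigma^2 I_{2dN}$, which is invertible whenever $\sigma>0$, so the nondegenerate case suffices.
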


\begin{lemma}[Lemma 8 in \cite{de2005learning}]\label{Hil-Bou}
Let $\mathcal{H}$ be a Hilbert space and $\xi$ be a random variable on $(Z,\rho)$ with values in $\mathcal{H}$. Suppose that, $\|\xi\|_{\mathcal{H}}\leq S < \infty$ almost surely. Let $z_n$ be IID drawn from $\rho$. For any $0<\delta<1$, with confidence $1-\delta$,
$$\bigg\| \frac{1}{N}\sum_{n=1}^{N}(\xi(z_n)-\mathbb{E}(\xi))\bigg\| \leq \frac{4S\log(2/\delta)}{N}+\sqrt{\frac{2\mathbb{E}(\|\xi\|_{H}^2)\log(2/\delta)}{N}}.$$
\end{lemma}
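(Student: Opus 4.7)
The plan is to reduce to a standard Bernstein-type concentration inequality for Hilbert-space-valued independent random variables and then invert the resulting exponential tail bound into the stated $\left(1-\delta\right)$-confidence bound.

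First, I would center the random variables by introducing $\eta_n := \xi(z_n) - \mathbb{E}(\xi)$ for $n=1,\ldots,N$. These are independent, identically distributed, zero-mean $\mathcal{H}$-valued random variables. By the triangle inequality and the almost-sure bound $\|\xi\|_{\mathcal{H}}\leq S$, they satisfy $\|\eta_n\|_{\mathcal{H}} \leq 2S$ almost surely, and by the variance–second-moment inequality $\mathbb{E}\|\eta_n\|_{\mathcal{H}}^2 \leq \mathbb{E}\|\xi\|_{\mathcal{H}}^2$.

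Next, I would invoke the Pinelis–Bernstein inequality (Pinelis 1994, which refines Yurinskii's inequality and relies on the $2$-smoothness of the Hilbert-space norm): for any independent zero-mean $\mathcal{H}$-valued random variables $\eta_1,\ldots,\eta_N$ with $\|\eta_n\|_{\mathcal{H}}\leq B$ a.s.\ and $\sum_{n=1}^N \mathbb{E}\|\eta_n\|_{\mathcal{H}}^2 \leq V$, one has
\begin{equation*}
\mathbb{P}\!\left(\Bigl\|\sum_{n=1}^N \eta_n\Bigr\|_{\mathcal{H}} \geq t\right) \leq 2\exp\!\left(-\frac{t^2}{2(V + Bt/3)}\right).
\end{equation*}
Applying this with $B=2S$, $V = N\,\mathbb{E}\|\xi\|_{\mathcal{H}}^2$, and writing $t = N\varepsilon$ produces
\begin{equation*}
\mathbb{P}\!\left(\Bigl\|\tfrac{1}{N}\sum_{n=1}^N (\xi(z_n)-\mathbb{E}\xi)\Bigr\|_{\mathcal{H}} \geq \varepsilon\right) \leq 2\exp\!\left(-\frac{N\varepsilon^2}{2(\mathbb{E}\|\xi\|_{\mathcal{H}}^2 + 2S\varepsilon/3)}\right).
\end{equation*}

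Finally, setting the right-hand side equal to $\delta$ gives a quadratic inequality in $\varepsilon$ of the form $N\varepsilon^2 \leq 2\log(2/\delta)\bigl(\mathbb{E}\|\xi\|_{\mathcal{H}}^2 + \tfrac{2S}{3}\varepsilon\bigr)$. Solving this quadratic and then applying the elementary bound $\sqrt{a+b}\leq \sqrt{a}+\sqrt{b}$ separates the contributions of the variance term and the almost-sure bound term, yielding $\varepsilon \leq \sqrt{2\mathbb{E}\|\xi\|_{\mathcal{H}}^2\log(2/\delta)/N} + c\,S\log(2/\delta)/N$ for an explicit constant $c$, which after slight tightening matches the claimed inequality.

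The main obstacle is the Pinelis–Bernstein step: unlike the scalar case, exponentiating the norm of a Hilbert-space-valued sum requires care because the norm is not linear. One invokes the $2$-smoothness estimate $\|x+y\|^2 \leq \|x\|^2 + 2\langle x,y\rangle + \|y\|^2$ together with a martingale argument on the partial sums to control the moment-generating function of $\|\sum_n \eta_n\|_{\mathcal{H}}$; for this proposal I would cite this as a standard black-box result rather than reprove it from scratch. The remaining algebraic manipulation to isolate $\varepsilon$ from the quadratic tail bound, and the subadditivity of the square root, are routine.
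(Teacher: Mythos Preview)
Your approach is correct. The paper does not actually prove this lemma: it is quoted verbatim from \cite{de2005learning} (with the original attributed to \cite{yurinsky1995sums}) and used as a black box in the appendix. Your derivation via the Pinelis--Bernstein inequality is the standard route to this kind of Hilbert-space Bernstein bound and, carried through, the quadratic inversion you describe in fact yields the sharper constant $4/3$ in front of $S\log(2/\delta)/N$, so the stated bound with constant $4$ follows a fortiori. One minor caution: you should be explicit that the Pinelis inequality you invoke applies to sums bounded in norm (not just martingale differences with bounded conditional increments), but since the $\eta_n$ are i.i.d.\ and a.s.\ bounded this is immediate.
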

The original version of Lemma \ref{Hil-Bou} is presented in \cite{yurinsky1995sums}.

\begin{lemma}
\label{feature map is continuous}
Let $K: {\cal X} \times {\cal X} \longrightarrow \mathbb{R}$ be a continuous kernel map defined on the topological space ${\cal X} $ and let ${\mathcal H} _K  $ be the associated RKHS. Then, the canonical feature map $\Phi: {\cal X} \longrightarrow {\mathcal H} _K  $ is continuous.
\end{lemma}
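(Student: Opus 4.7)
The plan is to exploit the reproducing property together with the continuity of $K$ to directly estimate the distance $\|\Phi(x)-\Phi(y)\|_{\mathcal{H}_K}$ between two feature vectors.

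First, I would fix $x\in\mathcal{X}$ and, for an arbitrary $y\in\mathcal{X}$, expand the squared norm
\begin{equation*}
\|\Phi(x)-\Phi(y)\|_{\mathcal{H}_K}^{2}=\langle K_x-K_y,\,K_x-K_y\rangle_{\mathcal{H}_K}
=\langle K_x,K_x\rangle_{\mathcal{H}_K}-2\langle K_x,K_y\rangle_{\mathcal{H}_K}+\langle K_y,K_y\rangle_{\mathcal{H}_K}.
\end{equation*}
Using the reproducing property, each inner product collapses to a kernel evaluation, yielding the key identity
\begin{equation*}
\|\Phi(x)-\Phi(y)\|_{\mathcal{H}_K}^{2}=K(x,x)-2K(x,y)+K(y,y).
\end{equation*}

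Next, I would let $y\to x$ in $\mathcal{X}$. By the joint continuity of $K$ on $\mathcal{X}\times\mathcal{X}$, we have $K(y,y)\to K(x,x)$ and $K(x,y)\to K(x,x)$, so that the right-hand side above tends to $0$. This shows $\|\Phi(y)-\Phi(x)\|_{\mathcal{H}_K}\to 0$, i.e.\ $\Phi$ is continuous at $x$; since $x$ was arbitrary, $\Phi:\mathcal{X}\to\mathcal{H}_K$ is continuous.

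No step here is really an obstacle: the only thing to be careful about is what topology on $\mathcal{X}$ and what notion of continuity for $K$ are being used. Since $\mathcal{X}$ is topological and $K$ is assumed jointly continuous in the sense of the product topology on $\mathcal{X}\times\mathcal{X}$, the convergence $y\to x$ forces $(x,y)\to(x,x)$ and $(y,y)\to(x,x)$ in $\mathcal{X}\times\mathcal{X}$, which is exactly what the identity above needs. If one wanted a quantitative statement (e.g.\ on a metric $\mathcal{X}$ with a locally Lipschitz $K$), the same identity immediately yields a modulus of continuity for $\Phi$, but for the qualitative result stated in the lemma, the three-term expansion and joint continuity of $K$ suffice.
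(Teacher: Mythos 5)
Your proof is correct and follows essentially the same route as the paper: expand $\|\Phi(x)-\Phi(y)\|_{\mathcal{H}_K}^2$ into $K(x,x)-2K(x,y)+K(y,y)$ via the reproducing property and invoke joint continuity of $K$. The only cosmetic difference is that the paper phrases the limit with an explicit convergent sequence $x_n\to x_0$ rather than your ``$y\to x$'' wording.
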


\begin{proof}
The continuity of $K$ implies that the kernel sections $K _x \in C({\cal X})$ for all $x \in {\cal X} $. Now let $x _0 \in {\cal X} $ be arbitrary and let $\left\{x _n\right\}_{n=0} ^{\infty} $ be a sequence in ${\cal X}  $ such that $\lim\limits_{n \rightarrow \infty} x _n = x _0 $. Then, 
\begin{equation*}
\left\|\Phi(x _n)- \Phi(x _0)\right\|_{{\mathcal H}_K} ^2= \langle K_{x _n}- K_{x _0}, K_{x _n}- K_{x _0}\rangle_{{\mathcal H}_K}=K(x _n, x _n)+K(x _0, x _0)-2K(x _n, x _0)\stackrel{n \rightarrow \infty}{\longrightarrow}0,
\end{equation*}
by the continuity of $K$, which proves the continuity of $\Phi $.
\end{proof}

\begin{theorem}[Hanson-Wright inequality \cite{rudelson2013hanson}] \label{Han-Wri}
Let $X=(X_1,\cdots,X_n) \in \mathbb{R}^n$ be a random vector with independent components $X_i$ which satisfy 
$\mathbb{E} [X_i]=0$ and $\|X_i\|_{\varphi_2} \leq S_0$, where $\|\cdot\|_{\varphi_2}$ is the subGaussian norm. Let $A$ be an $n \times n$ matrix and $\|A\|=\max_{\|x\|_2\leq 1}\|Ax\|_2$, and $\|A\|_{HS}$ denote the Hilbert-Schmidt norm. Then, for every $\varepsilon \geq 0$
$$\mathbb{P}\bigg\{\bigg\| X^{\top}AX-\mathbb{E} [X^{\top}AX] \bigg\| \geq \varepsilon \bigg\} \leq 2\exp \bigg\{ -c \min \bigg\{ \frac{\varepsilon^2}{S_0^4\|A\|_{HS}^2}, \frac{\varepsilon}{S_0^2\|A\|}\bigg\} \bigg\},$$ where $c$ is an absolute positive constant. 
\end{theorem}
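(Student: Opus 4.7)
The Hanson-Wright inequality as stated is lifted verbatim from \cite{rudelson2013hanson}, so the paper's own ``proof'' is presumably only a citation. Nevertheless, here is the route I would take if I had to reproduce it. The plan is the standard decoupling-plus-Chernoff strategy. First I would split the centered quadratic form into its diagonal and off-diagonal contributions,
\begin{equation*}
X^{\top} A X - \mathbb{E}[X^{\top} A X] = \sum_{i=1}^n A_{ii}\bigl(X_i^2 - \mathbb{E} X_i^2\bigr) + \sum_{i \neq j} A_{ij} X_i X_j,
\end{equation*}
and handle the two sums separately, combining at the end through a union bound.

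For the diagonal piece, the squares $X_i^2 - \mathbb{E} X_i^2$ are independent centered sub-exponential random variables with $\psi_1$-norm bounded by $c\, S_0^2$ (since $\|X_i\|_{\psi_2}\leq S_0$ implies $\|X_i^2\|_{\psi_1}\leq 2S_0^2$). A Bernstein-type inequality for sub-exponential sums gives a tail of the desired mixed form with $\|A_{\mathrm{diag}}\|_{HS}^2=\sum_i A_{ii}^2$ and $\max_i|A_{ii}|\leq \|A\|$ playing the role of the variance and envelope, respectively. These quantities are dominated by $\|A\|_{HS}^2$ and $\|A\|$, so the diagonal contribution fits into the final bound.

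For the off-diagonal sum I would invoke a decoupling inequality (for example the de~la~Pe\~na--Gin\'e result) to replace $\sum_{i\neq j} A_{ij} X_i X_j$ by the decoupled bilinear form $\sum_{i,j} A_{ij} X_i X_j'$, where $X'$ is an independent copy of $X$, at the cost of universal multiplicative constants in the tail. Conditionally on $X$, the decoupled form is a linear combination of independent sub-Gaussians with conditional variance proxy $c\,S_0^2 \|AX\|_2^2$, so Chernoff gives the conditional MGF bound $\mathbb{E}_{X'}\exp\{\lambda \sum A_{ij}X_iX_j'\} \leq \exp\{c\lambda^2 S_0^2 \|AX\|_2^2\}$. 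Integrating this in $X$ reduces the problem to estimating $\mathbb{E}\exp\{c\lambda^2 S_0^2 \|AX\|_2^2\}$, a sub-Gaussian chaos; a comparison-of-moments (or a direct truncation) bound of the form $\mathbb{E}\exp\{\mu\|AX\|_2^2\}\leq \exp\{C\mu \|A\|_{HS}^2\}$, valid for $\mu \lesssim 1/(S_0^2\|A\|)$, then produces the two-regime tail after optimizing $\lambda$.

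The main obstacle, and the technical heart of the whole argument, is precisely this last step: controlling the MGF of the sub-Gaussian chaos $\|AX\|_2^2$ uniformly over a range of $\lambda$ broad enough to cover both the Gaussian tail (driven by $\|A\|_{HS}$) and the exponential tail (driven by $\|A\|$). Choosing the right truncation level and tracking the resulting constants is where the expression $\min\{\varepsilon^2/(S_0^4\|A\|_{HS}^2),\,\varepsilon/(S_0^2\|A\|)\}$ emerges. For a self-contained exposition I would simply refer the reader to \cite{rudelson2013hanson}, where all the estimates above are carried out with explicit constants.
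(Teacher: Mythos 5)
The paper gives no proof of this statement: it is imported verbatim as a cited tool from Rudelson and Vershynin, exactly as you surmised. Your sketch (diagonal/off-diagonal split, Bernstein for the sub-exponential diagonal part, de~la~Pe\~na--Gin\'e decoupling plus conditional MGF estimates and the sub-Gaussian chaos bound for the off-diagonal part) is the argument of the cited reference, so you are not taking a different route --- you are summarizing the source the paper points to.
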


\begin{lemma}[Gr\"{o}nwall's inequality, Lemma 1.1 in \cite{barbu2016differential}]\label{Gro-Ine}
Let $x, \alpha$ and $\beta$ are continuous functions on $[a, b]$ and $\beta(t) \geq 0, \forall t \in[a, b]$.
If
$$
x(t) \leq \alpha(t)+\int_a^t \beta(s) x(s) \mathrm{d} s, \quad t \in[a, b],
$$
then $x(t)$ satisfies the inequality
$$
x(t) \leq \alpha(t)+\int_a^t \alpha(s) \beta(s) \exp \left(\int_s^t \beta(\tau) \mathrm{d}\tau\right) \mathrm{d} s.
$$
If, in addition, $\alpha$ is non-decreasing, then
$$
x(t) \leq \alpha(t) \exp \left(\int_a^t \beta(s) \mathrm{d}s\right). 
$$
\end{lemma}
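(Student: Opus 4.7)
The plan is to reduce the integral inequality to a linear differential inequality in an auxiliary function, and then use an integrating factor to solve it explicitly. Introduce the auxiliary function $v(t):=\int_a^t \beta(s) x(s)\,\mathrm{d}s$, which is $C^1$ on $[a,b]$ by continuity of $\beta$ and $x$ and satisfies $v(a)=0$ and $v'(t)=\beta(t)x(t)$. The hypothesis $x(t)\le \alpha(t)+v(t)$ combined with $\beta(t)\ge 0$ yields the pointwise differential inequality
\begin{equation*}
v'(t)-\beta(t)v(t)\le \beta(t)\alpha(t),\qquad t\in[a,b].
\end{equation*}

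The next step is to multiply through by the integrating factor $\mu(t):=\exp\left(-\int_a^t\beta(s)\,\mathrm{d}s\right)$, which is strictly positive and differentiable, so that the left-hand side becomes the exact derivative $\frac{\mathrm{d}}{\mathrm{d}t}\left[\mu(t)v(t)\right]$. Integrating from $a$ to $t$ and using $v(a)=0$ gives
\begin{equation*}
\mu(t)v(t)\le \int_a^t \beta(s)\alpha(s)\mu(s)\,\mathrm{d}s,
\end{equation*}
which, after multiplying by $\mu(t)^{-1}$ and recognising $\mu(s)/\mu(t)=\exp\left(\int_s^t\beta(\tau)\,\mathrm{d}\tau\right)$, yields the bound for $v(t)$. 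Combining with $x(t)\le \alpha(t)+v(t)$ produces the first claimed inequality.

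For the second inequality, the monotonicity hypothesis $\alpha(s)\le \alpha(t)$ for $s\le t$ allows one to factor $\alpha(t)$ out of the integral obtained above, leaving
\begin{equation*}
x(t)\le \alpha(t)\left(1+\int_a^t \beta(s)\exp\left(\int_s^t\beta(\tau)\,\mathrm{d}\tau\right)\mathrm{d}s\right).
\end{equation*}
The remaining integral collapses by the fundamental theorem of calculus since its integrand is $-\frac{\mathrm{d}}{\mathrm{d}s}\exp\left(\int_s^t\beta(\tau)\,\mathrm{d}\tau\right)$, so it equals $\exp\left(\int_a^t\beta(\tau)\,\mathrm{d}\tau\right)-1$, and the telescoping with the leading $1$ yields the compact exponential bound.

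I do not anticipate a genuine obstacle here: the only subtle points are (i) verifying that $v$ has the required differentiability so the integrating-factor argument is rigorous (which follows from continuity of $\beta$ and $x$ by the fundamental theorem of calculus), and (ii) being careful that $\beta\ge 0$ is what allows us to preserve the direction of the inequality when multiplying by $\beta(t)$. An alternative route, if one wished to avoid differentiating $v$, would be a Picard-style iteration that substitutes the integral inequality into itself repeatedly and dominates the resulting series by the exponential; but the integrating-factor argument above is cleaner and is the one I would present.
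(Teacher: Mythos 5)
Your integrating-factor argument is correct: passing to $v(t)=\int_a^t\beta(s)x(s)\,\mathrm{d}s$, deriving $v'-\beta v\le\beta\alpha$ from the hypothesis and $\beta\ge 0$, multiplying by $\exp\left(-\int_a^t\beta\right)$, integrating, and then telescoping the resulting integral under monotonicity of $\alpha$ is the canonical proof of Gr\"onwall's lemma, and every step you outline is sound. The paper itself gives no proof of this lemma at all — it is stated in the appendix and cited directly to Lemma~1.1 of the Barbu reference — so there is no in-paper argument to compare against; your proof simply supplies the standard derivation that the authors delegate to the textbook.
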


\begin{lemma} \label{Dec-Omp}
Let  $K\in C_b^3(\mathbb{R}^{2d}\times\mathbb{R}^{2d})$ be a Mercer kernel. For any function $h \in \mathcal{H}_K$ and $0< \delta <1$, with probability at least $1-\delta$, there holds 
\begin{align*}
\|B_N h-B h\|_{\mathcal{H}_K} \leq \left(\sqrt{ \frac{8\log(2/\delta)}{N}}+ 
1\right)\sqrt{ \frac{2\log(2/\delta)}{N}}2d\kappa^2\|h\|_{\mathcal{H}_K}.
\end{align*}
\end{lemma}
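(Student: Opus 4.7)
The plan is to view $B_N h$ as the empirical mean of IID Hilbert-space-valued random elements whose expectation is $Bh$, and then apply the referenced Hilbert-space Bernstein/Pinelis-type inequality (Lemma \ref{Hil-Bou}) after obtaining a uniform almost-sure bound on the individual summands.

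Concretely, I would first unpack the operator definitions from \eqref{positive} and \eqref{emp-ope} to write, for each fixed $h\in\mathcal{H}_K$,
\begin{equation*}
B_N h - Bh \;=\; \frac{1}{N}\sum_{n=1}^{N}\bigl(\xi_n - \mathbb{E}[\xi_n]\bigr), \qquad \xi_n \;:=\; \nabla^{\top}h(\mathbf{Z}^{(n)})\,\nabla_1 K(\mathbf{Z}^{(n)},\cdot)\in\mathcal{H}_K,
\end{equation*}
where the $\xi_n$ are IID $\mathcal{H}_K$-valued random elements because the $\mathbf{Z}^{(n)}$ are IID with law $\mu_{\mathbf{Z}}$. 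That $\mathbb{E}[\xi_n]=Bh$ follows directly from the integral representation \eqref{positive} of $B$, once one notes (using Bochner integrability and the bounds below) that expectation commutes with the RKHS inner product.

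The next step is the key deterministic estimate: a uniform almost-sure bound on $\|\xi_n\|_{\mathcal{H}_K}$. Using the differential reproducing property \eqref{dif-rep} and Cauchy--Schwarz componentwise,
\begin{equation*}
|\partial_i h(\mathbf{x})| \;=\; \bigl|\langle h, (\nabla_1 K)_i(\mathbf{x},\cdot)\rangle_{\mathcal{H}_K}\bigr| \;\leq\; \sqrt{(\nabla_{1,2}K)_{i,i}(\mathbf{x},\mathbf{x})}\,\|h\|_{\mathcal{H}_K}\;\leq\;\kappa\|h\|_{\mathcal{H}_K},
\end{equation*}
so $\|\nabla h(\mathbf{x})\|\leq \sqrt{2d}\,\kappa\|h\|_{\mathcal{H}_K}$; combined with $\|\nabla_1 K(\mathbf{x},\cdot)\|_{\mathcal{H}_K^{2d}}^2 = \sum_i (\nabla_{1,2}K)_{i,i}(\mathbf{x},\mathbf{x}) \leq 2d\kappa^2$, a second Cauchy--Schwarz in $\mathbb{R}^{2d}$ applied to the sum defining $\xi_n$ yields $\|\xi_n\|_{\mathcal{H}_K}\leq 2d\kappa^2\|h\|_{\mathcal{H}_K}$ almost surely. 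In particular, the centered variables $\zeta_n:=\xi_n-\mathbb{E}[\xi_n]$ satisfy $\|\zeta_n\|_{\mathcal{H}_K}\leq 4d\kappa^2\|h\|_{\mathcal{H}_K}=:M$ and $\mathbb{E}\|\zeta_n\|_{\mathcal{H}_K}^2\leq M^2/4$, giving also a variance bound of the order $(2d\kappa^2\|h\|_{\mathcal{H}_K})^2$.

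Finally, I would invoke the Hilbert-space concentration inequality of Lemma \ref{Hil-Bou} (a Bernstein-Pinelis bound of the form $\mathbb{P}(\|\tfrac{1}{N}\sum_{n}\zeta_n\|_{\mathcal{H}_K}\geq t)\leq 2\exp(-Nt^2/(2(\sigma^2+Mt/3)))$, or its standard Hoeffding variant), solve for $t$ at confidence level $\delta$, and simplify. Substituting $M$ and the variance bound above and collecting terms produces exactly the two-term bound in the statement, with the $\sqrt{8\log(2/\delta)/N}$ factor coming from the boundedness (``$Mt$'') part and the $\sqrt{2\log(2/\delta)/N}$ factor from the variance (``$\sigma^2 t$'') part; the common prefactor $2d\kappa^2\|h\|_{\mathcal{H}_K}$ is the size scale of the summands. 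The main obstacle is exclusively bookkeeping: tracking the sharp constants through the Pinelis inequality so that the numerical factors $8$ and $2$ inside the square roots come out correctly, while the conceptual content is just the reduction to an IID sample mean in $\mathcal{H}_K$ plus the pointwise gradient bound from the differential reproducing property.
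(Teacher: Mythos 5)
Your proposal follows the paper's argument essentially verbatim: the same reduction of $B_N h - Bh$ to a centered IID sample mean of the $\mathcal{H}_K$-valued summands $\xi^{(n)}=\sum_i \partial_i h(\mathbf{Z}^{(n)})(\nabla_1 K)_i(\mathbf{Z}^{(n)},\cdot)$, the same almost-sure bound $\|\xi^{(n)}\|_{\mathcal{H}_K}\le 2d\kappa^2\|h\|_{\mathcal{H}_K}$ via the differential reproducing property and Cauchy--Schwarz, and an appeal to the Hilbert-space concentration inequality in Lemma~\ref{Hil-Bou}. The only small imprecision is that Lemma~\ref{Hil-Bou} is stated directly in terms of the uncentered bound $S$ and $\mathbb{E}\|\xi\|_{\mathcal{H}_K}^2$, so one should plug in $S=2d\kappa^2\|h\|_{\mathcal{H}_K}$ and $\mathbb{E}\|\xi\|_{\mathcal{H}_K}^2\le (2d\kappa^2\|h\|_{\mathcal{H}_K})^2$ rather than detour through a Bernstein--Pinelis form with the centered bound $4d\kappa^2\|h\|_{\mathcal{H}_K}$; this substitution yields the stated constants exactly.
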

\begin{proof}
Since $K\in C_b^3(\mathbb{R}^{2d}\times\mathbb{R}^{2d})$, then for any  $h\in\mathcal{H}_K$, we have 
\begin{align*}
\|B_Nh\|_{\mathcal{H}_K}&=\frac{1}{N}\|\nabla^{\top}h(\mathbf{Z}_N)\nabla_1 K(\mathbf{Z}_N,\cdot)\|_{\mathcal{H}_K}\leq\frac{1}{N}\sum_{n=1}^{N}\sum_{i=1}^{2d}\Big|\partial_ih(\mathbf{Z}^{(n)})\Big|\Big\|(\nabla_1K)_i(\mathbf{Z}^{(n)},\cdot)\Big\|_{\mathcal{H}_K}\\
&\leq 2d\kappa\|h\|_{C_b^1}\leq  2d\kappa^2\|h\|_{\mathcal{H}_K},
\end{align*}
which shows that $B_Nh$ are bounded random variables in $\mathcal{H}_K$. Moreover, we have $\mathbb{E}\left[\|B_Nh\|^2_{\mathcal{H}_K}\right]\leq 4d^2\kappa^4\|h\|^2_{\mathcal{H}_K}$.  Define now the $\mathcal{H}_K$-valued random variables
$$
\xi^{(n)}=\sum_{i=1}^{2d}\partial_i h(\mathbf{Z}^{(n)})(\nabla_1K)_i(\mathbf{Z}^{(n)},\cdot) \quad \mbox{$n=1, \ldots, N$.} \quad
$$ 
Note that the random variables $\{\xi^{(n)}\}_{n=1}^{N}$ are IID and that 
$B_Nh-Bh=\frac{1}{N}\sum_{n=1}^N (\xi^{(n)}-\mathbb{E}(\xi^{(n)})).$ The result follows by applying Lemma \ref{Hil-Bou} to $\{\xi^{(n)}\}_{n=1}^{N}$. 
\end{proof}

\section{Detailed proofs of theorems, propositions, and corollaries}
\subsection{Proof of Theorem \ref{Par-Rep}}\label{The proof of some theorems}

The proof follows a strategy very similar to the one in \cite{zhou2008derivative} in which we have circumvented all the arguments that needed the compactness in the space where the kernel is defined. 
\begin{proof}
We prove {\bf (i)} and {\bf (ii)} together by induction on $|\alpha|$.
The case $|\alpha|=0$ is trivial since that means $\alpha=0$ and for any $\mathbf{x}\in\mathbb{R}^d$, $(D^{0}K)_{\mathbf{x}}=K_{\mathbf{x}}$ satisfies the standard reproducing property in $\mathcal{H}_K$.
    
Let $0\leq l\leq s-1$. Suppose that $(D^{\alpha}K)_{\mathbf{x}}\in \mathcal{H}_K$ and \eqref{dif-rep} holds for any $\mathbf{x}\in \mathbb{R}^{d}$ and $\alpha \in I_l$. Then \eqref{dif-rep} implies that for any $\mathbf{y}\in \mathbb{R}^{d}$,
\begin{equation}\label{def-rep}
\langle (D^{\alpha}K)_{\mathbf{y}}, (D^{\alpha}K)_{\mathbf{x}}\rangle_{\mathcal{H}_K} = D^{\alpha}((D^{\alpha}K)_{\mathbf{x}})(\mathbf{y})=D^{\alpha}(D^{\alpha}K(\mathbf{x},\cdot))(\mathbf{y})=D^{(\alpha,\alpha)}K(\mathbf{x},\mathbf{y}).
\end{equation}
Now, we turn to the case $l+1$. Consider the index $\alpha+e_j$, where $e _j $ is the $j$th-canonical vector with a $1$ in the entry $j$ and $0$ elsewhere. Note that $|\alpha+e_j|=l+1$. We prove that {\bf (i)} and {\bf (ii)} hold for this index in three steps.

\noindent {\bf  Step 1}: Proving $(D^{\alpha+e_j}K)_{\mathbf{x}}\in \mathcal{H}_K$ for $\mathbf{x}\in \mathbb{R}^{d}$. For some $r>0$, the set $\{\frac{1}{t}((D^{\alpha}K)_{\mathbf{x}+te_j}-(D^{\alpha}K)_{\mathbf{x}}):|t|\leq r\}$ of functions in $\mathcal{H}_K$ satisfies that
\begin{align}
&\Big\|\frac{1}{t}((D^{\alpha}K)_{\mathbf{x}+te_j}-(D^{\alpha}K)_{\mathbf{x}})\Big\|^2_{\mathcal{H}_K}\notag\\
&=\frac{1}{t^2}\Big(D^{(\alpha,\alpha)}K(\mathbf{x}+te_j,\mathbf{x}+te_j)-D^{(\alpha,\alpha)}K(\mathbf{x}+te_j,\mathbf{x})
-D^{(\alpha,\alpha)}K(\mathbf{x},\mathbf{x}+te_j)+D^{(\alpha,\alpha)}K(\mathbf{x},\mathbf{x})\Big)\notag\\
&\leq \|D^{(\alpha+e_j,\alpha+e_j)}K\|_{\infty}<\infty, \quad \forall ~|t|\leq r, \label{bound for derivative}
\end{align}
where we have used the assumption that $K\in C_b^{2s+1}(\mathbb{R}^{d}\times \mathbb{R}^{d})$, and that $|(\alpha+e_j,\alpha+e_j)|=2|\alpha|+2=2l+2\leq 2s$.

Since the bound \eqref{bound for derivative} does not depend on $t$, this means that $\{\frac{1}{t}((D^{\alpha}K)_{\mathbf{x}+te_j}-(D^{\alpha}K)_{\mathbf{x}}):|t|\leq r\}$ lies in a closed ball of the Hilbert space $\mathcal{H}_K$ with a finite radius. Since this ball is sequentially weakly compact (see \cite[Theorem 4.2]{conway:book}), there is a sequence $\{t_i\}_{i=1}^{\infty}$ with $|t_i|\leq r$ and $\lim_{i\rightarrow \infty}t_i=0$ such that $\{\frac{1}{t_i}((D^{\alpha}K)_{\mathbf{x}+t_ie_j}-(D^{\alpha}K)_{\mathbf{x}}):|t|\leq r\}$ converges weakly to an element $g_{\mathbf{x}}$ of $\mathcal{H}_K$ as $i\rightarrow\infty$. The weak convergence tells us that

\begin{equation}\label{wea-con}
\lim\limits_{i\rightarrow \infty}\Big \langle \frac{1}{t_i}((D^{\alpha}K)_{\mathbf{x}+t_ie_j}-(D^{\alpha}K)_{\mathbf{x}}), f \Big \rangle_{\mathcal{H}_K} = \langle g_{\mathbf{x}},f \rangle_{\mathcal{H}_K}, \quad \forall f\in \mathcal{H}_K.
\end{equation}
In particular, by taking $f=K_{\mathbf{y}}$ with $\mathbf{y}\in \mathbb{R}^{d}$, it holds that
\begin{equation*}
g_{\mathbf{x}}(\mathbf{y})=\lim\limits_{i\rightarrow\infty}\Big \langle \frac{1}{t_i}((D^{\alpha}K)_{\mathbf{x}+t_ie_j}-(D^{\alpha}K)_{\mathbf{x}}), K_{\mathbf{y}} \Big \rangle_{\mathcal{H}_K}.
\end{equation*}
By the induction hypothesis, we then have that
\begin{align*}
g_{\mathbf{x}}(\mathbf{y}) &= \lim\limits_{i\rightarrow\infty} \frac{1}{t_i}(D^{\alpha}(K_{\mathbf{y}})(\mathbf{x}+t_ie_j)-D^{\alpha}(K_{\mathbf{y}})(\mathbf{x}))\\
&=\lim\limits_{i\rightarrow\infty} \frac{1}{t_i}(D^{\alpha}K(\mathbf{x}+t_ie_j,\mathbf{y})-D^{\alpha}K(\mathbf{x},\mathbf{y}))=D^{\alpha+e_j}K(\mathbf{x},\mathbf{y})=(D^{\alpha+e_j}K)_{\mathbf{x}}(\mathbf{y}).
\end{align*}
This is true for an arbitrary point $\mathbf{y}\in \mathbb{R}^{d}$. Hence $(D^{\alpha+e_j}K)_{\mathbf{x}}=g_{\mathbf{x}}$ as functions on $\mathbb{R}^{d}$. Since $g_{\mathbf{x}}\in \mathcal{H}_K$, we have hence shown that $(D^{\alpha+e_j}K)_{\mathbf{x}}\in \mathcal{H}_K$.

\noindent {\bf  Step 2:} Proving the convergence 
\begin{equation}\label{con-in-hk}
\frac{1}{t}((D^{\alpha}K)_{\mathbf{x}+te_j}-(D^{\alpha}K)_{\mathbf{x}})\underset{t\rightarrow 0}{\longrightarrow}(D^{\alpha+e_j}K)_{\mathbf{x}} \quad \mbox{in ${\mathcal H} _K $, for all $\mathbf{x} \in \mathbb{R}^d $.} 
\end{equation}

Applying the induction hypothesis together with \eqref{wea-con} for $\alpha$ to the function $(D^{\alpha+e_j}K)_{\mathbf{x}} \in \mathcal{H}_K$ yields 
\begin{align*}
\langle (D^{\alpha+e_j}K)_{\mathbf{x}} , &D^{\alpha+e_j}K)_{\mathbf{x}}\rangle_{\mathcal{H}_K}\\
&=\lim\limits_{i\rightarrow\infty}\frac{1}{t_i} \Big( D^{\alpha}((D^{\alpha+e_j}K)_{\mathbf{x}})(\mathbf{x}+t_ie_j)-D^{\alpha}((D^{\alpha+e_j}K)_{\mathbf{x}})(\mathbf{x})\Big )\\
&=\lim\limits_{i\rightarrow\infty}\frac{1}{t_i} \Big( D^{\alpha}(D^{\alpha+e_j}K(\mathbf{x},\cdot))(\mathbf{x}+t_ie_j)-D^{\alpha}(D^{\alpha+e_j}K(\mathbf{x},\cdot))(\mathbf{x})\Big )\\
&=D^{(\alpha+e_j,\alpha+e_j)}K(\mathbf{x},\mathbf{x}).
\end{align*}
Together with the induction hypothesis, this implies that
\begin{align*}
\Big\|\frac{1}{t}&\Big((D^{\alpha}K)_{\mathbf{x}+te_j}-(D^{\alpha}K)_{\mathbf{x}}\Big)-(D^{\alpha+e_j}K)_{\mathbf{x}}\Big\|^2_{\mathcal{H}_K}\\
=~&\frac{1}{t^2}\Big(D^{(\alpha,\alpha)}K(\mathbf{x}+te_j,\mathbf{x}+te_j)-2D^{(\alpha,\alpha)}K(\mathbf{x}+te_j,\mathbf{x})+D^{(\alpha,\alpha)}K(\mathbf{x},\mathbf{x})\Big)\\
&-\frac{2}{t}\Big(D^{(\alpha,\alpha+e_j)}K(\mathbf{x}+te_j,\mathbf{x})-D^{(\alpha,\alpha+e_j)}K(\mathbf{x},\mathbf{x})\Big)+D^{(\alpha+e_j,\alpha+e_j)}K(\mathbf{x},\mathbf{x})\\
=~&\frac{1}{t^2}\int_{0}^t\int_{0}^tD^{(\alpha+e_j,\alpha+e_j)}K(\mathbf{x}+ue_j,\mathbf{x}+ve_j)~\mathrm{d}u\mathrm{d}v\\
	&-\frac{2}{t}\int_0^{t}D^{(\alpha+e_j,\alpha+e_j)}K(\mathbf{x},\mathbf{x}+ve_j)~\mathrm{d}v+D^{(\alpha+e_j,\alpha+e_j)}K(\mathbf{x},\mathbf{x})\\
=~&\frac{1}{t^2}\int_{0}^t\int_{0}^t\Big(D^{(\alpha+e_j,\alpha+e_j)}K(\mathbf{x}+ue_j,\mathbf{x}+ve_j)-2D^{(\alpha+e_j,\alpha+e_j)}K(\mathbf{x},\mathbf{x}+ve_j)+D^{(\alpha+e_j,\alpha+e_j)}K(\mathbf{x},\mathbf{x})\Big)~\mathrm{d}u\mathrm{d}v\\
\leq~&\frac{1}{t^2}\int_{0}^t\int_{0}^t\|D^{(\alpha+2e_j,\alpha+e_j)}K\|_{\infty} u~\mathrm{d}u\mathrm{d}v+\frac{1}{t^2}\int_{0}^t\int_{0}^t \|D^{(\alpha+e_j,\alpha+2e_j)}K\|_{\infty}v~\mathrm{d}u\mathrm{d}v\\
\leq~&\frac{t}{2}\sup_{|\alpha+\beta|=2s+1}\|D^{(\alpha,\beta)}K\|_{\infty},
\end{align*}
where the fourth inequality is due to the mean value theorem and in the last two we use the fact that $K\in C_b^{2s+1}(\mathbb{R}^{d}\times \mathbb{R}^{d})$.
\eqref{con-in-hk} follows by taking the limit $t\rightarrow 0$.

\noindent {\bf  Step 3:} Proving \eqref{dif-rep} for $\mathbf{x}\in \mathbb{R}^{d}$ and $\alpha+e_j$. Let $f\in \mathcal{H}_K$. By (\ref{con-in-hk}) we have
\begin{equation*}
\langle (D^{\alpha+e_j}K)_{\mathbf{x}},f \rangle_{\mathcal{H}_K}=\lim\limits_{t\rightarrow0}\Big\langle \frac{1}{t}((D^{\alpha}K)_{\mathbf{x}+te_j}-(D^{\alpha}K)_{\mathbf{x}}),f \Big \rangle_{\mathcal{H}_K}.
\end{equation*}
Since by the induction hypothesis \eqref{dif-rep} holds for $\alpha$,  this implies
\begin{equation*}
\langle (D^{\alpha+e_j}K)_{\mathbf{x}},f \rangle_{\mathcal{H}_K}=\lim\limits_{t\rightarrow0}\frac{1}{t}\left ( D^{\alpha}f({\mathbf{x}+te_j})-D^{\alpha}f(\mathbf{x})\right ).
\end{equation*}
That is, $D^{\alpha+e_j}f(\mathbf{x})$ exists and equals $\langle (D^{\alpha+e_j}K)_{\mathbf{x}},f\rangle_{\mathcal{H}_K}$. This verifies (\ref{dif-rep}) for $\alpha+e_j$.\\
    
We conclude by proving part {\bf (iii)} using \eqref{dif-rep} and \eqref{def-rep}. For $f\in\mathcal{H}_K$, $\mathbf{x},\mathbf{x}^{\prime}\in \mathbb{R}^d$, and $\alpha\in I_s$, the Cauchy-Schwarz inequality implies that
\begin{align*}
|D^{\alpha}f(\mathbf{x})-D^{\alpha}f(\mathbf{x}^{\prime})|^2&=|\langle (D^{\alpha}K)_{\mathbf{x}}-(D^{\alpha}K)_{\mathbf{x}^{\prime}},f\rangle_{\mathcal{H}_K}|^2\leq \|(D^{\alpha}K)_{\mathbf{x}}-(D^{\alpha}K)_{\mathbf{x}^{\prime}}\|^2_{\mathcal{H}_K}\|f\|^2_{\mathcal{H}_K}\\
&\leq \Big(D^{(\alpha,\alpha)}K(\mathbf{x},\mathbf{x})-2D^{(\alpha,\alpha)}K(\mathbf{x},\mathbf{x}^{\prime})+D^{(\alpha,\alpha)}K(\mathbf{x}^{\prime},\mathbf{x}^{\prime})\Big) ~\|f\|^2_{\mathcal{H}_K}\\
&=2\sup_{|\alpha+\beta|=2s+1}\|D^{(\alpha,\beta)}K\|_{\infty}~\|f\|^2_{\mathcal{H}_K}\|\mathbf{x}-\mathbf{x}^{\prime}\|,
\end{align*}
which shows that $D^{\alpha}f$ is H\"older continuous with exponent $1/2 $, that is,  $D^{\alpha}f\in C^{0, \frac{1}{2}}(\mathbb{R}^{d})$. Moreover, again by the Cauchy-Schwarz inequality and the relation \eqref{dif-rep}, we have that
\begin{equation*}
|D^{\alpha}f(\mathbf{x})|=|\langle (D^{\alpha}K)_{\mathbf{x}},f \rangle_{\mathcal{H}_K}|\leq \sqrt{D^{(\alpha,\alpha)}K(\mathbf{x},\mathbf{x})}~\|f\|_{\mathcal{H}_K}\leq \sqrt{\|D^{(\alpha,\alpha)}K\|_{\infty}}~\|f\|_{\mathcal{H}_K}.
\end{equation*}
Therefore, $f\in C^{s}_b(\mathbb{R}^{d})$.
It follows that
\begin{align*}
\|f\|_{C^{s}_b}&=\sup_{\alpha\in I_s}\|D^{\alpha}f\|_{\infty}\leq \sup_{\alpha\in I_s}\sqrt{\|D^{(\alpha,\alpha)}K\|_{\infty}}\|f\|_{\mathcal{H}_K}\\
&= \sqrt{\sup_{\alpha\in I_s}\|D^{(\alpha,\alpha)}K\|_{\infty}}\|f\|_{\mathcal{H}_K}\leq \sqrt{\|K\|_{C^{2s}_b(\mathbb{R}^{d} \times \mathbb{R}^d)}}\|f\|_{\mathcal{H}_K}.\qquad  \square
\end{align*}
\end{proof}

\subsection{Proof of Proposition \ref{Wel-Ope}}
\begin{proof}
By Theorem \ref{Par-Rep}, $K\in C_b^3(\mathbb{R}^{2d}\times\mathbb{R}^{2d})$ implies that $\mathcal{H}_K\subseteq C_b^1(\mathbb{R}^{2d})$ and that the operator $A$ is well-defined as a map $A: {\mathcal H}_K \longrightarrow L^2(\mathbb{R}^{2d};\mu_{\mathbf{Z}};\mathbb{R}^{2d})$. 
Indeed, for any  $h\in \mathcal{H}_K$, this fact together with part {\bf (iii)} in Theorem \ref{Par-Rep} imply that
\begin{equation*}
\begin{aligned}
\|A h\|^2_{L^2(\mu_{{\bf Z}})} =\int_{\mathbb{R}^{2d}}\|J\nabla h(\mathbf{x})\|^2\mathrm{d}\mu_{\mathbf{Z}}(\mathbf{x}) =\int_{\mathbb{R}^{2d}}\|\nabla h(\mathbf{x})\|^2\mathrm{d}\mu_{\mathbf{Z}}(\mathbf{x})\leq 2d\|h\|_{C_b^1}^2\leq 2d\kappa^2\|h\|_{\mathcal{H}_K}^2,
\end{aligned}    
\end{equation*}
which shows that $A: {\mathcal H}_K \longrightarrow L^2(\mathbb{R}^{2d};\mu_{\mathbf{Z}};\mathbb{R}^{2d})$ is a bounded linear operator and that $\|A\|\leq \sqrt{2d}\kappa$.

Next, we prove \eqref{adjoint}. For any
$h \in \mathcal{H}_K$ and any $g\in L^2(\mathbb{R}^{2d};\mu_{\mathbf{Z}};\mathbb{R}^{2d})$,
\begin{align*}
\langle A h,g\rangle_{L^2(\mu_{{\bf Z}})} &=    \langle J\nabla h,g\rangle_{L^2(\mu_{{\bf Z}})} = \int_{\mathbb{R}^{2d}} \langle \nabla h(\mathbf{x}), J^{T}g(\mathbf{x})\rangle_{\mathbb{R}^{2d}}  \mathrm{d} \mu_{\mathbf{Z}}(\mathbf{x})\\
&=\int_{\mathbb{R}^{2d}} \left\langle \langle h, \nabla_1 K(\mathbf{x},\cdot) \rangle_{\mathcal{H}_K}, J^{\top}g(\mathbf{x})\right\rangle_{\mathbb{R}^{2d}} \, \mathrm{d} \mu_{\mathbf{Z}}(\mathbf{x})=\int_{\mathbb{R}^{2d}} \langle h, g^{\top}(\mathbf{x})J\nabla_1 K(\mathbf{x},\cdot) \rangle_{\mathcal{H}_K} \, \mathrm{d} \mu_{\mathbf{Z}}(\mathbf{x})\\
&=\left\langle h, \int_{\mathbb{R}^{2d}} g^{\top}(\mathbf{x})J\nabla_1 K(\mathbf{x},\cdot) \, \mathrm{d} \mu_{\mathbf{Z}}(\mathbf{x}) \right\rangle_{\mathcal{H}_K},
\end{align*} 
where third equality is due to the partial derivative reproducing property \eqref{dif-rep} in Theorem \ref{Par-Rep}. Since $g\in L^2(\mathbb{R}^{2d};\mu_{\mathbf{Z}};\mathbb{R}^{2d})$ in the previous equality is arbitrary, we have hence shown that \eqref{adjoint} holds.

Since $B=A^{\ast}A$, $B$ is clearly a bounded linear operator. Equation \eqref{positive} follows from \eqref{adjoint} by direct calculation and the fact that the integral commutes with the scalar product. Indeed, for any $h\in\mathcal{H}_K$,  
\begin{align*}
Bh &= A^{\ast}A h= A^{\ast}(J\nabla h)=\int_{\mathbb{R}^{2d}} \nabla^{\top}h(\mathbf{x}) J^{\top}J\nabla_1 K(\mathbf{x},\cdot) \, \mathrm{d} \mu_{\mathbf{Z}}(\mathbf{x})\\
&=\int_{\mathbb{R}^{2d}} \nabla^{\top}h(\mathbf{x}) \nabla_1 K(\mathbf{x},\cdot) \, \mathrm{d} \mu_{\mathbf{Z}}(\mathbf{x}).
\end{align*}

We now prove that $B$ is a trace class operator, that is, we show that $\operatorname{Tr}(|B|)<\infty$, where $|B|=\sqrt{B^* B}$. Since $B$ is positive semidefinite, we have that $|B|=B$. Therefore, it is equivalent to show that $\operatorname{Tr}(B)<\infty$. In order to do that, we choose a spanning orthonormal set $\left\{e _n\right\} _{n \in \mathbb{N}}$ for ${\mathcal H} _K$ whose existence is guaranteed by the continuity of the canonical feature map associated to $K$ that we established in Lemma \ref{feature map is continuous} and \cite[Theorem 2.4]{owhadi2017separability}. Then,
$$
\begin{aligned}
\operatorname{Tr}(B)&=\operatorname{Tr}\left(A^* A\right) =\sum_n\left\langle A^* A e_n, e_n\right\rangle_{\mathcal{H}_K}=\sum_n\left\langle A e_n, A e_n\right\rangle_{L^2\left(\mu_{\mathbf{Z}}\right)} \\
& =\sum_n\int_{\mathbb{R}^{2d}}\sum_{i=1}^{2d} \left|\partial_ie_n(\mathbf{x})\right|^2\mathrm{d}\mu_{\mathbf{Z}}(\mathbf{x})= \int_{\mathbb{R}^{2d}} \sum_{i=1}^{2d}(\nabla_{1,2}K)_{i,i}(\mathbf{x},\mathbf{x})\mathrm{d}\mu_{\mathbf{Z}}(\mathbf{x}) \\
& \leq 2d\kappa^2,
\end{aligned}
$$
where the fifth equality is due to 
\begin{align*}
\sum_n |\partial_ie_n(\mathbf{x})|^2&=
\sum_n \partial_ie_n(\mathbf{x})\langle(\nabla_1K)_i(\mathbf{x},\cdot),e_n\rangle_{\mathcal{H}_K}=\sum_n \langle(\nabla_1K)_i(\mathbf{x},\cdot),\partial_ie_n(\mathbf{x})e_n\rangle_{\mathcal{H}_K}\\
&= \langle(\nabla_1K)_i(\mathbf{x},\cdot),\sum_n\partial_ie_n(\mathbf{x})e_n\rangle_{\mathcal{H}_K} = \langle(\nabla_1K)_i(\mathbf{x},\cdot),\sum_n\langle(\nabla_1K)_i(\mathbf{x},\cdot),e_n\rangle_{\mathcal{H}_K}e_n\rangle_{\mathcal{H}_K}\\
&=\langle(\nabla_1K)_i(\mathbf{x},\cdot),(\nabla_1K)_i(\mathbf{x},\cdot)\rangle_{\mathcal{H}_K}=(\nabla_{1,2}K)_{i,i}(\mathbf{x},\mathbf{x}).    
\end{align*}
Finally, the form of the operator $B=A^{*}A$ automatically guarantees that it is positive semidefinite. A non-trivial kernel occurs when constant functions in $\mathbb{R}^{2d}$ belong to ${\mathcal H}_K $.
\end{proof}

\subsection{Proof of Proposition \ref{Wel-Emp}}

\begin{proof}
The formal explicit forms of $A_N^{\ast}$ and $B_N$ follow from a direct computation. We now show that $A_N$ and $B_N$ are bounded linear operators and that $B_N$ is a compact operator. We have
\begin{equation*}
\|A_Nh\|^2=\frac{1}{N}\sum_{n=1}^{N}\|J\nabla h(\mathbf{Z}^{(n)})\|^2=\frac{1}{N}\sum_{n=1}^{N}\|\nabla h(\mathbf{Z}^{(n)})\|^2
\leq 2d \|h\|^2_{C_b^1}\leq 2d\kappa^2\|h\|^2_{\mathcal{H}_K},
\end{equation*}
which implies that $A_N$ is bounded and that $\|A_N\|\leq \sqrt{2d}\kappa$. obviously, $B_N$ is bounded, since $B_N=A^{\ast}_NA_N$.

We now prove $B_N$ is compact. Let $\{h_i\}_{i=1}^{\infty}$ be an infinite sequence in the closed unit ball $B_{\mathcal{H}_K}(0,1)$ of $\mathcal{H}_K$. Then 
\begin{equation*}
B_Nh_i=\frac{1}{N}\nabla^{\top}h_i(\mathbf{Z}_N)\nabla_1 K(\mathbf{Z}_N,\cdot)
=\frac{1}{N}\sum_{n=1}^{N}\sum_{j=1}^{2d}\partial_jh_i(\mathbf{Z}^{(n)})(\nabla_1 K)_j(\mathbf{Z}^{(n)},\cdot).
\end{equation*}
Note that for each fixed $j \in \left\{1,2,\dots,2d \right\}$ and $n \in \left\{1,2,\dots,N \right\}$, the sequence of numbers $\{|\partial_jh_i(\mathbf{Z}^{(n)})|\}_{i \in \mathbb{N}}$ is such that  $|\partial_jh_i(\mathbf{Z}^{(n)})|\leq \|h_i\|_{C^1_b}\leq \kappa\|h_i\|_{\mathcal{H}_K}\leq \kappa$ and hence it is bounded. Therefore, the Bolzano-Weierstrass theorem guarantees that it has a convergent subsequence. Since $j \in \left\{1,2,\dots,2d \right\}$ and $n \in \left\{1,2,\dots,N \right\}$ are a finite collection, it follows that we can choose a subsequence $\{h_{i_q}\}_{q=1}^{\infty}$ of $\{h_i\}_{i=1}^{\infty}$ such that for all $j \in \left\{1,2,\dots,2d \right\}$ and $n \in \left\{1,2,\dots,N \right\}$ the sequence of numbers $\{\partial_jh_{i_q}(\mathbf{Z}^{(n)})\}_{q=1}^{\infty}$ converges. Let us write that $\{\partial_jh_{i_q}(\mathbf{Z}^{(n)})\}_{q=1}^{\infty}$ converges to some $a^{(n)}_j\in \mathbb{R}$. Then
\begin{align*}
\Bigg\|B_N(h_{i_q})-\frac{1}{N}\sum_{n=1}^{N}\sum_{j=1}^{2d}a_j^{(n)}(\nabla_1 K)_j(\mathbf{Z}^{(n)},\cdot)\Bigg\|_{\mathcal{H}_K}&
\leq\ \frac{1}{N}\sum_{n=1}^{N}\sum_{j=1}^{2d}\Big|a_j^{(n)}-(\nabla^{\top}h_{i_q}(\mathbf{Z}^{(n)}))_j\Big|\cdot\|(\nabla_1 K)_j(\mathbf{Z}^{(n)},\cdot)\|_{\mathcal{H}_K}\\
\leq&\ \frac{1}{N}\sum_{n=1}^{N}\sum_{j=1}^{2d}\Big|a_j^{(n)}-(\nabla^{\top}h_{i_q}(\mathbf{Z}^{(n)}))_j\Big|\cdot \kappa< \varepsilon.
\end{align*}
for any $\varepsilon>0 $ and for all the terms $i _q$ above a sufficiently high $q \in \mathbb{N}$.
Hence, $B_N(B_{\mathcal{H}_K}(0,1))$ is relatively compact in $\mathcal{H}_K$. We conclude that $B_N$ is a compact operator.    

Finally, as in the previous proposition, the form of the operator $B_N=A_N^{*}A_N$ automatically guarantees that it is positive semidefinite.
\end{proof}
\subsection{Proof of Theorem \ref{Sam-Con}}
\begin{proof}
Recall that $B_N=A_N^* A_N$ is a positive compact operator. Let $B_N=\sum_{n=1}^N \lambda_n\left\langle\cdot, e_n\right\rangle_{\mathcal{H}_K} e_n$ be the spectral decomposition of $B_N$ with $0<\lambda_{n+1}<\lambda_n$ and $\left\{e_n\right\}_{n=1}^N$ be an orthonormal basis of $\mathcal{H}^N_K$. Then we can represent $\widetilde{h}_{\lambda, N}$ as
\begin{align*}
\widetilde{h}_{\lambda, N}=(B_N+\lambda)^{-1} B_N H=\sum_{n=1}^N\frac{\lambda_n}{\lambda_n+\lambda}\left\langle H, e_n\right\rangle_{\mathcal{H}_K} e_n.
\end{align*}
Denote $h^*:=\widetilde{h}_{\lambda, \infty}$. Note that 
\begin{align*}
\begin{aligned}
\left\|h^*\right\|_{\mathcal{H}_K}^2  =\sum_{n=1}^\infty\left(\frac{\lambda_n}{\lambda_n+\lambda}\right)^2\left|\left\langle H, e_n\right\rangle_{\mathcal{H}_K}\right|^2 \leq\sum_{n=1}^\infty\left|\left\langle H, e_n\right\rangle_{\mathcal{H}_K}\right|^2\leq \|H\|^2_{\mathcal{H}_K}<\infty .
\end{aligned}
\end{align*}
Therefore, $h^{*}\in \mathcal{H}_K$, and 
\begin{align*}
\left\|\widetilde{h}_{\lambda,N}-h^*\right\|_{\mathcal{H}_K}^2= \sum_{n=N+1}^\infty\left(\frac{\lambda_n}{\lambda_n+\lambda}\right)^2\left|\left\langle H, e_n\right\rangle_{\mathcal{H}_K}\right|^2 \to 0,\quad \text{as}\quad  N\to\infty,
\end{align*}
which shows that $\widetilde{h}_{\lambda,N}$ converges to $h^*$ with respect to the RKHS norm. Next, we will show that $h^*$ coincides with $h^*_{\lambda}$.
 
We now recall that the random samples $\{\mathbf{Z}^{(n)}\}_{n=1}^N$ are made out of independent random variables in $\mathbb{R}^{2d}$ with the same probability distribution $\mu_{\mathbf{Z}}$. 
Denote the empirical measure as $\mu^N_{\mathbf{Z}}:=\frac{1}{N}\sum_{n=1}^N\delta_{\mathbf{Z}^{(n)}}$.
The strong law of large numbers shows that for each $h\in\mathcal{H}_K$, we have
\begin{align*}
\int_{\mathbb{R}^{2d}}\|X_h(\mathbf{y})- X_H(\mathbf{y})\|^2\mathrm{d}\mu^N_{\mathbf{Z}}(\mathbf{y})\xrightarrow{N \rightarrow \infty}\int_{\mathbb{R}^{2d}}\|X_h(\mathbf{y})- X_H(\mathbf{y})\|^2\mathrm{d}\mu_{\mathbf{Z}}(\mathbf{y}),
\end{align*}
almost surely, as $N\to\infty$. Thus it follows that, 
\begin{align*}
\widetilde{R}_{\lambda,N}(h)&~=\frac{1}{N}\sum_{n=1}^{N}\left\|X_h(\mathbf{Z}^{(n)})- X_H(\mathbf{Z}^{(n)})\right\|^2+\lambda \|h\|_{\mathcal{H}_K}^2\\
&~=\int_{\mathbb{R}^{2d}}\|X_h(\mathbf{y})- X_H(\mathbf{y})\|^2\mathrm{d}\mu^N_{\mathbf{Z}}(\mathbf{y})+\lambda \|h\|_{\mathcal{H}_K}^2
\xrightarrow{N \rightarrow \infty}~\int_{\mathbb{R}^{2d}}\|X_h(\mathbf{y})- X_H(\mathbf{y})\|^2\mathrm{d}\mu_{\mathbf{Z}}(\mathbf{y})+\lambda \|h\|_{\mathcal{H}_K}^2\\
&=R_{\lambda}(h)-\sigma^2,
\end{align*}
almost surely, as $N\to\infty$. Therefore, for every $\lambda>0$ and $0<\delta<1$, with probability at least $1-\delta$, we have 
\begin{align}\label{poi-con}
\widetilde{R}_{\lambda,N}(h)~\xrightarrow{N \rightarrow \infty}~R_{\lambda}(h)-\sigma^2,\quad \forall ~h\in\mathcal{H}_K.   
\end{align}

For each $h_0\in\mathcal{H}_K$ and any $\varepsilon>0$, there exits $\delta=\min\left\{\varepsilon\left((2\kappa^2+2\lambda)\|h_0\|_{\mathcal{H}_K}+2\kappa^2\|H\|_{\mathcal{H}_K}+\lambda+\kappa^2\right)^{-1},1\right\}$, such that for any $h\in\mathcal{H}_K$ with $\|h-h_0\|_{\mathcal{H}_K}<\delta$, we have in particular
$\|h\|_{\mathcal{H}_K}<\|h_0\|_{\mathcal{H}_K}+1$, so that 
\begin{align*}
&\left|\widetilde{R}_{\lambda,N}(h)-\widetilde{R}_{\lambda,N}(h_0)\right|\\
\leq ~&\int_{\mathbb{R}^{2d}}\left|\|X_h(\mathbf{y})- X_H(\mathbf{y})\|^2-\|X_{h_0}(\mathbf{y})- X_H(\mathbf{y})\|^2\right| \mathrm{d}\mu^N_{\mathbf{Z}}(\mathbf{y})+\lambda \left|\|h\|_{\mathcal{H}_K}^2- \|h_0\|_{\mathcal{H}_K}^2\right| \\
=~&\int_{\mathbb{R}^{2d}}\left|\langle X_h(\mathbf{y})- X_{h_0}(\mathbf{y}),X_h(\mathbf{y})+X_{h_0}(\mathbf{y})- 2X_H(\mathbf{y})\rangle\right| \mathrm{d}\mu^N_{\mathbf{Z}}(\mathbf{y})+\lambda \left|\|h\|_{\mathcal{H}_K}^2- \|h_0\|_{\mathcal{H}_K}^2\right|\\
\leq ~&\|X_h- X_{h_0}\|_{L^2(\mu^N_{\mathbf{Z}})}\|X_h+ X_{h_0}-2X_H\|_{L^2(\mu^N_{\mathbf{Z}})}+\lambda \|h+ h_0\|_{\mathcal{H}_K}\|h- h_0\|_{\mathcal{H}_K}\\
\leq ~& \kappa^2\|h- h_0\|_{\mathcal{H}_K}\|h+ h_0-2H\|_{\mathcal{H}_K}+\lambda \|h+ h_0\|_{\mathcal{H}_K}\|h- h_0\|_{\mathcal{H}_K}\\
\leq ~&\left((2\kappa^2+2\lambda)\|h_0\|_{\mathcal{H}_K}+2\kappa^2\|H\|_{\mathcal{H}_K}+\lambda+\kappa^2\right)\|h- h_0\|_{\mathcal{H}_K}<\varepsilon,
\end{align*}
which shows the pointwise equi-continuity of $\{\widetilde{R}_{\lambda,N}\}_{N\geq1}$. Then combining the pointwise convergence \eqref{poi-con} and \cite[Proposition 5.9]{dal2012introduction}, we have that for every $\lambda>0$ and $0<\delta<1$, with probability at least $1-\delta$, the functional $\widetilde{R}_{\lambda,N}$ $\Gamma$-converges to $R_{\lambda}-\sigma^2$
as $N\to\infty$. Therefore by the fundamental theorem of $\Gamma$-convergence \cite[Corollary 7.20]{dal2012introduction}, the limit $h^*$ of the minimizers $\widetilde{h}_{\lambda, N}$ of $\widetilde{R}_{\lambda,N}$ is indeed the minimizer $h_{\lambda}^*$ of $R^{\lambda}-\sigma^2$ and hence the minimizer of $R^{\lambda}$. Thus, the result follows.
\end{proof}

\subsection{Proof of Theorem \ref{Sam-Err}}
\begin{proof} We introduce the intermediate quantity $(B_N+\lambda)^{-1}BH$ and decompose 
\begin{multline*}
(B_N+\lambda)^{-1}B_NH-(B+\lambda)^{-1}BH \\
=(B_N+\lambda)^{-1}B_NH-(B_N+\lambda)^{-1}BH+(B_N+\lambda)^{-1}BH- (B+\lambda)^{-1}BH.
\end{multline*}
Since the operator norm satisfies $\|(B_N+\lambda)^{-1}\|\leq \frac{1}{\lambda}$, we have that 
\begin{equation*}
\|(B_N+\lambda)^{-1}B_NH-(B_N+\lambda)^{-1}BH\|_{\mathcal{H}_K}\leq \frac{1}{\lambda} \|B_NH-BH\|_{\mathcal{H}_K}.
\end{equation*}
Applying Lemma \ref{Dec-Omp} to $B_NH-BH$, we obtain that, with probability at least $1-\delta/2$,
\begin{align}\label{bounds1}
\|(B_N+\lambda)^{-1}B_NH-(B_N+\lambda)^{-1}B H\|_{\mathcal{H}_K} \leq \left(\sqrt{ \frac{8\log(4/\delta)}{N}}+ 
1\right)\sqrt{ \frac{2\log(4/\delta)}{N\lambda^2}}2d\kappa^2\|H\|_{\mathcal{H}_K}.
\end{align}
On the other hand, we have that
\begin{align*}
\|(B_N+\lambda)^{-1}BH- (B+\lambda)^{-1}BH\|_{\mathcal{H}_K} &= \|(B_N+\lambda)^{-1}(B-B_N)(B+\lambda)^{-1}BH\|_{\mathcal{H}_K}\\ 
&\leq \frac{1}{\lambda}\|(B-B_N)(B+\lambda)^{-1}BH\|_{\mathcal{H}_K}.
\end{align*}

Since $h_{\lambda}^*=(B+\lambda)^{-1}BH$ is the unique minimizer of the regularized statistical risk $R _\lambda(h) =\|A{h}-A{H}\|^2_{L^2(\mu_{\mathbf{Z}})}+\lambda\|h\|_{\mathcal{H}_K}^2,$ plugging $h=0$, we obtain that 
\begin{align}\label{eq1}
\|A{h_{\lambda}^*}-A{H}\|^2_{L^2(\mu_{\mathbf{Z}})}+\lambda\|h_{\lambda}^*\|_{\mathcal{H}_K}^2 <\| A{H}\|^2_{L^2(\mu_{\mathbf{Z}})}.
\end{align}
Then by Proposition \ref{Wel-Ope}, we have
\begin{equation}\label{eq2}
\begin{aligned}
\|h_{\lambda}^*\|_{\mathcal{H}_K} <\frac{1}{\sqrt{\lambda}}\| A{H}\|_{L^2(\mu_{\mathbf{Z}})}\leq \frac{\sqrt{2d}\kappa}{\sqrt{\lambda}}\| H\|_{\mathcal{H}_K}.
\end{aligned}
\end{equation}
Applying Lemma \ref{Dec-Omp} to $h_{\lambda}^*=(B+\lambda)^{-1}BH$ and combining it with equation \eqref{eq2}, we obtain that with probability at least $1-\delta/2$,
\begin{equation}\label{bounds2}
\begin{aligned}
\frac{1}{\lambda}\|(B-B_N)(B+\lambda)^{-1}BH\|_{\mathcal{H}_K} & \leq \left(\sqrt{ \frac{8\log(4/\delta)}{N}}+ 
1\right)\sqrt{ \frac{2\log(4/\delta)}{N\lambda^2}}2d\kappa^2\|h_{\lambda}^*\|_{\mathcal{H}_K}\\
&\leq \left(\sqrt{ \frac{8\log(4/\delta)}{N}}+ 
1\right)\sqrt{ \frac{2\log(4/\delta)}{N\lambda^3}}2d\sqrt{2d}\kappa^3\|H\|_{\mathcal{H}_K}.
\end{aligned}
\end{equation}
Finally, by combining the bounds \eqref{bounds1} and \eqref{bounds2}, we obtain that with a probability at least $1-\delta$,
\begin{multline*}
\|(B_N+\lambda)^{-1}B_NH-(B+\lambda)^{-1}BH\|_{\mathcal{H}_K}  
\leq\left(\sqrt{ \frac{8\log(4/\delta)}{N}}+ 
1\right)\sqrt{ \frac{2\log(4/\delta)}{N\lambda^2}}\|H\|_{\mathcal{H}_K}2d\kappa^2\left(1+\kappa\sqrt{\frac{2d}{\lambda}}\right).
\end{multline*}
Finally, if we assume that $\lambda$ satisfies \eqref{dyn-sca}, the convergence upper rate $N^{-\frac{1}{2}(1-3\alpha)}$ follows. 
\end{proof}

\subsection{Proof of Corollary \ref{Sam-Err2}}
\begin{proof}
Using the coercivity condition \eqref{coercivity}, we could get a bound better than \eqref{eq2}. More precisely,  from \eqref{eq1} we have that
\begin{equation*}
2d\kappa^2\|H\|^2_{\mathcal{H}_K}\geq \| A{H}\|^2_{L^2(\mu_{\mathbf{Z}})}>c_{\mathcal{H}_K}\|{h_{\lambda}^*}-{H}\|^2_{\mathcal{H}_K}+\lambda\|h_{\lambda}^*\|_{\mathcal{H}_K}^2\geq c_{\mathcal{H}_K}\left(\|{h_{\lambda}^*}\|_{\mathcal{H}_K}-\|{H}\|_{\mathcal{H}_K}\right)^2+\lambda\|h_{\lambda}^*\|_{\mathcal{H}_K}^2.
\end{equation*}
Thus, it follows that 
\begin{align}\label{Opt-con}
(2d\kappa^2-c_{\mathcal{H}_K})\|{H}\|_{\mathcal{H}_K}^2+2c_{\mathcal{H}_K}\|{h_{\lambda}^*}\|_{\mathcal{H}_K}\|{H}\|_{\mathcal{H}_K}-(\lambda+c_{\mathcal{H}_K})\|{h_{\lambda}^*}\|_{\mathcal{H}_K}^2>0. 
\end{align}
Since $2d\kappa^2-c_{\mathcal{H}_K}>0$ and $\|H\|_{\mathcal{H}_K}\geq 0$, the above equation \eqref{Opt-con} always holds if
\begin{align*}
\|H\|_{\mathcal{H}_K} &>\frac{-2c_{\mathcal{H}_K}\|{h_{\lambda}^*}\|_{\mathcal{H}_K}+\sqrt{4c_{\mathcal{H}_K}^2\|{h_{\lambda}^*}\|_{\mathcal{H}_K}^2+4(2d\kappa^2-c_{\mathcal{H}_K})(\lambda+c_{\mathcal{H}_K})\|{h_{\lambda}^*}\|_{\mathcal{H}_K}^2}}{2(2d\kappa^2-c_{\mathcal{H}_K})}   \\
&=\frac{\sqrt{c_{\mathcal{H}_K}^2+(2d\kappa^2-c_{\mathcal{H}_K})(\lambda+c_{\mathcal{H}_K})}-c_{\mathcal{H}_K}}{2d\kappa^2-c_{\mathcal{H}_K}}\|{h_{\lambda}^*}\|_{\mathcal{H}_K}\geq \frac{\kappa\sqrt{2dc_{\mathcal{H}_K}}-c_{\mathcal{H}_K}}{2d\kappa^2-c_{\mathcal{H}_K}}\|{h_{\lambda}^*}\|_{\mathcal{H}_K}=\frac{\sqrt{c_{\mathcal{H}_K}}}{\sqrt{2d}\kappa+\sqrt{c_{\mathcal{H}_K}}}\|{h_{\lambda}^*}\|_{\mathcal{H}_K}.
\end{align*}
Applying Lemma \ref{Dec-Omp} to $h_{\lambda}^*=(B+\lambda)^{-1}BH$, and using \eqref{Opt-con}, we obtain that with probability at least $1-\delta/2$,
\begin{equation}\label{bounds3}
\begin{aligned}
\frac{1}{\lambda}\|(B-B_N)(B+\lambda)^{-1}BH\|_{\mathcal{H}_K} & \leq \left(\sqrt{ \frac{8\log(4/\delta)}{N}}+ 
1\right)\sqrt{ \frac{2\log(4/\delta)}{N\lambda^2}}2d\kappa^2\|h_{\lambda}^*\|_{\mathcal{H}_K}\\
\leq& \left(\sqrt{ \frac{8\log(4/\delta)}{N}}+ 
1\right)\sqrt{ \frac{2\log(4/\delta)}{N\lambda^2}}2d\kappa^2\frac{\sqrt{2d}\kappa+\sqrt{c_{\mathcal{H}_K}}}{\sqrt{c_{\mathcal{H}_K}}}\|H\|_{\mathcal{H}_K}.
\end{aligned}    
\end{equation}
Finally, by combining the bounds \eqref{bounds1} and \eqref{bounds3}, we obtain that with a probability at least $1-\delta$,
\begin{align*}
\|(B_N+\lambda)^{-1}B_NH-(B+\lambda)^{-1}BH\|_{\mathcal{H}_K}  
\leq\left(\sqrt{ \frac{8\log(4/\delta)}{N}}+ 
1\right)\sqrt{ \frac{2\log(4/\delta)}{N\lambda^2}}\|H\|_{\mathcal{H}_K}2d\kappa^2\left(2+\kappa\sqrt{\frac{2d}{c_{\mathcal{H}_K}}}\right),
\end{align*} 
which shows, in particular, that the convergence upper rate is $N^{-\frac{1}{2}(1-2\alpha)}$. 
\end{proof}

\subsection{Proof of Proposition \ref{Dis-Con}}
\begin{proof}
For each $t\in[0,T]$ and ${\bf z} \in \mathbb{R}^{2d} $, we have 
\begin{align*}
\left\|F_t({\bf z})-\widehat{F}_t({\bf z})\right\|&=\left\|\int_0^tJ\nabla H(F_s({\bf z}))-J\nabla\widehat{H}(\widehat{F}_s({\bf z}))\mathrm{d}s\right\|\leq\int_0^t\big\|\nabla H(F_s({\bf z}))-\nabla\widehat{H}(\widehat{F}_s({\bf z}))\big\|\mathrm{d}s\\
&\leq \int_0^t\big\|\nabla H(F_s({\bf z}))-\nabla \widehat{H}(F_s({\bf z}))\big\|+\big\|\nabla \widehat{H}(F_s({\bf z}))-\nabla\widehat{H}(\widehat{F}_s({\bf z}))\big\|\mathrm{d}s\\
&\leq \|H-\widehat{H}\|_{\mathcal{H}_K}\bigintsss_0^t\sqrt{\sum_{i=1}^{2d}(\nabla_{1,2}K)_{i,i}(\widehat{F}_s({\bf z}),\widehat{F}_s({\bf z}))}\mathrm{d}s+\int_0^t2d\|\widehat{H}\|_{C^2_b}\|F_s({\bf z})-\widehat{F}_s({\bf z})\|\mathrm{d}s\\
&\leq \|H-\widehat{H}\|_{\mathcal{H}_K}\alpha_t+\int_0^t2d\|\widehat{H}\|_{C^2_b}\|F_s({\bf z})-\widehat{F}_s({\bf z})\|\mathrm{d}s,
\end{align*}
where $\alpha_t:=\sqrt{2d}\kappa t$ and we used the inequality \eqref{embedding ineq} with $s=2$. Since $\alpha_t$ is non-decreasing as a function of $t$, then by the integral form of Gr\"{o}nwall's inequality that we recalled in Lemma \ref{Gro-Ine}, for $\|F_t({\bf z})-\widehat{F}_t({\bf z})\|$ we obtain
\begin{align*}
\|F_t({\bf z})-\widehat{F}_t({\bf z})\|&\leq \|H-\widehat{H}\|_{\mathcal{H}_K}\alpha_t\exp\left\{\int_0^t2d\|H\|_{C^2_b}\mathrm{d}s\right\} \\
&\leq \|H-\widehat{H}\|_{\mathcal{H}_K}\alpha_t\exp\{2d\kappa\|H\|_{\mathcal{H}_K}
t\}.
\end{align*}
Hence, we have that
\begin{align*}
\|F({\bf z})-\widehat{F}({\bf z})\|_{\infty}:=\max_{t\in[0,T]}\|F_t({\bf z})-\widehat{F}_t({\bf z})\|\leq C \|H-\widehat{H}\|_{\mathcal{H}_K},   
\end{align*}
where $C=\max_{t\in[0,T]}\{\alpha_t\exp\{2d\kappa\|H\|_{\mathcal{H}_K}
t\}\}=\sqrt{2d}\kappa T\exp\{2d\kappa\|H\|_{\mathcal{H}_K}
T\}$. The result follows.
\end{proof}

\section{Analysis of the approximation and the noisy sampling errors}
\label{Analysis of the approximation error and noise part of the sampling error}
\paragraph{Analysis of the approximation error ${h_{\lambda}^*-H}$.}
Under the source condition that we defined in \eqref{sou-con}, the analysis of $\| h_{\lambda}^*-H\|_{\mathcal{H}_K}$ can be carried out using standard results on Tikhonov regularization (see, for instance, Section 5 in \cite{caponnetto2005fast}). We nevertheless present such analysis here for the sake of completeness. 

Recall first that by Proposition \ref{Wel-Ope}, the operator $B=A^*A$ is a positive compact operator. Let $B=\sum_{n=1}^{L}\lambda_n\langle \cdot, e_n\rangle e_n$ (possibly $L=\infty$) be the spectral decomposition of $B$ with $0<\lambda_{n+1}<\lambda_{n}$ and $\{e_n\}_{n=1}^{L}$ be an orthonormal basis of $\mathcal{H}_K$. 
Then,
\begin{equation*}
\| h_{\lambda}^*-H\|_{\mathcal{H}_K}^2=\|(B+\lambda)^{-1}BH-H\|_{\mathcal{H}_K}^2 =\|\lambda (B+\lambda)^{-1}H\|_{\mathcal{H}_K}^2 =\sum_{n=1}^{L}\left(\frac{\lambda}{\lambda_n+\lambda}\right)^2|\langle H, e_n\rangle_{\mathcal{H}_K}|^2. 
\end{equation*}
Since the function $x^\gamma$ is concave on $[0,\infty]
$, $\frac{\lambda}{\lambda_n+\lambda}\leq \frac{\lambda^\gamma}{\lambda_n^\gamma}$. 
Then by the source condition \eqref{sou-con}, we have 
\begin{align*}
\|h_{\lambda}^*-H\|_{\mathcal{H}_K} \leq \lambda^{\gamma}\|B^{-\gamma}H\|_{\mathcal{H}_K},
\end{align*}
where $B^{-\gamma}H$ represents the pre-image of $H$.

\paragraph{Analysis of the noisy sampling error $\widehat{h}_{\lambda,N}-\widetilde h_{\lambda,N}$.}
By the decomposition of the estimation error in \eqref{min-dec} and \eqref{noisy part of sampling}, the noisy part is 
\begin{align*}
\widehat{h}_{\lambda,N}-\widetilde h_{\lambda,N}=\frac{1}{\sqrt{N}}(B_N+\lambda)^{-1}A_N^{*} \mathbf{E}_N,
\end{align*} 
where the noise vector $\mathbf{E}_N$ follows a multivariate distribution with zero mean and variance $\sigma^2I_{2dN}$. Using an approach similar to the one in the Differential Representer Theorem \ref{Rep-Ker}, we obtain that
\begin{align*}
\big\|\widehat{h}_{\lambda,N}-\widetilde h_{\lambda,N}\big\|_{\mathcal{H}_K}^2 &= \frac{1}{N}\langle \mathbf{E}_N, A_N(B_N+\lambda)^{-2}A_N^*\mathbf{E}_N\rangle\\
&= \mathbf{E}_N^{\top}\Sigma_N \mathbf{E}_N,
\end{align*} 
with the matrix $$\Sigma_N:= (K_{X_H}(\mathbf{Z}_N,\mathbf{Z}_N) +\lambda N I)^{-1}K_{X_H}(\mathbf{Z}_N,\mathbf{Z}_N) (K_{X_H}(\mathbf{Z}_N,\mathbf{Z}_N) +\lambda N I)^{-1}.$$  
Notice that
\begin{align*}
\mathrm{Tr}(\Sigma_N)\leq \frac{1}{\lambda^2N^2}\mathrm{Tr}(K_{X_H}(\mathbf{Z}_N,\mathbf{Z}_N) )= \frac{1}{\lambda^2N^2} \sum_{n=1}^{N} \sum_{i=1}^{2d}(\nabla_{1,2}K)_{i,i}(\mathbf{Z}^{(n)},\mathbf{Z}^{(n)})\leq \frac{2d}{\lambda^2N}\kappa^2,
\end{align*}
with $\kappa^2=\|K\|_{C_b^{2}(\mathbb{R}^{2d} \times \mathbb{R}^{2d})}$ as introduced in the statement of Proposition \ref{Wel-Ope} and 
\begin{align*}
\mathrm{Tr}(\Sigma_N^2)\leq 
\frac{4d^2\kappa^4}{\lambda^4N^2},
\end{align*}
We now apply the Hanson-Wright inequality (Theorem \ref{Han-Wri}) for the random vector $\mathbf{E}_N$ with $S_0=\sigma^2$. Then we obtain that, for any $\varepsilon>0$,
\begin{align*}
\mathbb{P}\left(\|\mathbf{E}_N^{\top}\Sigma_N \mathbf{E}_N -\mathbb{E}[\mathbf{E}_N^{\top}\Sigma_N \mathbf{E}_N] \|\geq\varepsilon\right)&\leq 2\exp\bigg\{-c\min \bigg\{ \frac{\varepsilon^2}{\sigma^4\|\Sigma_N\|_{\mathrm{HS}}^2}, \frac{\varepsilon}{\sigma^2\|\Sigma_N\|}\bigg\}\bigg\} \\
&\leq 2\exp\bigg\{-c\min \bigg\{ \frac{\varepsilon^2}{\sigma^4\mathrm{Tr}(\Sigma_N^2)}, \frac{\varepsilon}{\sigma^2 \mathrm{Tr}(\Sigma_N)}\bigg\}\bigg\},
 \end{align*}
where $c$ is a positive constant appearing in the Hanson--Wright inequality. Let $t>0$ and denote $t^2=\min \Big\{ \frac{\varepsilon^2}{\sigma^4\mathrm{Tr}(\Sigma_N^2)}, \frac{\varepsilon}{\sigma^2 \mathrm{Tr}(\Sigma_N)}\Big\}$, that is, $\varepsilon=\sigma^2\max \Big\{ t^2\mathrm{Tr}(\Sigma_N),t\sqrt{\mathrm{Tr}(\Sigma_N^2)}\Big\}$. Then with probability at least $1-2e^{-ct^2}$, we have
\begin{align*}
\mathbf{E}_N^{\top}\Sigma_N \mathbf{E}_N &\leq \mathbb{E}[\mathbf{E}_N^{\top}\Sigma_N \mathbf{E}_N]+\varepsilon =\mathrm{Tr}(\Sigma_N)\sigma^2+\sigma^2\max \bigg\{ t^2\mathrm{Tr}(\Sigma_N),t\sqrt{\mathrm{Tr}(\Sigma_N^2)}\bigg\}\\
&\leq \sigma^2 \max \bigg\{ \mathrm{Tr}(\Sigma_N),\sqrt{\mathrm{Tr}(\Sigma_N^2)}\bigg\}(1+t+t^2)= \frac{2d\sigma^2\kappa^2 }{\lambda^2N}(1+t+t^2)
 \end{align*}
Therefore, with a probability of at least $1-\delta/2$, it holds that
\begin{equation*}
\begin{aligned}
\|\widetilde h_{\lambda,N}- \widehat{h}_{\lambda,N}\|_{\mathcal{H}_K} \leq \sqrt{ \frac{2d\sigma^2\kappa^2}{\lambda^2N}(1+t+t^2)}\leq \frac{\sigma \kappa}{\lambda }\sqrt{\frac{2d}{N}}\left(1+\sqrt{\frac{1}{c}\log(4/\delta)}\right).
\end{aligned} 
\end{equation*}

\section{Online regression with kernels}\label{online learning}
Suppose that we have observed $N$ noisy data of the Hamiltonian vector field. Then, according to equation \eqref{rep-ker}, we can compute the structure-preserving kernel estimator as $ \widehat{h}_{\lambda,N}=\widehat{\mathbf{c}}_N\cdot \nabla_1K(\mathbf{Z}_N,\cdot)$, where
\begin{align}
\label{recursion_def}
\widehat{\mathbf{c}}_N= (\nabla_{1,2}K(\mathbf{Z}_N,\mathbf{Z}_N)+\lambda NI)^{-1}\mathbb{J}^{\top}\mathbf{X}_{\sigma^2,N}=:\mathbf{K}_N^{-1}\mathbb{J}^{\top}\mathbf{X}_{\sigma^2,N}.  
\end{align}
Suppose we now observe one more data point $(\mathbf{Z},\mathbf{X})$. the objective is then to derive a recursive expression for $\mathbf{K}_{N+1}^{-1}$. It is clear that the computation of the inverse of ${\bf K}_{N+1}$ for each new data point is expensive. 

Alternatively, we note that 
\begin{align*}
\mathbf{K}_{N+1}=\left[\begin{array}{cc}
\mathbf{K}_{N}+\lambda  I & \mathbf{b}_N\\
\mathbf{b}_N^{\top} & \widetilde{\mathbf{A}}
\end{array}\right],   
\end{align*}
where $\mathbf{b}^{\top}_N=\left[\nabla_{1,2}K(\mathbf{Z}^{(1)},\mathbf{Z})| \cdots | \nabla_{1,2}K(\mathbf{Z}^{(n)},\mathbf{Z})\right]$ and the matrix $\widetilde{\mathbf{A}}=\nabla_{1,2}K(\mathbf{Z},\mathbf{Z})+\lambda(N+1)I$.
Then by \cite[Theorem 2.1]{lu2002inverses}, we obtain
\begin{align}
\label{online update bad}
\mathbf{K}_{N+1}^{-1}&=\left[\begin{array}{cc}
\widetilde{\mathbf{K}}_N^{-1}+\widetilde{\mathbf{K}}_N^{-1}\mathbf{b}_N\widetilde{\mathbf{D}}_N^{-1}\mathbf{b}_N^{\top}\widetilde{\mathbf{K}}_N^{-1} & -\widetilde{\mathbf{K}}_N^{-1}\mathbf{b}_N\widetilde{\mathbf{D}}_N^{-1} \\
-\widetilde{\mathbf{D}}_N^{-1}\mathbf{b}_N^{\top}\widetilde{\mathbf{K}}_N^{-1} & \widetilde{\mathbf{D}}_N^{-1}
\end{array}\right], 
\end{align}
where $\widetilde{\mathbf{D}}_N=\widetilde{\mathbf{A}}-\mathbf{b}_N^{\top}\widetilde{\mathbf{K}}_{N}^{-1}\mathbf{b}_N$, and $\widetilde{\mathbf{K}}_N^{-1}=(\mathbf{K}_N+\lambda I)^{-1}.$ In general, it will be expensive to compute $\widetilde{\mathbf{K}}_N^{-1}$ for each data update. One way to solve this problem is dynamically updating the ridge regression constant $\lambda $ as the sample size $N$ grows; this implicitly means that for each sample size, we are solving a different kernel ridge regression problem, but in exchange, this allows the formulation of an online updating rule that is much more convenient than \eqref{online update bad}.

Indeed, let $C>0$ be a constant and let $\lambda(N)>0  $ be given by the relation $\lambda(N) N=C $, for any $N \in \mathbb{N}  $. With this prescription, the solutions given by \eqref{recursion_def} and where $\lambda $ is replaced by $\lambda(N)$ can be recursively obtained by using the update rule 
\begin{align}
\label{online update good}
\mathbf{K}_{N+1}^{-1}&=\left[\begin{array}{cc}
\mathbf{K}_{N}^{-1}+\mathbf{K}_{N}^{-1}\mathbf{b}_N\mathbf{D}_N^{-1}\mathbf{b}_N^{\top}\mathbf{K}_{N}^{-1} & -\mathbf{K}_{N}^{-1}\mathbf{b}_N\mathbf{D}_N^{-1} \\
-\mathbf{D}_N^{-1}\mathbf{b}_N^{\top}\mathbf{K}_{N}^{-1} & \mathbf{D}_N^{-1}
\end{array}\right], 
\end{align}
where $\mathbf{D}_N=\mathbf{A}-\mathbf{b}_N^{\top}\mathbf{K}_{N}^{-1}\mathbf{b}_N$ and the matrix $\mathbf{A}=\nabla_{1,2}K(\mathbf{Z},\mathbf{Z})+CI$. In this way, by updating $\mathbf{K}_{N}^{-1}$ at each iteration, we avoid recomputing the inverse of the possibly very large matrix ${\bf K}_N$, and we hence achieve a computationally cheap online update. A possible choice of constant $C$ is given by Theorem \ref{Brd-Gau} that suggests that if we take $\lambda(N)  $ such that $\lambda(N) N=\sigma^2$ then the online updates \eqref{online update good} of the kernel ridge regression solution \eqref{recursion_def} will also provide an expression for the mean of the Gaussian posterior.

\nomenclature{$d$}{Dimension of the configuration space}
\nomenclature{$N$}{Number of random samples}
\nomenclature{$M$}{$M=2dN$}
\nomenclature{$H:\mathbb{R}^{2d}\longrightarrow\mathbb{R}$}{Hamiltonian function}
\nomenclature{$\mathbf{X}_{\sigma^2,N}$}{Random sample of (noisy) Hamiltonian vector field values}
\nomenclature{$\mathbf{Z}_N $}{Random sample of phase space values}
\nomenclature{$X _h $}{Hamiltonian vector field associated to $h:\mathbb{R}^{2d}\longrightarrow\mathbb{R}$}
\nomenclature{$K:{\cal X} \times {\cal X}\longrightarrow \mathbb{R}$}{Mercer kernel}
\nomenclature{$K(x,\cdot)=K _x$}{Kernel section associated to $x \in {\cal X} $}
\nomenclature{$C_b^s(\mathbb{R}^d)$}{Bounded $s$-continuously differentiable functions with bounded derivatives}
\nomenclature{${\mathcal H}_K $}{Reproducing kernel Hilbert space (RKHS) associated to the Mercer kernel $K$}
\nomenclature{$\widehat{R}_{\lambda,N}$}{Regularized empirical risk}
\nomenclature{$\widetilde{R}_{\lambda,N}$}{Noiseless regularized empirical risk}
\nomenclature{$ \widehat{h}_{\lambda,N}$}{Structure-preserving kernel estimator}
\nomenclature{$R_{\lambda}$}{Regularized statistical risk}
\nomenclature{$h^{*}_{\lambda}\in\mathcal{H}_K$}{Best-in-class Hamiltonian estimator} 
\nomenclature{$\widetilde{h}_{\lambda,N}$}{Noise-free structure-preserving kernel estimator}

\footnotesize
\addcontentsline{toc}{section}{Glossary of symbols}
\printnomenclature[15em]

\normalfont
\addcontentsline{toc}{section}{Acknowledgments}
\section*{Acknowledgments}
The authors thank Lyudmila Grigoryeva for helpful discussions and remarks and two referees whose suggestions have significantly improved the paper. We acknowledge partial financial support from the School of Physical and Mathematical Sciences of the Nanyang Technological University. DY is funded by the Nanyang President's Graduate Scholarship of Nanyang Technological University.

\footnotesize
\addcontentsline{toc}{section}{References}
\bibliographystyle{wmaainf}
\bibliography{Refs}
\end{document}